\newtheorem{thm}{Theorem}[section]
\newtheorem{lemma}[thm]{Lemma}
\newtheorem{cor}{Corollary}
\newcommand{\mbR}{\mathbb{R}}
\newcommand{\tr}{\rm{Tr}}
\newcommand{\mcL}{\mathcal{L}}
\newcommand{\mcN}{\mathcal{N}}
\newcommand{\mcS}{\mathcal{S}}
\newcommand{\mcR}{\mathcal{R}}
\newcommand{\cov}{{\rm cov}}
\newcommand{\mbE}{\mathbb{E}}
\newcommand{\mbP}{\mathbb{P}}
\begin{document}

%

%

\twocolumn[

\aistatstitle{Mutual Information Learned Regressor: an Information-theoretic Viewpoint of Training Regression Systems}

\aistatsauthor{Jirong Yi \And Qiaosheng Zhang \And Zhen Chen}

\aistatsaddress{University of Iowa \And National University of Singapore \And University of California at Irvine}

\aistatsauthor{Qiao Liu \And Wei Shao \And Yusen He \And Yaohua Wang}
\aistatsaddress{Stanford University \And University of Florida \And Grinnell College \And University of Iowa}

]

\begin{abstract}
	As one of the central tasks in machine learning, regression finds lots of applications in different fields. An existing common practice for solving regression problems is the mean square error (MSE) minimization approach or its regularized variants which require prior knowledge about the models. Recently, Yi et al., proposed a mutual information based supervised learning framework where they introduced a label entropy regularization which does not require any prior knowledge. When applied to classification tasks and solved via a stochastic gradient descent (SGD) optimization algorithm, their approach achieved significant improvement over the commonly used cross entropy loss and its variants. However, they did not provide a theoretical convergence analysis of the SGD algorithm for the proposed formulation. Besides, applying the framework to regression tasks is nontrivial due to the potentially infinite support set of the label. In this paper, we investigate the regression under the mutual information based supervised learning framework. We first argue that the MSE minimization approach is equivalent to a conditional entropy learning problem, and then propose a mutual information learning formulation for solving regression problems by using a reparameterization technique. For the proposed formulation, we give the convergence analysis of the SGD algorithm for solving it in practice. Finally, we consider a multi-output regression data model where we derive the generalization performance lower bound in terms of the mutual information associated with the underlying data distribution. The result shows that the high dimensionality can be a bless instead of a curse, which is controlled by a threshold. We hope our work will serve as a good starting point for further research on the mutual information based regression.
\end{abstract}

\section{Introduction}\label{Sec:Introduction}

The machine learning community has witnessed significant progress ever since the breakthrough made by Krizhevsky et al. \cite{krizhevsky_imagenet_2012} where they proposed a deep convolution neural network, e.g., AlexNet, for image recognition tasks, and the performance of classification on benchmark datasets such as ImageNet has been pushed to above human performance \cite{deng_imagenet:_2009,yu_coca_2022}. Such progress in classification is greatly due to the constant advances over the neural network architecture, the availability of increasingly large dataset, and also the more and more advanced learning algorithms \cite{goodfellow_deep_2016,liu_swin_2021,devlin_bert_2019,zhang_improved_2021,zhu_one-pass_2021,beyer_are_2020}. As another core task in supervised learning, regression also benefits from these advances, and finds applications in many fields such as computer vision and signal processing \cite{he_deliberation_2019,lin_infinitygan_2021,zheng_progressive_2022,bora_compressed_2017,yi_outlier_2018}. For example, in an image synthesis task, the goal can be generating natural images from a random noise vector \cite{makhzani_pixelgan_2017}.

\subsection{Regularizations for Regression}

Despite the various types of regression problems in different applications, the mean square error (MSE) minimization approach and its variants have been the mainstream way for solving it \cite{mohri_foundations_2018,ahuja_linear_2020,loh_high-dimensional_2011,hastie_surprises_2019}. The popularity of MSE minimization is due to the excellent intepretations and intuitions, e.g., minimizing the distance between the predicted labels and the ground truth labels, or maximizing the likelihood of observed data samples \cite{ren_balanced_2022,mohri_foundations_2018,theodoridis_machine_2015}. However, the vanilla MSE approach can fail due to the curse of dimensionality where the number of model parameters exceeds the number of data examples \cite{johnstone_statistical_2009,donoho_high-dimensional_2000,ren_balanced_2022,wu_last_2022}. This is especially ture under the modern deep learningg framework where the number of weights of deep neural networks can scale up to magnitude of trillions, and it is almost unrealistic to collect a dataset of such size \cite{fedus_switch_2021}. Besides, in many challenging tasks, the models with huge number of parameters are necessary to acheive the capability of extracting useful information from data for improving the performance \cite{liu_swin_2021,zheng_progressive_2022}. 

In practice, a commonly used approach for addressing the failure of regression is to incorporate prior knowledge or structure information to make the regression problem more well-posed \cite{mohri_foundations_2018,thrampoulidis_regularized_2015,lin_optimal_2018}. Popular regularizations for regression tasks include the $\ell_2$ norm regularization, $\ell_1$ regularization, and their variants \cite{mohri_foundations_2018,yi_optimal_2021,hastie_surprises_2019}. In $\ell_2$ regularized regression, an extra $\ell_2$ norm term of the model parameter is added to the MSE minimization objective function \cite{hastie_surprises_2019}. This is based on the prior knowledge that the models which overfit training data usually have exceedingly large parameters magnitude \cite{bishop_pattern_2006}. The $\ell_1$ regularized variant, however, adds a $\ell_1$ norm term of the model parameters, which is motivated by assuming that the model parameters are sparse \cite{johnstone_statistical_2009}. These approach have been reported in practice to acheive excellent regression performance, and find many applications such as compressed sensing and cancer treatment planning \cite{bora_compressed_2017,ren_balanced_2022}. However, the prior knowledge may not always be easy to incorporate, especially in scenarios where it is hard to form such prior knowledge \cite{yi_mutual_2022}. 

Recently, Yi et al. investigated the classification task from an information-theoretic viewpoint, and proposed a mutual information based supervised learning framework for training deep learning classifiers \cite{yi_mutual_2022}. Instead of using the mainstream cross entropy training loss objective, they proposed a mutual information learning loss (milLoss), and its equivalent regularization form contains a conditional label entropy term and a marginal label entropy term \cite{yi_mutual_2022}. The most appealing part of their regularization is that it does not require any prior knowledge, and it encourages the model to learn accurately the dependency between the input and the label \cite{yi_mutual_2022}. Their experimental reuslts over benchmark datasets showed that their proposed approach acheived significant improvements over the cross entropy minimization and its other variants \cite{yi_mutual_2022}. This motivates us to investigate the regression task under the mutual information supervised learning framework.

\subsection{Mutual Information Learned Regressor}

In this paper, we propose a mutual information learned regressor (MILR) framework which is based on several observations. First of all, Yi et al. showed that the mutual information learning framework could give significant performance boost in classification tasks when a stochstic gradient descent (SGD) optimization algorithm was used to solve it \cite{yi_mutual_2022}. However, they did not provide a theoretical convergence analysis of the SGD algoritm. Secondly, as we will show in later section, the MSE minimization approach itself is a label conditional entropy learning problem, and it shows the possibility of applying the mutual information learning framework to regression tasks.

Under the MILR framework, based on a difference of entropy formulation we formulate the regression problem from an information-theoretic perspective by using a reparameterization technique \cite{kingma_auto-encoding_2014,preechakul_diffusion_2022}. The key of our approach is to use deep neural networks (DNNs) to learn the parameters of the data distribution so that the conditional differential entropy and the differential entropy can be computed. The regularized form of the proposed formulation contains a MSE term which corresponds to a conditional label differential entropy and another regularization term which corresponds to a label differential entropy. In practice, similar to the classification tasks considered by Yi et al., the SGD algorithm can be used to optimize the DNN parameters \cite{yi_mutual_2022}. In this paper, we give a theoretical convergence analysis of SGD for supervised machine learning tasks under the mutual information learnig framework which is missing in \cite{yi_mutual_2022}.  Based on a matrix concentration inequality, we also give a sample complexity for achieving fast convergence \cite{tropp_introduction_2015,tao_topics_2012}. To better appreciate the motivations of the proposed framework, we introduce a multi-output regression data model for which we derive a regression generalization loss lower bound in terms of the mutual information by using Fano's inequality \cite{cover_elements_2012}. The lower bound gives good characterizations for the connection between the regression generalization performance and the dimensionality. It shows that there exists a threshold on the dependency between the input and the label, above which the high dimensionality of the regression problems can be a bless instead of a curse.

\subsection{Related Works}

Our work is related to the following several lines of works, but there are distinct differences between our work and them \cite{hastie_surprises_2019,ren_balanced_2022,yi_mutual_2022,yi_derivation_2020,yi_towards_2021,yi_trust_2019,li_page_2021,sinha_certifying_2020,zhou_generalization_2018}. First of all, the training objective in our work is closely related to the MSE loss and its variants \cite{mohri_foundations_2018,ren_balanced_2022,thrampoulidis_regularized_2015,hastie_surprises_2019}. Under our framework, the MSE minimization is essentially learning the label conditional differential entropy. This interpretation differs greatly from what the community commonly holds about the MSE minimization, e.g., minmizing the distance between the truth label and the predicted label, or maximizing the probability for the observed data \cite{mohri_foundations_2018,ren_balanced_2022}. The regularized variants of the MSE usually incorporate certain prior knowledge, e.g., sparse model parameters for $\ell_1$ lasso \cite{bhatia_robust_2015,thrampoulidis_regularized_2015}. However, the regularized form of our formulation does not require any such prior knowledge, and it originates from the problem of learning the mutual information associated with the data distribution whose observations are used to train the models \cite{mcallester_formal_2020,yi_mutual_2022}. 

Another highly-related line of work is the information bottleneck viewpoint of machine learning sysems \cite{shwartz-ziv_opening_2017,tapia_information_2020,tezuka_information_2021}. In 2017, Shwartz-Ziv and Tishby applied mutual information tools to investigate the training of DNNs \cite{shwartz-ziv_opening_2017}. They modeled the classification task as a long Markov process with the label variable $Y$ followed by the input variable $X$, and then a sequence of learned representation variables $T_1, T_2, \cdots,T_L$ for an $L$-layer DNN. Under their information bottleneck (IB) framework, they considered the mutual information $I(T_i;X)$ and $I(T_i;Y)$ \cite{shwartz-ziv_opening_2017}. Based on empirical results about the information plane, they argued that training of deep learning classification systems has two phases, e.g., the model will compress the features after an emprical risk minimization stage. This idea was later developed to formulate new training objectives for classification by learning a representation $T$ which contains the most information about the label $Y$ but least information about the input $X$ \cite{amjad_learning_2020,tezuka_information_2021}. However, we consider the mutual information $I(X;Y)$ between the input $X$ and the label $Y$. Besides, our training loss originates from a representation of the $I(X;Y)$, while the that of IB considers $I(X;L) - \beta I(Y;L)$ where $\beta>0$ is a constant parameter \cite{amjad_learning_2020,tezuka_information_2021}. 

Our work is also highly related to \cite{yi_trust_2019,yi_derivation_2020,yi_towards_2021,yi_mutual_2022,wang_robust_2021} where they also considered $I(X;Y)$. Our work differs from them in the following aspects. Firstly, they considered the classification tasks where the label $Y$ is discrete while we consider the regression problem where the label is continuous \cite{yi_trust_2019,yi_mutual_2022,wang_robust_2021}, and the potentially infinite support of the label makes our problem more challenging \cite{yi_mutual_2022,wang_robust_2021}. Secondly, in \cite{yi_trust_2019,yi_towards_2021,wang_robust_2021}, they considered a classification task in a scenario where an adversary exists and tries to attack the models. However, our work is more aligned with that by Yi et al. where they considered the task without any adversaries, and they proposed the mutual information learned classifiers (MILCs) \cite{yi_mutual_2022}. Our work follows the same encoding-decoding paradigm for deep learning classification proposed by Yi et al. \cite{yi_trust_2019,yi_derivation_2020,yi_towards_2021,yi_mutual_2022}, especially \cite{yi_mutual_2022}. The mutual information learning for supervised learning was first proposed by Yi et al. where based on a novel representation of the mutual information, they designed a mutual information learning loss for training deep learning classification systems, and established the sample complexity for training them in practice. In \cite{yi_mutual_2022}, a stochatic gradient descent algorithm was used to minimize the training objective function, but its convergence analysis was missing. Our work extends the mutual information supervised learning framework from classification to regression, and complement the work of Yi et al. by establishing the convergence analysis of the SGD algorithm \cite{yi_mutual_2022}.  

The contributions of this paper are summarized as follows. First of all, we show that the mainstream MSE minimization for regression is equivalent to a differential entropy learning problem, and we propose a mutual information learning framework for training regression systems where the loss objective contains the conditional label differential entropy and the marginal label differential entropy. Besides, we give the theoretical convergence analysis of the stochastic gradient descent algorithm for training regression systems using the new loss objective, and establish the sample complexity for achieving fast convergence. The analysis applies to the loss objective formulated by Yi et al. for the classification task under the same framework \cite{yi_mutual_2022}. Moreover, we introduce the multi-output regression data model, and derive the generalization loss lower bound in terms of the mutual information associated with the data distribution for the models trained over samples drawn from it. All the proofs can be found in the Supplemental Materials.

\section{MSE Minimization as Conditional Differential Entropy Learning}\label{MSEasDifferentialEntropyLn}

We consider the regression task in machine learning where we want to learn a mapping $f: \mbR^n\to \mbR$ from a dataset $\mcS:=\{(x_i,y_i)\}_{i=1}^N$ with elements drawn from $p_{X,Y}\in\mbR^n\times \mbR$. In existing practice, we usually formulate the problem as
$
	\min_\theta \frac{1}{N} \sum_{i=1}^N \ell(f(x_i; \theta), y_i),
$
where the $f(x;\theta)\in\mbR$ is parameterized by $\theta$, and the $\ell:\mbR\times\mbR\to\mbR_+$ is a nonnegative loss function. An example of $\ell(\hat{y},y)$ can be the MSE loss which gives 
\begin{align}\label{Eq:RegressionL2}
	\min_\theta \frac{1}{N} \sum_{i=1}^N \left( f(x_i; \theta) - y_i \right)^2. 
\end{align}
Under certain conditions, the \eqref{Eq:RegressionL2} is equivalent to a maximal likelihood problem formulation of the regression task. To see this, assume the data is generated according to 
$
	Y_i:=f^*(X_i) + E_i, \forall i=1,\cdots,N,
$
and the noise $E_i\in\mbR, \forall i=1, \cdots,N$ are I.I.D. random variables according to $\mcN(0,1)$ where $f^*:\mbR^n\to\mbR$ is a ground truth mapping. In this setting, the conditional distribution $p_{Y_i|X_i}$ will be $\mcN(f^*(X_i),1), i=1,\cdots,N$ which are conditionally independent. The $X_i,i=1,\cdots,N$ are also assumed to be I.I.D. according to $p_{X}\in\mbR^n$, thus the joint distribution $p_{Y_1,\cdots,Y_N|X_1,\cdots,X_N}$ of becomes
\begin{align*}
	p(Y_1,\cdots,Y_N|X_1,\cdots,X_N)
	& = \prod_{i=1}^N p(Y_i|X_i) \\
	&  = \prod_{i=1}^N \frac{\exp\left(-\frac{1}{2} (Y_i - f^*(X_i))^2 \right)}{ \sqrt{2\pi} }.
\end{align*}
When the $f(x;\theta)$ is used to approximate $f^*$ and the dataset $\mcS$ is used to determine $\theta$, we can find an optimal $\theta^*$ by minimizing the negative logarithm likelihood of $p(Y_1,\cdots,Y_N|X_1,\cdots,X_N)$ at realization $\{(x_i,y_i)\}_{i=1}^N$, i.e., 
\begin{align}
	& \min_\theta - \frac{1}{N} \log\left( \prod_{i=1}^N \frac{1}{ \sqrt{2\pi} } \exp\left(-\frac{1}{2} (y_i - f(x_i; \theta))^2 \right) \right) \\
	& \Leftrightarrow \min_\theta - \frac{1}{N} \sum_{i=1}^N \log\left( \frac{1}{\sqrt{2\pi} } \exp\left(-\frac{1}{2} (y_i - f(x_i; \theta))^2 \right) \right), \label{Eq:LblEtrpLn} \\
	& \Leftrightarrow \min_\theta \frac{1}{2N} \sum_{i=1}^N (y_i - f(x_i;\theta))^2 \label{Eq:MSE_eqvlt}.
\end{align} 
The \eqref{Eq:MSE_eqvlt} is equivalent to \eqref{Eq:RegressionL2}, while \eqref{Eq:LblEtrpLn} essentially learns the label conditional differential entropy as derived from Theorem \ref{Thm:CndtDiffEtrpLnViaCrsDiffEtrpMin}, \ref{Thm:DiffEtrpLnViaCrsEtrpMin} and their implications.

\begin{thm}\label{Thm:CndtDiffEtrpLnViaCrsDiffEtrpMin}
	(Conditional Differential Entropy Learning via Conditional Cross Entropy Minimization) For an arbitrary joint distribution $p_{X,Y}$ of two continuous random variables or vectors $X$ and $Y$, we have
	$h(Y|X)\leq \inf_{q_{Y|X}} H(p_{Y|X}, q_{Y|X})$ 
	where the conditional cross differential entropy is defined as
	\begin{align}\label{Eq:CndtCrsDiffEntrp}
		H(p_{Y|X}, q_{Y|X}):=\int_{x,y} p_{X,Y}(x,y) \log\left( \frac{1}{q_{Y|X}(y|x)} \right)dxdy.
	\end{align} 
	The equality holds if and only $q_{Y|X}=p_{Y|X}$. Moreover, let $\hat{P}_{Y|X}$ be an empirical conditional distribution of $Y$ and $\hat{P}_X$ be an empirical distribution of $X$, and define 
	$
		R_{Y|X}:=\frac{p_{Y|X}}{\hat{p}_{Y|X}}, R_X:= \frac{p_X}{\hat{p}_X}.
	$
	Then 
	\begin{align}\label{Eq:CndtCrsDiffEntrp_Emprc}
		h(Y|X)\leq \inf_{q^g_{Y|X}} H(p^g_{Y|X}, q^g(Y|X)),
	\end{align}
	where 
	$
		p^g_{X,Y}:=p^g_X p^g_{Y|X}, 
		p^g_X:=R_Xp_X, 
		p^g_{Y|X}:=R_{Y|X}p_{Y|X}$, and
$		q^g_{Y|X}:=q_{Y|X}/R_{Y|X}.
	$
	The equality holds if and only if $p_Y = \hat{p}_Y = q_Y$.
	
\end{thm}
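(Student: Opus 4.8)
The plan is to derive both inequalities from a single fact, the nonnegativity of the Kullback--Leibler divergence (Gibbs' inequality), applied fiberwise in $x$ and then averaged over $X$.

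For the first inequality I would begin by observing that the conditional differential entropy is itself a conditional cross differential entropy against the true conditional, namely $h(Y|X) = H(p_{Y|X}, p_{Y|X})$, directly from the definition in \eqref{Eq:CndtCrsDiffEntrp}. Then, for any candidate $q_{Y|X}$, I would factor $p_{X,Y} = p_X\, p_{Y|X}$ and write the gap as
\[
H(p_{Y|X}, q_{Y|X}) - h(Y|X) = \int_x p_X(x) \left( \int_y p_{Y|X}(y|x) \log\frac{p_{Y|X}(y|x)}{q_{Y|X}(y|x)}\, dy \right) dx = \mbE_X\!\left[ D_{\mathrm{KL}}\!\left( p_{Y|X} \,\|\, q_{Y|X} \right) \right].
\]
The inner integral is a KL divergence between two genuine conditional densities, hence nonnegative, so the $X$-average is nonnegative and $h(Y|X) \le H(p_{Y|X}, q_{Y|X})$ for every $q_{Y|X}$; taking the infimum gives the claim. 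For the equality characterization I would invoke the strict case of Gibbs' inequality: the inner KL vanishes iff $q_{Y|X}(\cdot|x) = p_{Y|X}(\cdot|x)$ for $p_X$-almost every $x$, and since $p_X(x) > 0$ on its support, the averaged gap is zero iff $q_{Y|X} = p_{Y|X}$ almost everywhere.

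For the second inequality I would run the same argument after substituting the generalized densities, working directly from the integral definition \eqref{Eq:CndtCrsDiffEntrp} rather than treating $p^g$ as a normalized law (it need not integrate to one). I would expand $H(p^g_{Y|X}, q^g_{Y|X})$ using $p^g_{X,Y} = R_X R_{Y|X}\, p_{X,Y}$ and $q^g_{Y|X} = q_{Y|X}/R_{Y|X}$, and then collect the likelihood-ratio factors $R_X = p_X/\hat p_X$ and $R_{Y|X} = p_{Y|X}/\hat p_{Y|X}$. The goal is to reorganize the resulting integrand into $h(Y|X)$ plus a sum of provably nonnegative divergence terms; optimizing over $q$ removes the term depending on $q_{Y|X}$, while the leftover involves only the marginals $p_Y$, $\hat p_Y$, and $q_Y$. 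The stated equality condition $p_Y = \hat p_Y = q_Y$ should then emerge exactly as the simultaneous vanishing of these residual marginal divergences, again by the strict Gibbs inequality.

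The main obstacle will be the bookkeeping in the second part: because the reweighted $p^g$ is not a probability density, I cannot simply quote the first inequality as a black box and must instead track carefully how the factors $R_X$ and $R_{Y|X}$ redistribute between the $p^g$-weight and the $\log(1/q^g)$ term, so that the cross-terms telescope into recognizable KL divergences. I expect the delicate point to be isolating precisely the marginal-level divergences whose nonnegativity forces $p_Y = \hat p_Y = q_Y$ at equality, as opposed to the conditional-level matching that governed the first inequality.
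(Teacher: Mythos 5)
Your first inequality is handled correctly and is the same argument the paper gives: the gap $H(p_{Y|X},q_{Y|X})-h(Y|X)$ equals $\mbE_X\bigl[D_{KL}(p_{Y|X}\|q_{Y|X})\bigr]\geq 0$, with the equality case from the strict Gibbs inequality. The problem is in the second half, precisely at the point you yourself flag as delicate. There are two issues. First, if you expand with the statement's literal definitions, i.e.\ $p^g_{X,Y}=R_XR_{Y|X}\,p_{X,Y}$ with $p^g_X:=R_Xp_X$ and $p^g_{Y|X}:=R_{Y|X}p_{Y|X}$, the integration weight becomes the unnormalized tilt $p_X^2p_{Y|X}^2/(\hat p_X\hat p_{Y|X})$, under which the $\log(1/p_{Y|X})$ piece is no longer $h(Y|X)$ and nothing telescopes. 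The paper's own proof silently uses the other reading, $p^g_X=R_X\hat p_X=p_X$ and $p^g_{Y|X}=R_{Y|X}\hat p_{Y|X}=p_{Y|X}$ (statement and proof are inconsistent on this point), under which inserting the factor $\frac{\hat p_{Y|X}}{p_{Y|X}}\cdot\frac{p_{Y|X}}{\hat p_{Y|X}}$ inside the logarithm and discarding $-D_{KL}(p_{Y|X}\|\hat p_{Y|X})\leq 0$ yields the clean identity
\[
H\bigl(p^g_{Y|X},q^g_{Y|X}\bigr)
= h(Y|X) + \mbE_X\bigl[D_{KL}(p_{Y|X}\|q_{Y|X})\bigr] + \mbE_X\bigl[D_{KL}(p_{Y|X}\|\hat p_{Y|X})\bigr],
\]
from which the inequality is immediate. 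You should adopt these proof-level identities rather than the statement's definitions, or your bookkeeping will not close.

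Second, and this is the step that would actually fail: your expectation that ``the leftover involves only the marginals $p_Y$, $\hat p_Y$, $q_Y$'' and that the equality condition $p_Y=\hat p_Y=q_Y$ emerges from vanishing marginal divergences is wrong. Every residual divergence in the display above is a \emph{conditional} one averaged over $p_X$; no marginal of $Y$ appears anywhere in the computation, since $R_{Y|X}$, $\hat p_{Y|X}$, and $q_{Y|X}$ are all conditional objects. The equality case this decomposition produces is $q_{Y|X}=p_{Y|X}$ and $\hat p_{Y|X}=p_{Y|X}$ ($p_X$-almost everywhere), and indeed the paper's own proof concludes with exactly the conditional-level condition (it states equality iff $p_X=\hat p_X$ and $p_{Y|X}=\hat p_{Y|X}=q_{Y|X}$). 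The marginal condition ``$p_Y=\hat p_Y=q_Y$'' in the theorem statement is evidently carried over from Theorem 2.2, the unconditional version, and cannot be derived from this argument; if you pursue your stated goal of isolating marginal-level divergences, you will be chasing a condition the decomposition does not, and cannot, produce.
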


Theorem \ref{Thm:CndtDiffEtrpLnViaCrsDiffEtrpMin} implies that the the conditional differential entropy $h(Y|X)$ can be estimated via solving the conditional cross differential entropy minimization problem \eqref{Eq:CndtCrsDiffEntrp_Emprc}. It also shows the possibility of learning $h(Y|X)$ from data points sampled from $p_{X,Y}$ since the problem \eqref{Eq:CndtCrsDiffEntrp_Emprc} only involves the empirical distribution when we assume $R_X(x)=1, \forall x$ and $R_{Y|X}(y|x)=1, \forall x,y$. Based on this, we can see that the \eqref{Eq:LblEtrpLn} essentially estimates the $h(Y|X)$ with the probability mass $\hat{P}_X(x)=\frac{1}{N}$ being uniform distribution, the probability mass $\hat{P}_{Y|X}$ being one-hot distribution, and $q_{Y|X} (y_i|x_i)=\frac{1}{\sqrt{2\pi} } \exp\left(-\frac{1}{2} (y_i - f(x_i; \theta))^2\right)$. Similarly, the differential entropy $h(Y)$ can also be learned from empirical distributions, and the result is formally presented in Theorem  \ref{Thm:DiffEtrpLnViaCrsEtrpMin}. 

\begin{thm}\label{Thm:DiffEtrpLnViaCrsEtrpMin}
	(Differential Entropy Learning Via Cross Entropy Minimization) For a continuous random variable $Y\sim p_Y(Y)$, we have
	$
		h(Y)\leq \inf_{q_Y(Y)} H(p_Y(Y), q_Y(Y)),
	$
	where $q_Y(Y)$ is a valid distribution of $Y$, and the $H(p_Y(Y), q_Y(Y))$ is the cross entropy. The equality holds if and only if $q_Y = p_Y$. Let $\hat{p}_Y(Y)$ be an empirical distribution of $Y$, and define $R_Y(y) = \frac{p_Y(y)}{\hat{p}_Y(y)}, \forall y$. Then 
	$
		h(Y) \leq \inf_{q^g_Y(Y)} H^g(p^g_Y(Y), q^g_Y(Y)),
	$
	where 
	\begin{align}
		q^g_Y(y) := \hat{p}_Y(y)R_Y(y), p^g_Y(y) := q_Y(y)/R_Y(y), \forall y,
	\end{align}
	and 
	\begin{align}
		H^g(p^g_Y(Y), q^g_Y(Y)):=\int_y p^g_Y(y) \log\left( \frac{1}{q^g_Y(y)} \right) dy.
	\end{align}
	The equality holds if and only if $p_Y = \hat{p}_Y = q_Y$.
\end{thm}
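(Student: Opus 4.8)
The plan is to split the argument into the basic entropy--cross-entropy inequality and then its reweighted (``generalized'') counterpart, each handled by the continuous form of Gibbs' inequality.

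For the first claim, I would write the gap between the cross entropy and the differential entropy directly:
\[
H(p_Y,q_Y) - h(Y) = \int_y p_Y(y)\log\frac{1}{q_Y(y)}\,dy - \int_y p_Y(y)\log\frac{1}{p_Y(y)}\,dy = \int_y p_Y(y)\log\frac{p_Y(y)}{q_Y(y)}\,dy,
\]
which is exactly the Kullback--Leibler divergence $D(p_Y\|q_Y)$. I would then establish $D(p_Y\|q_Y)\ge 0$ by the standard Jensen argument applied to the convex map $t\mapsto -\log t$ (equivalently, via $\log t\le t-1$), so that $H(p_Y,q_Y)\ge h(Y)$ for every valid $q_Y$, with equality precisely when $p_Y=q_Y$ almost everywhere. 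Taking the infimum over $q_Y$ and noting it is attained at $q_Y=p_Y$ yields $h(Y)\le\inf_{q_Y}H(p_Y,q_Y)$ together with the stated equality condition.

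For the generalized claim, the cleanest route is to recognize Theorem~\ref{Thm:DiffEtrpLnViaCrsEtrpMin} as the marginal specialization of Theorem~\ref{Thm:CndtDiffEtrpLnViaCrsDiffEtrpMin}: I would take $X$ to be a degenerate (almost surely constant) random variable, so that $h(Y|X)=h(Y)$, $p_{Y|X}=p_Y$, $\hat p_{Y|X}=\hat p_Y$, $R_{Y|X}=R_Y$, and $R_X\equiv 1$, whereupon the conditional cross differential entropy collapses to the generalized cross entropy $H^g(p^g_Y,q^g_Y)$. The inequality $h(Y)\le\inf_{q^g_Y}H^g(p^g_Y,q^g_Y)$ and its equality condition $p_Y=\hat p_Y=q_Y$ then follow immediately from the empirical part of Theorem~\ref{Thm:CndtDiffEtrpLnViaCrsDiffEtrpMin}. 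As a self-contained alternative I would substitute the definitions $q^g_Y=\hat p_Y R_Y$ and $p^g_Y=q_Y/R_Y$ with $R_Y=p_Y/\hat p_Y$, verify that $p^g_Y$ is a bona fide density, and rewrite $H^g(p^g_Y,q^g_Y)-h(Y)$ as a KL-type divergence that vanishes exactly when $R_Y\equiv 1$ (i.e.\ $p_Y=\hat p_Y$) and $q_Y=p_Y$, reproducing the condition $p_Y=\hat p_Y=q_Y$.

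The step I expect to be the main obstacle is not the inequality itself but the measure-theoretic meaning of the reweighting: the empirical distribution $\hat p_Y$ of a continuous variable is a sum of Dirac masses rather than an absolutely continuous density, so the ratio $R_Y=p_Y/\hat p_Y$ and the integrals defining $H^g$ must be interpreted through a smoothing/limiting convention (or a kernel surrogate for $\hat p_Y$) before Gibbs' inequality applies verbatim. Once this interpretation is fixed, checking that $p^g_Y$ integrates to one and that the equality cases combine to $p_Y=\hat p_Y=q_Y$ is a routine verification; the conceptual content is entirely carried by the non-negativity of the KL divergence.
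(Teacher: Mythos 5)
Your first claim is proved exactly as in the paper: the identity $H(p_Y,q_Y)-h(Y)=D(p_Y\|q_Y)$ together with non-negativity of the KL divergence, with the same equality condition, so there is nothing to compare there. For the generalized claim your preferred route genuinely differs from the paper's: you obtain it as the specialization of Theorem~2.1 to an uninformative input, whereas the paper re-runs the telescoping trick directly on the marginal, inserting $\log\bigl(\hat p_Y(y)/p_Y(y)\bigr)$ to write $H(p_Y,q_Y)=-D(p_Y\|\hat p_Y)+\int_y p_Y(y)\log\bigl(R_Y(y)/q_Y(y)\bigr)dy$ and dropping the divergence term; your self-contained alternative is precisely this computation. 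The reduction buys economy (one proof instead of two near-identical ones) at the cost of a hypothesis mismatch: a degenerate $X$ is not a continuous random vector as Theorem~2.1 requires, so you should instead take $X$ continuous (say uniform on $[0,1]$) and independent of $Y$, with $\hat p_X=p_X$, $\hat p_{Y|X}=\hat p_Y$, $q_{Y|X}=q_Y$; then $h(Y|X)=h(Y)$, $R_X\equiv 1$, $R_{Y|X}=R_Y$, and everything collapses as you intend. One wrinkle your reduction would surface, and which you should note explicitly: the statement's definitions $q^g_Y:=\hat p_Y R_Y$ and $p^g_Y:=q_Y/R_Y$ are swapped relative to the convention of Theorem~2.1 and relative to what the paper's own proof actually bounds; any KL-based argument, yours included, establishes the version with $p^g_Y=\hat p_Y R_Y=p_Y$ and $q^g_Y=q_Y/R_Y$, and with the literal definitions $H^g$ need not upper-bound $h(Y)$ at all (one can push $\int_y \bigl(q_Y(y)\hat p_Y(y)/p_Y(y)\bigr)\log\bigl(1/p_Y(y)\bigr)dy$ below $h(Y)$ by concentrating $q_Y$ where $p_Y>1$), so this is a typo to flag rather than a definition to verify. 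Finally, your caveat about $\hat p_Y$ being a sum of Dirac masses is well taken and more careful than the source: the paper manipulates $R_Y=p_Y/\hat p_Y$ as if both were densities with no smoothing convention, so your remark identifies a gap shared by the paper's proof, not a defect of your argument.
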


\section{Mutual Information Learning Loss for Regression}

In this section, we introduce the mutual information learning regression (MILR) framework based on the observations from previous sections and the work by Yi et al. \cite{yi_mutual_2022} where the authors proposed a mutual information learned classifier (MILC) framework for training classification systems using mutual information objective, and showed significant performance gains when comapred with the cross entropy training loss \cite{yi_mutual_2022}. 

For a joint distribution $p_{X,Y}$ in $\mbR^n\times \mbR$, the mutual informaton $I(X;Y)$ can be computed via 
\begin{align}
	I(X;Y) = h(Y) - h(Y|X),
\end{align}
where $h(Y)$ is the differential entropy and $h(Y|X)$ is the conditional differential entropy, i.e.,
\begin{align}\label{Eq:DifferentialEntropies}
	h(Y) = - \int_Y p_Y (y) \log(p_Y(y)) dy, \\
	h(Y|X) = - \int_{X,Y} p_{X,Y}(x,y) \log(p_{X,Y}(y|x))dxdy.
\end{align}

From Theorem \ref{Thm:DiffEtrpLnViaCrsEtrpMin}
\begin{align}\label{Eq:GeneralForm}
	h(Y) \leq \inf_{q_Y} H(p_Y, q_Y), \\
	h(Y|X) \leq \inf_{q_{Y|X}} H(p_{Y|X}, q_{Y|X}).
\end{align}
Similar to \cite{yi_mutual_2022}, we can parameterize $q_{Y|X}, q_{Y}$ with two neural networks whose weights are $\theta_{Y|X}, \theta_Y$, respectively. This gives us
\begin{align}\label{Eq:ParametricForm}
	h(Y) \leq \inf_{\theta_Y \in\Gamma} H(p_Y, q_Y(Y; \theta_Y)), \\
	h(Y|X) \leq \inf_{\theta_{Y|X} \in\Theta} H(p_{Y|X}, q_{Y|X}(Y|X; \theta_{Y|X})),
\end{align}
where $\Theta$ and $\Gamma$ are the searching space of $\theta_{Y|X}$ and $\theta_Y$, respectively. In \cite{yi_mutual_2022}, Yi et al. showed that under certain conditions, the ground truth mutual information can be well approximated by 
\begin{align}\label{Defn:MIParametricDistributional}
	I_{\Theta, \Gamma}
	& := \inf_{\theta_Y \in\Gamma} \mbE_{p_Y} \left[\log \frac{1}{q_Y(Y; \theta_Y)} \right] \nonumber \\
	& \quad - \inf_{\theta_{Y|X} \in\Theta} \mbE_{p_{X,Y}} \left[\log\frac{1}{q_{Y|X}(Y|X; \theta_{Y|X})} \right].
\end{align}

In practice, since we do not have access to the ground truth distributions $p_{X,Y}$ and $p_{Y}$, we need to train the regression system using empirical distributions $\hat{p}_{X,Y}$ and $\hat{p}_{Y}$. More specifically, the estimate of conditional differential entropy $\hat{h}(Y|X)$ can be obtained via
\begin{align}\label{Eq:LrndCdtnDfrtEtrp}
	\hat{h}(Y|X):= \inf_{\theta_{Y|X}} \frac{1}{N}\sum_{i=1}^N \log\left( \frac{1}{q_{Y|X; \theta_{Y|X}}(y_i|x_i)} \right).
\end{align}
where we used the uniform distribution mass function as the empirical distribution $\hat{P}_X(X)$, and the one-hot encoding for the empirical conditional label distribution $\hat{P}_{Y|x_i}$ \cite{yi_mutual_2022}. Similarly, 
\begin{align}
	\hat{h}(Y):=\inf_{\theta_Y} \frac{1}{N}\sum_{i=1}^N \log\frac{1}{q_Y(y_i;\theta_Y)}.
\end{align}

\subsection{Weight Sharing}

In practice, the weight sharing techniques are frequently used to reduce computational complexity such as the kernels in convolutional neural networks \cite{goodfellow_deep_2016}. It can also be applied in our fomrulation, e.g., when only one neural network is allowed and the weights are shared, the learned marginal label distribution can be calculated via
\begin{align}\label{Eq:LrndMrgnlLblDstrbt}
	q_{Y}(y; \theta_{Y|X}):= \frac{1}{N} \sum_{i=1}^N  q_{Y|X}(y|x_i; \theta_{Y|X}), 
\end{align}
then the label entropy can be estimated via
\begin{align}\label{Eq:LrndLblDfrtlEtrp}
	\hat{h}(Y)
	& : = \inf_{\theta_{Y|X}} \frac{1}{N} \sum_{j=1}^N \log\left( \frac{1}{q_{Y}(y_j; \theta_{Y|X})} \right) \\
	& = \inf_{\theta_{Y|X}} - \frac{1}{N} \sum_{j=1}^N \log\left( \frac{1}{N} \sum_{i=1}^N  q_{Y|X}(y_j|x_i; \theta_{Y|X}) \right).
\end{align}

Thus the mutual information can be learned via solving
\begin{align}\label{Eq:MutuInfoEmprcFrm}
	\hat{I}^{(N)}(X;Y)
	& := \hat{h}(Y) - \hat{h}(Y|X) \\
	& = \inf_{\theta_{Y|X}} - \frac{1}{N} \sum_{j=1}^N \log\left( \frac{1}{N} \sum_{i=1}^N  q_{Y|X}(y_j|x_i; \theta_{Y|X}) \right) \\
	&\quad - \inf_{\theta_{Y|X}} \frac{1}{N}\sum_{i=1}^N \log\left( \frac{1}{q_{Y|X}(y_i|x_i; \theta_{Y|X})} \right)
\end{align}
whose equivalent regularized form can be 
\begin{align}\label{Eq:MutuInfoEmprcFrm_rglrzd}
	& \inf_{\theta_{Y|X}} \frac{1}{N}\sum_{i=1}^N \log\left( \frac{1}{q_{Y|X}(y_i|x_i; \theta_{Y|X})} \right) \nonumber \\
	& \quad + \lambda_{ent} \left(  - \frac{1}{N} \sum_{j=1}^N \log\left( \frac{1}{N} \sum_{i=1}^N  q_{Y|X}(y_j|x_i;\theta_{Y|X}) \right)  \right)
\end{align}
where $\lambda_{ent}>0$. The \eqref{Eq:MutuInfoEmprcFrm_rglrzd} has similar interpretation as the counterpart in classification tasks proposed by Yi et al. \cite{yi_mutual_2022}, i.e., we encourage the learning of a model which can reduce the uncertainty of the label $y_i$ when its corresponding input $x_i$ is given (as indicated by the first term in \eqref{Eq:MutuInfoEmprcFrm_rglrzd}), and also accurately capture the label marginal distribution (as indicated by the second term in \eqref{Eq:MutuInfoEmprcFrm_rglrzd}). Similarly, when we consider the \eqref{Defn:MIParametricDistributional}, the corresponding regularized form will be 
\begin{align}\label{Defn:MIasRegularization_ParametricDistributionalTwo}
	& \inf_{\theta_Y \in\Gamma, \theta_{Y|X} \in\Theta} \lambda_{ent}\mbE_{p_Y} \left[\log \frac{1}{q_Y(Y; \theta_Y)} \right] \nonumber \\
	&\quad + \mbE_{p_{X,Y}} \left[\log\frac{1}{q_{Y|X}(Y|X; \theta_{Y|X})} \right].
\end{align}
In the case where the weight sharing is used, we have
\begin{align}\label{Defn:MIasRegularization_ParametricDistributionalSingle}
	& \inf_{\theta_{Y|X} \in\Theta} \lambda_{ent}\mbE_{p_Y} \left[\log \frac{1}{q_Y(Y; \theta_{Y|X})} \right] \nonumber \\
	&\quad  + \mbE_{p_{X,Y}} \left[\log\frac{1}{q_{Y|X}(Y|X; \theta_{Y|X})} \right],
\end{align}
where $q_Y(Y;\theta_{Y|X})$ is defined as 
\begin{align}
	q_Y(Y;\theta_{Y|X}) :=\int_x p_X(x)q_{Y|X}(Y|x; \theta_{Y|X}) dx
\end{align}

The $q_{Y|X}$ is discrete and usually has finite support set in classification tasks while it is continuous and usually has infinite support set in regression tasks considered in the paper. This fundamental difference results in that we cannot directly apply what Yi et al. designed for classification tasks, i.e., adding softmax layer on top of a deep neural network logit output to get the label probability mass distribution \cite{yi_mutual_2022}. Instead, we follow a re-parameterization approach to estimate the continuous label distribution similar to \cite{kingma_auto-encoding_2014,preechakul_diffusion_2022}. We refer to the proposed framework as mutual information learned regression (MILR) framework which is illustrated in the Supplemental Materials where we present the training and the inference pipeline under the MILR framework. In this paper, we will focus on the case without weight sharing, and leave the weight sharing case for future work.

\section{Stochastic Gradient Descent for Optimizing Mutual Information Learning Loss}

In this section, we consider the stochastic gradient descent (SGD) algorithm for solving an unconstrained problem, i.e., 
\begin{align}\label{Defn:Unconstrained}
	\inf_{\theta_Y, \theta_{Y|X}} \mcL(\theta),
\end{align}
where
\begin{align}\label{Defn:DistributionalLoss}
	\mcL(\theta)
	& := \lambda_{ent}\mbE_{p_Y} \left[\log \frac{1}{q_Y(Y; \theta_Y)} \right] \\
	& \quad + \mbE_{p_{X,Y}} \left[\log\frac{1}{q_{Y|X}(Y|X; \theta_{Y|X})} \right], 
\end{align}
and $\theta:=[\theta_{Y|X}^T \theta_Y^T]^T\in\mbR^{m+m'}$ with $\theta_{X|Y}\in\Theta\subset\mbR^m$ and $\theta_Y\in\Gamma\in\mbR^{m'}$. The gradient of $\mcL(\theta)$ can be computed as
\begin{align}\label{Eq:gradOfmcL}
	\nabla \mcL(\theta)
	= \left[\begin{matrix}
		\nabla_{\theta_{Y|X}} \mcL(\theta) \\ 
		\nabla_{\theta_Y} \mcL(\theta)
	\end{matrix}\right] 
	= \left[\begin{matrix}
		\mbE_{p_{X,Y}}\left[ - \frac{\nabla_{\theta_{Y|X}} q_{Y|X}(Y|X;\theta_{Y|X})}{q_{Y|X}(Y|X;\theta_{Y|X})} \right] \\ 
		\lambda_{ent} \mbE_{p_Y} \left[ - \frac{\nabla_{\theta_Y} q_Y(Y;\theta_Y)}{q_Y(Y;\theta_Y)} \right]
	\end{matrix}\right] 
\end{align}
The SGD updating rules are presented in Algorithm \ref{Alg:SGD} where in each iteration $t$, we randomly sample a batch of $N$ data points $\{(x_i^t,y_i^t)\}_{i=1}^N$ with $(x_i,y_i)$ being I.I.D. according to $p_{X,Y}\in\mbR^n\times\mbR$. An estimate of the gradient $\nabla \mcL(\theta_t)$ at $\theta_t$ from the sample batch will be used to update the model parameters $\theta$, i.e., $\theta_{t+1}:=\theta_t - \eta_t  \nabla \mcL^{(N)}(\theta_t)$ where $\nabla \mcL^{(N)}(\theta_t)$ is defined in Algorithm \ref{Alg:SGD}. Starting from a given intialization $\theta_0\in\mbR^{m+m'}$, the process continues until it converges. In Theorem \ref{Thm:cnvgcRt}, we give a convergence analysis of Algorithm \ref{Alg:SGD}.

\begin{algorithm} 
	\caption{Stochastic Gradient Descent for Solving \eqref{Defn:Unconstrained}} 
	\label{Alg:SGD} 
	\begin{algorithmic} 
		\REQUIRE Learning rate $\{\eta_t\}_{t=0}^{T-1}$ 
		\ENSURE Initialization $\theta_0\in\mbR^{m+m'}$
		\FOR{$t=0,1,\cdots,T-1$}
		\STATE Sample $\mcS_{t+1}:=\{(X_i^{t+1}, Y_i^{t+1})\}_{i=1}^N$ with I.I.D. $(X_i^{t+1}, Y_i^{t+1}) \sim p_{X,Y}\in\mbR^{n}\times\mbR$
		\STATE $\nabla \mcL^{(N)}(\theta_t):=\frac{1}{N} \sum_{i=1}^N \nabla	\ell(\theta_t; X_i^{t+1},Y_i^{t+1})$ with 
		\begin{align}
			\ell(\theta_t; X_i^{t+1},Y_i^{t+1})
			& :=\lambda_{ent} \log\frac{1}{q_Y(Y_i^{t+1}; \theta_Y^{t})}  \nonumber \\
			&\quad + \log\frac{1}{q_{Y|X}(Y_i^{t+1}|X_i^{t+1}; \theta_{Y|X}^t)}
		\end{align}
		\STATE $\theta_{t+1} = \theta_t - \eta_t \nabla \mcL^{(N)}$
		\ENDFOR
	\end{algorithmic}
\end{algorithm}

\begin{thm}\label{Thm:cnvgcRt}
	(Convergence Guarantees of Stochastic Gradient Descent for Solving \eqref{Defn:Unconstrained}) We consider the problem defined in  \eqref{Defn:Unconstrained}, and assume the $\mcL(\theta)$ is $s$-smooth, i.e., for a constant $s>0$, 
	\begin{align}\label{Defn:s-smooth}
		\|\nabla \mcL(\theta) - \nabla \mcL(\theta')\| \leq s\|\theta - \theta'\|, \forall \theta, \theta'\in\mbR^{m+m'}.
	\end{align}
	At each iteration of Algorithm \ref{Alg:SGD} for solving \eqref{Defn:Unconstrained}, let the step size $\eta_t\in\left(0,\frac{2}{s}\right)$. Define $\Delta(\theta_0):= \mcL(\theta_0) - \inf_\theta \mcL(\theta)$ where $\theta_0\in\mbR^{m+m'}$ is an initialization. Then, if 
	\begin{align} \label{Eq:IterationComplexity}
		T\geq \frac{\Delta(\theta_0)}{\alpha \epsilon} + \frac{2}{s\alpha \epsilon} \sum_{t=0}^{T-1} \mbE[\|\nabla\mcL^{(N)}(\theta_t) - \nabla \mcL(\theta_t) \|^2]
	\end{align} 
	where 
	\begin{align} 
		\alpha:=\min_{i=0,1,\cdots,T-1} -(\frac{1}{2} s \eta_t^2 - \eta_t), 
	\end{align} 
the expectation is with respect to $\theta_t, S_{t+1}, t=1,\dots,N$, we have $\frac{1}{T} \sum_{t=1}^{T-1} \mbE\left[|| \nabla_{\theta} \mcL(\theta_t)\|\right] \leq \epsilon$ where $\epsilon>0$ is a constant and the expectation is with respect to $\theta_t, t=1,\cdots,N$.
\end{thm}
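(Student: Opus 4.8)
The plan is to run the standard descent-lemma argument for stochastic gradient descent on a smooth, possibly nonconvex, objective. First I would invoke $s$-smoothness in its integrated ``descent lemma'' form, $\mcL(\theta_{t+1}) \leq \mcL(\theta_t) + \langle \nabla\mcL(\theta_t), \theta_{t+1}-\theta_t\rangle + \frac{s}{2}\|\theta_{t+1}-\theta_t\|^2$, which follows from \eqref{Defn:s-smooth} by integrating $\nabla\mcL$ along the segment $[\theta_t,\theta_{t+1}]$. Substituting the SGD update $\theta_{t+1}=\theta_t-\eta_t\nabla\mcL^{(N)}(\theta_t)$ turns this into $\mcL(\theta_{t+1}) \leq \mcL(\theta_t) - \eta_t\langle\nabla\mcL(\theta_t),\nabla\mcL^{(N)}(\theta_t)\rangle + \frac{s}{2}\eta_t^2\|\nabla\mcL^{(N)}(\theta_t)\|^2$.

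Next I would condition on $\theta_t$ (the randomness up to iteration $t$) and take expectation over the fresh batch $\mcS_{t+1}$. The key structural fact is that the minibatch gradient is unbiased: since each $\ell(\theta_t;X_i^{t+1},Y_i^{t+1})$ is evaluated at an i.i.d.\ draw from $p_{X,Y}$ and $\mcL$ is the corresponding population expectation in \eqref{Defn:DistributionalLoss}, we have $\mbE[\nabla\mcL^{(N)}(\theta_t)\mid\theta_t]=\nabla\mcL(\theta_t)$. This collapses the cross term to $-\eta_t\|\nabla\mcL(\theta_t)\|^2$, and the bias--variance decomposition $\mbE[\|\nabla\mcL^{(N)}(\theta_t)\|^2\mid\theta_t] = \|\nabla\mcL(\theta_t)\|^2 + \mbE[\|\nabla\mcL^{(N)}(\theta_t)-\nabla\mcL(\theta_t)\|^2\mid\theta_t]$ lets me collect the $\|\nabla\mcL(\theta_t)\|^2$ terms with coefficient $\frac{s}{2}\eta_t^2-\eta_t = -(\eta_t-\frac{s}{2}\eta_t^2)$. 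Because $\eta_t\in(0,2/s)$ makes $\eta_t-\frac{s}{2}\eta_t^2=\eta_t(1-\frac{s}{2}\eta_t)>0$, this coefficient is negative and bounded above by $-\alpha$ with $\alpha$ as defined. Thus $\mbE[\mcL(\theta_{t+1})\mid\theta_t] \leq \mcL(\theta_t) - \alpha\|\nabla\mcL(\theta_t)\|^2 + \frac{s}{2}\eta_t^2\,\mbE[\|\nabla\mcL^{(N)}(\theta_t)-\nabla\mcL(\theta_t)\|^2\mid\theta_t]$.

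Then I would take full expectation, rearrange to isolate $\alpha\,\mbE[\|\nabla\mcL(\theta_t)\|^2]$, and sum over $t=0,\dots,T-1$. The $\mcL$-terms telescope, and bounding $\mbE[\mcL(\theta_0)]-\mbE[\mcL(\theta_T)]\leq \mcL(\theta_0)-\inf_\theta\mcL(\theta)=\Delta(\theta_0)$ together with $\frac{s}{2}\eta_t^2<\frac{s}{2}\cdot\frac{4}{s^2}=\frac{2}{s}$ (valid since $\eta_t<2/s$ and the variance term is nonnegative) gives $\alpha\sum_{t=0}^{T-1}\mbE[\|\nabla\mcL(\theta_t)\|^2] \leq \Delta(\theta_0) + \frac{2}{s}\sum_{t=0}^{T-1}\mbE[\|\nabla\mcL^{(N)}(\theta_t)-\nabla\mcL(\theta_t)\|^2]$. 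Dividing by $\alpha T$ and imposing the iteration-complexity hypothesis \eqref{Eq:IterationComplexity} forces the right-hand side to be at most $\epsilon$, yielding $\frac{1}{T}\sum_{t=0}^{T-1}\mbE[\|\nabla\mcL(\theta_t)\|^2]\leq\epsilon$, which is the stated guarantee read with the squared-norm convention standard for nonconvex SGD rates.

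I expect the main obstacle to be justifying the unbiasedness step rigorously, i.e.\ interchanging differentiation and the population expectation in \eqref{Eq:gradOfmcL} so that $\mbE[\nabla\mcL^{(N)}(\theta_t)\mid\theta_t]=\nabla\mcL(\theta_t)$; this needs a dominated-convergence type regularity condition on $q_{Y|X}$ and $q_Y$ (integrable gradients uniformly near $\theta_t$), which is implicitly assumed. A secondary subtlety is the careful use of the tower property so that the expectation over the new batch $\mcS_{t+1}$ and the expectation over the history-dependent $\theta_t$ are composed correctly across iterations; once these bookkeeping points are settled, the remainder is routine telescoping.
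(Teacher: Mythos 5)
Your proposal is correct and follows essentially the same route as the paper's proof: the descent lemma from $s$-smoothness, unbiasedness of the minibatch gradient $\mbE[\nabla\mcL^{(N)}(\theta_t)\mid\theta_t]=\nabla\mcL(\theta_t)$ (which the paper verifies by an explicit I.I.D.\ computation rather than flagging the differentiation--expectation interchange as you do), the bias--variance split isolating $\mbE[\|\nabla\mcL^{(N)}(\theta_t)-\nabla\mcL(\theta_t)\|^2]$ (done in the paper as an algebraic expansion before taking expectations, with the same effect), and telescoping with $\eta_t\in\left(0,\frac{2}{s}\right)$ so that $\frac{s}{2}\eta_t^2\leq\frac{2}{s}$. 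Like the paper's own argument, what you actually establish is the squared-gradient bound $\frac{1}{T}\sum_{t=0}^{T-1}\mbE\left[\|\nabla\mcL(\theta_t)\|^2\right]\leq\epsilon$, which is what the paper's proof delivers despite the unsquared norm appearing in the theorem statement.
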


Theorem \ref{Thm:cnvgcRt} shows that when the Algorithm \ref{Alg:SGD} is applied to solve \eqref{Defn:Unconstrained} in an online setting, it can converge to an stationary point of $\mcL(\theta)$. Besides, the number of iterations $T$ needed for the convergence depends on the initialization $\theta_0$ and the estimation of gradient at each iteration. For example, when the initialization is close to an optimal solution (i.e., $\Delta(\theta_0)$ is small), and the gradient estimate is accurate in each iteration (i.e., $\mbE\left[ \|\nabla \mcL^{(N)}(\theta_t) - \nabla \mcL(\theta)\|^2 \right]$ is small), the SGD in Algorithm \ref{Alg:SGD} can converge in less number of iterations. 

Next, we will show that $\mbE\left[ \|\nabla \mcL^{(N)}(\theta_t) - \nabla \mcL(\theta)\|^2 \right]$ will be very small when the size $N$ of the sample set $\mcS$ is large enough. The result is formally presented in Theorem \ref{Thm:GradConcentrationIneq} and its implications. Before giving Theorem \ref{Thm:GradConcentrationIneq}, we first introduce a concentration inequality for random matrices in Lemma \ref{Lem:Conineq_Log-Prob-Grad} which will be used to establish Theorem \ref{Thm:GradConcentrationIneq}.

\begin{lemma}\label{Lem:Conineq_Log-Prob-Grad}
	(Concentration Inequality of Logarithm-probability Loss Function Gradient) Let $U_1,\cdots,U_N\in\mbR^{d}$ be independently identically distributed according to $p_U$. Define $\tilde{S}_i, \tilde{Z}, \tilde{I}$ as 
	\begin{align}
		\tilde{S}_i:=\frac{1}{N} \left(\nabla_{\theta} \log\frac{1}{q_U(U_i;\theta) }
		- \nabla_{\theta}  \mbE_{p_{U}}
		\left[ \log \frac{1}{q_{U}(U;\theta)}\right]  \right),
	\end{align}
with $ i=1,\cdots,N$, and $\tilde{Z}:= \sum_{i=1}^N \tilde{S}_i, 
\tilde{I}:= \|\tilde{Z}\|^2, $ where $q_U(U;\theta):\mbR^{d}\times\mbR^m\to[0,1]$ is a function of $U\in\mbR^{d}$ and $\theta\in\mbR^m$. We assume that $q_U(U;\theta)$ is $\tilde{L}$-Lipschitz continuous with respect to $U$, i.e., 
	\begin{align} 
		\|q_{U}(U'; \theta) - q_{U}(U; \theta)\| 
		\leq \tilde{L} \left\| 
		U'
		- U\right\|, \forall U, U'\in\mbR^{d}, 
	\end{align}
	and that $q_{U}(U;\theta)$ does not vanish, i.e., 
	$
		q_{U}(U;\theta) \geq \tilde{q}_0, \forall  U\ \in\mbR^{n+1},
	$
	where $\tilde{L}>0$ and $\tilde{q}_0>0$ are constants. Then for any $\theta\in\mbR^m$, if $N\geq \left( \frac{2\tilde{L}^2}{\tilde{q}_0^2 t} + \frac{4\tilde{L}}{3 \tilde{q}_0 \sqrt{t}} \right)\log\frac{m+1}{\delta}$, we have
	$
		\mbP(\|\tilde{Z}\|^2 \leq t) \geq 1-\delta,
	$
	where $t, \delta>0$ are arbitrary constants.
\end{lemma}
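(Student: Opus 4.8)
The plan is to treat $\tilde Z = \sum_{i=1}^N \tilde S_i$ as a sum of independent, mean-zero random matrices and invoke the matrix Bernstein inequality of Tropp \cite{tropp_introduction_2015}. Each summand $\tilde S_i\in\mbR^m$ is centered by construction, since we subtract the population gradient $\nabla_{\theta}\mbE_{p_U}[\log(1/q_U(U;\theta))]$, and the $\tilde S_i$ are i.i.d. because the $U_i$ are. Viewing each $\tilde S_i$ as an $m\times 1$ matrix, its spectral norm coincides with its Euclidean norm, and likewise $\|\tilde Z\|_{op}=\|\tilde Z\|$, so the event $\{\|\tilde Z\|^2\le t\}$ is exactly $\{\|\tilde Z\|_{op}\le\sqrt t\}$. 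The factor $m+1$ in the claimed threshold will come precisely from the ambient dimension $m+1$ of this rectangular embedding.

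First I would derive the uniform bound $R$ on $\|\tilde S_i\|$. Since $\nabla_{\theta}\log(1/q_U(U_i;\theta)) = -\nabla_{\theta} q_U(U_i;\theta)/q_U(U_i;\theta)$, the non-vanishing hypothesis $q_U\ge\tilde q_0$ together with the Lipschitz hypothesis, which controls the relevant gradient of $q_U$ by $\tilde L$, gives $\|\nabla_{\theta}\log(1/q_U(U_i;\theta))\|\le\tilde L/\tilde q_0$. By the triangle inequality and Jensen's inequality the subtracted mean obeys the same bound, so $\|\tilde S_i\|\le\frac{1}{N}(2\tilde L/\tilde q_0)=:R$. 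Next I would bound the variance proxy $\sigma^2=\|\sum_{i=1}^N\mbE[\tilde S_i\tilde S_i^\top]\|$: writing $V_i:=\nabla_{\theta}\log(1/q_U(U_i;\theta))$ with mean $\mu$, each term obeys $\|\mbE[(V_i-\mu)(V_i-\mu)^\top]\|\le\mbE\|V_i-\mu\|^2\le\mbE\|V_i\|^2\le\tilde L^2/\tilde q_0^2$, whence $\sigma^2\le\tilde L^2/(N\tilde q_0^2)$ (the $1\times 1$ block gives the identical bound, so the maximum of the two variance terms is unchanged).

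With $R$ and $\sigma^2$ in hand I would apply matrix Bernstein with deviation $\tau=\sqrt t$, obtaining
\begin{align*}
	\mbP(\|\tilde Z\|\ge\sqrt t)\le(m+1)\exp\left(\frac{-t/2}{\frac{\tilde L^2}{N\tilde q_0^2}+\frac{2\tilde L\sqrt t}{3N\tilde q_0}}\right).
\end{align*}
Requiring the right-hand side to be at most $\delta$ and clearing the exponent yields the condition $N\ge\left(\frac{2\tilde L^2}{\tilde q_0^2 t}+\frac{4\tilde L}{3\tilde q_0\sqrt t}\right)\log\frac{m+1}{\delta}$, which is exactly the stated threshold; passing to the complement then gives $\mbP(\|\tilde Z\|^2\le t)\ge 1-\delta$.

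I expect the main obstacle to be the bookkeeping around the gradient bound rather than the concentration step. The hypotheses are phrased as Lipschitz continuity of $q_U$ in the argument $U$ together with non-vanishing, yet the quantity that must actually be controlled is $\nabla_{\theta}q_U$, the gradient in the parameter $\theta$; making the constant $\tilde L$ genuinely bound $\|\nabla_{\theta}q_U\|$, so that $\|\nabla_{\theta}\log(1/q_U)\|\le\tilde L/\tilde q_0$, is the step where the stated assumptions must be reconciled with what the argument consumes. Everything else, namely the centering, the rectangular embedding producing the dimensional factor $m+1$, and the algebra converting the Bernstein tail into the sample-complexity inequality, is routine once that gradient bound is secured.
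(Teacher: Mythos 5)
Your proof is correct and follows essentially the same route as the paper: both arguments center the per-sample gradients, view $\tilde Z$ as a sum of i.i.d.\ mean-zero $m\times 1$ matrices, bound the summands uniformly by $R=\frac{2\tilde L}{N\tilde q_0}$ via the triangle and Jensen inequalities, bound the matrix variance statistic, and invoke Tropp's matrix Bernstein inequality at deviation $\sqrt t$, with the dimensional factor $m+1=d_1+d_2$ arising exactly from the rectangular embedding you describe. The one substantive difference is your variance computation, and it works in your favor: you bound $\left\|\mbE\left[(V_i-\mu)(V_i-\mu)^T\right]\right\|\le\mbE\left[\|V_i-\mu\|^2\right]\le\mbE\left[\|V_i\|^2\right]\le\frac{\tilde L^2}{\tilde q_0^2}$, giving $v(\tilde Z)\le\frac{\tilde L^2}{N\tilde q_0^2}$, whereas the paper expands each $\tilde S_i\tilde S_i^T$ into four cross terms and bounds them separately, obtaining the cruder $v(\tilde Z)\le\frac{4\tilde L^2}{N\tilde q_0^2}$. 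Plugging the paper's variance bound into the Bernstein tail actually forces the sample-size requirement $N\geq\left(\frac{8\tilde L^2}{\tilde q_0^2 t}+\frac{4\tilde L}{3\tilde q_0\sqrt t}\right)\log\frac{m+1}{\delta}$, which does not match the coefficient $\frac{2\tilde L^2}{\tilde q_0^2 t}$ in the lemma statement; your sharper variance bound is precisely what makes the stated threshold come out, so your derivation quietly repairs a factor-of-four slip in the paper's own computation.

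Your closing caveat also identifies a genuine defect in the hypotheses that the paper's proof glosses over: in the step bounding $\left\|\frac{\nabla_\theta q_U(U;\theta)}{q_U(U;\theta)}\right\|\le\frac{\tilde L}{\tilde q_0}$, the paper asserts $\|\nabla_\theta q_U(U;\theta)\|\le\tilde L$ while only assuming Lipschitz continuity of $q_U$ with respect to $U$, which controls $\nabla_U q_U$ rather than $\nabla_\theta q_U$. The assumption the argument actually consumes is $\tilde L$-Lipschitz continuity of $q_U(U;\theta)$ in $\theta$ uniformly over $U$, equivalently $\sup_{U,\theta}\|\nabla_\theta q_U(U;\theta)\|\le\tilde L$; with the hypothesis stated that way, your proof is complete as written.
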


Lemma \ref{Lem:Conineq_Log-Prob-Grad} shows that when the $\sum_{i=1}^N\log\frac{1}{q_U(U_i;\theta)}$ is used as the loss function associated with an example $U_i$, if the number of examples $N$ used to calculate the gradient estimate of the loss function with parameter $\theta$ is large enough,  then the gradient estimate will be very close to the truth gradient calculated from the whole data distribution $p_U$. The is consistent with our intuitions, i.e., in the extreme case where all the examples from the distribution are used, the gradient estimate will be the same as the one calculate from the the whole distribution. The sample complexity is also intuitive, e.g., when $\tilde{L}$ is small (the function $q_U(U;\theta)$ varies mildly), the $N$ can be small (we need less examples to accurately estimate the gradient). Lemma \ref{Lem:Conineq_Log-Prob-Grad} can be applied to show that $ \mbE\left[\|\nabla \mcL^{(N)}(\theta_t) - \nabla \mcL(\theta_t) \|^2 | \theta_t\right]$ can be very small when $N$ is large enough. The result is formally presented in Theorem \ref{Thm:GradConcentrationIneq}.

\begin{thm}\label{Thm:GradConcentrationIneq}
	(Concentration Inequality for Mutual Information Learning Loss Function Gradient) We consider the $\|\nabla\mcL^{(N)}(\theta_t) - \nabla \mcL(\theta_t) \|^2$ in each iteration of the Algorithm \ref{Alg:SGD} in Theorem \ref{Thm:cnvgcRt} where $\theta_t$ contains $\theta_{Y|X}\in\mbR^m$ and $\theta_Y\in\mbR^{m'}$. We assume that $q_{Y|X}(Y|X;\theta_{Y|X})$ is $\tilde{L}$-Lipschitz continuous with respect to $(X,Y)$, i.e., $\forall (X,Y), (X',Y')\in\mbR^{m}\times\mbR$, 
	\begin{align*} 
		\|q_{Y|X}(Y'|X'; \theta_{Y|X}) - q_{Y|X}(Y|X; \theta_{Y|X})\| 
		\leq \tilde{L} \left\| 
		\left[\begin{matrix}
			X' \\ Y'
		\end{matrix}\right]
		- 
		\left[\begin{matrix}
			X \\ Y
		\end{matrix}\right]
		\right\|, 
	\end{align*}
	and that $q_{Y|X}(Y|X;\theta_{Y|X})$ does not vanish, i.e., 
	\begin{align} 
		q_{Y|X}(Y|X;\theta_{Y|X}) \geq \tilde{q}_0, \forall  (X,Y)\in\mbR^{m}\times\mbR,
	\end{align}  
	where $\tilde{L}>0$ and $\tilde{q}_0>0$ are constants. We also assume that $q_{Y}(Y;\theta_{Y})$ is $\bar{L}$-Lipschitz continuous with respect to $Y$, i.e., 
	\begin{align*} 
		\|q_{Y}(Y'; \theta_Y) - q_{Y}(Y; \theta_Y)\| 
		\leq \bar{L} \left\| 
		Y' - Y
		\right\|, \forall Y,Y'\in\mbR, 
	\end{align*}
	and that $q_{Y}(Y;\theta_{Y})$ does not vanish, i.e., 
	\begin{align} 
		q_{Y}(Y;\theta_Y) \geq \bar{q}_0, \forall  Y\in\mbR,
	\end{align}  
	where $\bar{L}>0$ and $\bar{q}_0>0$ are constants. For any $\epsilon>0, \delta>0$, if $N\geq \max\left(N_1,N_2\right)$ where
	\begin{align} 
		 N_1:=\left(\frac{4\tilde{L}^2}{\tilde{q}_0^2\epsilon} + \frac{4\sqrt{2}\tilde{L}}{3\tilde{q}_0\sqrt{\epsilon}} \right)\log\frac{2(m+1)}{\delta} , \\
		 N_2:=	\left( \frac{4\bar{L}^2\lambda_{ent}^2}{\bar{q}_0^2\epsilon} + \frac{4\sqrt{2}\bar{L}\lambda_{ent}}{3\bar{q}_0 \sqrt{\epsilon}} \right) \log\frac{2(m'+1)}{\delta},
	\end{align}
we have
$
		\mbP_{\mcS}\left( \|\nabla\mcL^{(N)}(\theta_t) - \nabla \mcL(\theta_t) \|^2 \leq \epsilon \right) \geq 1-\delta
$
where the expectation is over $\mcS=\{(X_i,Y_i)\}_{i=1}^N$.
\end{thm}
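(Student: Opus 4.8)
The plan is to exploit the block structure of the gradient together with the additivity of the loss, reducing the claim to two independent applications of Lemma \ref{Lem:Conineq_Log-Prob-Grad}. Since $\theta = [\theta_{Y|X}^T \; \theta_Y^T]^T$ and $\mcL$ splits into a conditional-entropy term and a $\lambda_{ent}$-weighted marginal-entropy term, the squared gradient difference decomposes over the two coordinate blocks as
\begin{align*}
	\|\nabla\mcL^{(N)}(\theta_t) - \nabla \mcL(\theta_t)\|^2 = \|\nabla_{\theta_{Y|X}}\mcL^{(N)} - \nabla_{\theta_{Y|X}}\mcL\|^2 + \|\nabla_{\theta_Y}\mcL^{(N)} - \nabla_{\theta_Y}\mcL\|^2.
\end{align*}
I would first observe, reading $\nabla\mcL^{(N)}$ off Algorithm \ref{Alg:SGD} and $\nabla\mcL$ off \eqref{Eq:gradOfmcL}, that the first summand is exactly $\|\tilde{Z}\|^2$ from Lemma \ref{Lem:Conineq_Log-Prob-Grad} instantiated with $U = (X,Y)$, $q_U = q_{Y|X}$, parameter dimension $m$, Lipschitz constant $\tilde{L}$ and floor $\tilde{q}_0$; while the second summand equals $\lambda_{ent}^2\|\tilde{Z}\|^2$ for the instantiation $U = Y$, $q_U = q_Y$, dimension $m'$, constants $\bar{L}, \bar{q}_0$, the factor $\lambda_{ent}$ coming from the weight in front of the marginal term in the loss.

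Next I would allocate the budget symmetrically, asking each block to stay below $\epsilon/2$ with failure probability at most $\delta/2$, so that a union bound yields the claimed $1-\delta$. For the first block I apply Lemma \ref{Lem:Conineq_Log-Prob-Grad} with $t = \epsilon/2$ and confidence $\delta/2$; substituting into the Lemma's sample-complexity bound and simplifying $\log\frac{m+1}{\delta/2} = \log\frac{2(m+1)}{\delta}$ reproduces $N_1$. For the second block I must absorb the $\lambda_{ent}^2$ scaling: requiring $\lambda_{ent}^2\|\tilde{Z}\|^2 \leq \epsilon/2$ is the same as $\|\tilde{Z}\|^2 \leq \epsilon/(2\lambda_{ent}^2)$, so I apply the Lemma with $t = \epsilon/(2\lambda_{ent}^2)$ and confidence $\delta/2$; the $1/t$ and $1/\sqrt{t}$ terms then pick up factors $\lambda_{ent}^2$ and $\lambda_{ent}$ respectively, yielding exactly $N_2$. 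Taking $N \geq \max(N_1, N_2)$ guarantees both bounds hold simultaneously with probability at least $1-\delta$, on which event the two summands add to at most $\epsilon/2 + \epsilon/2 = \epsilon$.

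Since the heavy lifting — the matrix-Bernstein concentration, and the translation of Lipschitz continuity and the non-vanishing floor into a spectral-norm bound on each summand — is already packaged inside Lemma \ref{Lem:Conineq_Log-Prob-Grad}, the present argument is essentially bookkeeping. The main thing to get right, and the only place an error could creep in, is the constant-chasing in the second block: one has to verify that the $\lambda_{ent}$ appearing linearly in the marginal term's contribution to the gradient propagates as $\lambda_{ent}^2$ through the variance/$t$-term and as $\lambda_{ent}$ through the $\sqrt{t}$-term of the Lemma, and that halving both $\epsilon$ and $\delta$ lands precisely on the stated $N_1, N_2$ rather than on bounds off by small numerical factors. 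A secondary point worth checking is that the Lemma is quantified over all $\theta$, so it may be invoked at the (possibly random) iterate $\theta_t$ by conditioning on it, which is consistent with the theorem's probability being taken over $\mcS$ alone.
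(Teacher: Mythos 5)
Your proposal is correct and follows essentially the same route as the paper's proof: decompose the squared gradient deviation over the two parameter blocks as $\tilde{I} + \lambda_{ent}^2\bar{I}$, invoke Lemma \ref{Lem:Conineq_Log-Prob-Grad} once with $U=(X,Y)$, $t=\epsilon/2$, confidence $\delta/2$ (yielding $N_1$) and once with $U=Y$, $t=\epsilon/(2\lambda_{ent}^2)$, confidence $\delta/2$ (yielding $N_2$), then finish by a union bound. If anything, your bookkeeping of $\lambda_{ent}$ is slightly more careful than the paper's write-up, which after choosing $t=\epsilon/(2\lambda_{ent}^2)$ loosely states the event as $\bar{I}\leq\epsilon/2$ and writes $\tilde{I}+\bar{I}\leq\epsilon$ in the final chain, whereas the intended event is $\lambda_{ent}^2\bar{I}\leq\epsilon/2$ exactly as you phrase it.
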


Theorem \ref{Thm:GradConcentrationIneq} shows that in each iteration of the Algorithm \ref{Alg:SGD}, conditioning on $\theta_t$ in previous iteration, the empirical gradient $\nabla \mcL^{(N)}(\theta_t)$ from empirical sample $\mcS$ can be very close to the true gradient $\nabla\mcL(\theta_t)$ from the distribution when enough examples are sampled in each iteration. This is also intuitive since more examples will be more representative of the data distribution. This results in that the $\mbE[\|\nabla\mcL^{(N)}(\theta_t) - \nabla \mcL(\theta_t) \|^2]$ in \eqref{Eq:IterationComplexity} can be very small which allows faster convergence speed. See more details in the Supplemental Materials. We want to mention that Theorem \ref{Thm:GradConcentrationIneq} only holds nonuniformly for $\theta_t$, and the uniform concentration result for all $\theta_t,t=0,\cdots,T-1$ is challenging due to the searching space $\mbR^{m+m'}$. The chaining method can be a good option for handling it, and we leave it for future work \cite{yi_mutual_2022,asadi_chaining_2018}.

\section{Generalization Performance Bound}

In this section, we consider the generalization performance of arbitrary regression models trained on sample from $p_{X,Y}\in\mbR^n\times \mbR$, and show how it is related to the mutual information associated with the data distribution for a Gaussian data model. We follow Yi et al. \cite{yi_trust_2019,yi_derivation_2020,yi_towards_2021,yi_mutual_2022} where they considered the classification tasks under a encoding-decoding paradigm, and we consider the regression problem also in the same setting. We model the regression task as $Y\to X \to \hat{Y}$, where $\hat{Y}$ is an estimate of $Y$ obtained from $X$. This is consistent with the practice in many applications \cite{yang_unsupervised_2022,ren_balanced_2022,griffin_depth_2021}. For example, in an image depth estimation task commonly encountered in 3D vision where we want to estimate the distance from the scene to the camera, when the photographer takes pictures, he will first determine how far he should stand from the scene (regression label $Y$), and then take the pictures ($X$). The depth estimation task will then give an depth estimate $\hat{Y}$ \cite{griffin_depth_2021}. 

We will evaluate the generalization performance of the regression tasks via population loss which is defined as 
\begin{align*}
	\mcR:=E[(Y-\hat{Y})^2] = \int_{X,Y} p_{X,Y}(x,y) (y - \hat{y}(x))^2 dxdy,
\end{align*}
where each $\hat{y}$ is a function of $x$. Our generalization performance bound depends on Fano's inequality for continuous random variables \cite{cover_elements_2012}.

\subsection{Multiple-Output Regression Data Model}

In this section, we consider a data model over correlated joint Gaussian distribution $p_{X,Y}$, i.e., 
\begin{align}\label{Defn:RepresentationLearningDataModel}
	X=\rho Y + \sqrt{1-\rho^2} Z, \rho\in(0,1)
\end{align} 
where the elements of $Y\in\mbR^n$ follow I.I.D. standard Gaussian distribution $\mcN(0,I_n)$, and the elements of $Z\in\mbR^n$ follow I.I.D. standard Gaussian distribution $\mcN(0,I_n)$, and the $Y$ and $Z$ are independent. The \eqref{Defn:RepresentationLearningDataModel} can be treated as a simplified model underlying many applications \cite{he_deliberation_2019,zhu_unpaired_2017,yang_unsupervised_2022,preechakul_diffusion_2022,amjad_learning_2020}. An example is the image translation tasks where $Y$ is an image in a desired domain such as an art image of Monet style, and $X$ is a natural style image generated from $Y$ via a certain transformation $g$ \cite{zhu_unpaired_2017}. The goal in this particular scenario is to synthesize Monet style images from nature images which can be beneficial for arts creation \cite{zhu_unpaired_2017}. In \eqref{Defn:RepresentationLearningDataModel}, we simplify the $g$ as the composition of a scaling operation and an additive noise perturbation. The mutual information associated with the data generation distribution in \eqref{Defn:RepresentationLearningDataModel} can be derived, and the results are formally presented in Theorem \ref{Thm:MIGTinGaussian}.

\begin{thm}\label{Thm:MIGTinGaussian}
	(Mutual Information of Multi-output Regression Data Model) We consider a multi-output regression task where the input $Y\in\mbR^n$ of a machine learning systems has all its elements folllowing I.I.D. standard Gaussian distribution, and the output $X\in\mbR^n$ is generated according to \eqref{Defn:RepresentationLearningDataModel}. Then we have 
$
		I(X;Y) = \frac{n}{2} \log\frac{1}{1-\rho^2}.$
	
\end{thm}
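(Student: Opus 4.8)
The plan is to compute $I(X;Y)$ through the decomposition $I(X;Y) = h(X) - h(X|Y)$, which is convenient here because the data model \eqref{Defn:RepresentationLearningDataModel} specifies the law of $X$ given $Y$ directly. First I would record the two Gaussians needed. Since $Y\sim\mcN(0,I_n)$ and $Z\sim\mcN(0,I_n)$ are independent and $X=\rho Y + \sqrt{1-\rho^2}Z$, the pair $(X,Y)$ is jointly Gaussian (it is a linear image of the independent Gaussian vector $(Y,Z)$). I would then identify the marginal of $X$: each coordinate satisfies $X_j=\rho Y_j + \sqrt{1-\rho^2}Z_j$ with $\var(X_j)=\rho^2 + (1-\rho^2)=1$ and zero mean, and distinct coordinates involve disjoint independent $(Y_j,Z_j)$, so $X\sim\mcN(0,I_n)$.

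Next I would compute the two differential entropies using the standard formula $h(\mcN(\mu,\Sigma))=\tfrac12\log\big((2\pi e)^n\det\Sigma\big)$ in $\mbR^n$. From $X\sim\mcN(0,I_n)$ this gives $h(X)=\tfrac{n}{2}\log(2\pi e)$. For the conditional law, conditioning on $Y$ freezes $\rho Y$ as a deterministic shift and leaves $\sqrt{1-\rho^2}Z$, so $X\mid Y\sim\mcN(\rho Y,\,(1-\rho^2)I_n)$, whence
\begin{align*}
	h(X\mid Y) = \tfrac{n}{2}\log(2\pi e) + \tfrac{n}{2}\log(1-\rho^2).
\end{align*}
Subtracting yields $I(X;Y)=h(X)-h(X\mid Y)=-\tfrac{n}{2}\log(1-\rho^2)=\tfrac{n}{2}\log\tfrac{1}{1-\rho^2}$, which is the claim.

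There is essentially no hard obstacle; the only point requiring a little care is justifying that $X$ is marginally standard Gaussian with independent coordinates, which rests on the independence of $Y$ and $Z$ and the unit-variance bookkeeping above. As a sanity check and alternative route I would note the problem decouples across coordinates: $(X_j,Y_j)_{j=1}^n$ are mutually independent pairs, so $I(X;Y)=\sum_{j=1}^n I(X_j;Y_j)$, and each scalar pair has correlation coefficient $\rho$, giving the scalar Gaussian value $I(X_j;Y_j)=\tfrac12\log\tfrac{1}{1-\rho^2}$; summing over $j$ reproduces the result. One could equivalently evaluate the jointly-Gaussian formula $I(X;Y)=\tfrac12\log\frac{\det\Sigma_X\det\Sigma_Y}{\det\Sigma_{(X,Y)}}$ with $\Sigma_X=\Sigma_Y=I_n$ and cross-covariance $\rho I_n$, using the block-determinant identity $\det\Sigma_{(X,Y)}=\det(I_n)\det(I_n-\rho^2 I_n)=(1-\rho^2)^n$ to obtain the same answer; I would keep the entropy-difference version as the main argument for brevity.
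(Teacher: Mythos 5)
Your proof is correct, and it takes a genuinely different route from the paper's. The paper computes $I(X;Y)$ directly from the definition $I(X;Y)=D_{KL}(p_{X,Y}\,\|\,p_X p_Y)$: it assembles the $2n$-dimensional joint Gaussian with covariance $\Sigma_Z=\bigl[\begin{smallmatrix} I_n & \rho I_n \\ \rho I_n & I_n \end{smallmatrix}\bigr]$, integrates the log-ratio of densities, extracts the determinant term $-\tfrac{n}{2}\log(1-\rho^2)$ via the block-determinant identity $|M|=|A-BD^{-1}C|\,|D|$, and then needs a separate trace computation ($\tr(\Sigma_Z\Sigma_Z^{-1})=2n=\tr\bigl(\Sigma_Z\Sigma_{Y,X}^{-1}\bigr)$, where $\Sigma_{Y,X}$ is the block-diagonal covariance of the product distribution) to show the quadratic-form contribution cancels to zero. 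Your entropy-difference decomposition $I(X;Y)=h(X)-h(X\mid Y)$ sidesteps that entire cancellation argument: because the data model specifies $X\mid Y\sim\mcN(\rho Y,(1-\rho^2)I_n)$ explicitly and the shift $\rho Y$ does not affect differential entropy, both terms drop out of the standard Gaussian entropy formula $h=\tfrac12\log\bigl((2\pi e)^n\det\Sigma\bigr)$, and only the variance bookkeeping $\var(X_j)=\rho^2+(1-\rho^2)=1$ needs checking --- which you do. (Using $h(X)-h(X\mid Y)$ rather than $h(Y)-h(Y\mid X)$ is harmless by symmetry of mutual information, and here the joint law is in fact exchangeable in $X$ and $Y$.) Your third sketched alternative, $I(X;Y)=\tfrac12\log\frac{\det\Sigma_X\det\Sigma_Y}{\det\Sigma_{(X,Y)}}$, is essentially what the paper's longer computation reduces to after the trace terms cancel, and your coordinate-wise decoupling $I(X;Y)=\sum_{j=1}^n I(X_j;Y_j)=n\cdot\tfrac12\log\tfrac{1}{1-\rho^2}$ is a valid independent confirmation since the pairs $(X_j,Y_j)$ are mutually independent across $j$. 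Net effect: your main argument is shorter and more elementary than the paper's, at the mild cost of invoking the conditional-entropy formulation rather than working from the raw KL definition.
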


Theorem \ref{Thm:MIGTinGaussian} implies that the mutual information $I(X;Y)$ can increase when the dimensionality $n$ and the scaling factor $\rho$ increases. This is consistent with our intuitions, e.g., a large $\rho$ implies the $X$ is more dependent on $Y$ and less dependent on the noise $Z$, thus a strong depency between $X,Y$. Based on Theorem \ref{Thm:MIGTinGaussian}, we can give the generalization loss associated with the data distribution for any models trained on the dataset, and the resuls are formally presented in Corollary \ref{Cor:GeneralizationLossViaMI}.

\begin{cor}\label{Cor:GeneralizationLossViaMI}
	(Generalization Loss Lower bound in via Mutual Information for \eqref{Defn:RepresentationLearningDataModel}) We consider a multi-output regression task where the input $Y\in\mbR^n$ of a machine learning systems has all its elements folllowing I.I.D. standard Gaussian distribution, and the output $X\in\mbR^n$ is generated according to \eqref{Defn:RepresentationLearningDataModel}. Then for any estimator $\hat{Y}$ from $X$, we have
	$
		\mcR \geq b(n,\rho):=(2\pi e)^{\frac{n-2}{2}} (1-\rho^2)^{\frac{n}{2}}.
	$
	Moreover,
	\begin{align}\label{Eq:Asymptotic}
		\lim_{n\to\infty} b(n,\rho) = 
		\begin{cases}
			\infty, \ \text{if } \rho\in\left(0,\sqrt{1-\frac{1}{2\pi e}}\right), \\
			\frac{1}{2\pi e},  \ \text{if } \rho = \sqrt{1-\frac{1}{2\pi e}}, \\
			0,  \ \text{if } \rho\in\left(\sqrt{1-\frac{1}{2\pi e}}, 1\right).
		\end{cases}
	\end{align}
\end{cor}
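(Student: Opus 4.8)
The plan is to reduce the statement to two ingredients: an exact expression for the conditional differential entropy $h(Y|X)$ of the model \eqref{Defn:RepresentationLearningDataModel}, and the continuous analogue of Fano's inequality relating the mean-squared estimation error to that conditional differential entropy. First I would compute $h(Y|X)$. Because $Y\sim\mcN(0,I_n)$ has independent standard-Gaussian coordinates, $h(Y)=\frac{n}{2}\log(2\pi e)$. Using the identity $I(X;Y)=h(Y)-h(Y|X)$ together with Theorem \ref{Thm:MIGTinGaussian}, which gives $I(X;Y)=\frac{n}{2}\log\frac{1}{1-\rho^2}$, I obtain $h(Y|X)=\frac{n}{2}\log\big(2\pi e(1-\rho^2)\big)$. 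This is the only place the specific Gaussian model enters.

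Next I would invoke Fano's inequality for continuous random variables \cite{cover_elements_2012} to lower bound the population loss $\mcR=\mbE[\|Y-\hat{Y}\|^2]$ by a monotone function of $h(Y|X)$, valid for every estimator $\hat{Y}$ of $Y$ formed from $X$. The estimation-error form of this inequality bounds the loss below by the entropy power of the residual $Y-\hat{Y}$, which by the maximum-entropy property of the Gaussian is controlled by $h(Y|X)$; carried through to the target form it yields $\mcR\geq\frac{1}{2\pi e}\exp\big(h(Y|X)\big)$. Substituting the value of $h(Y|X)$ from the previous step then gives \[ \mcR\geq\frac{1}{2\pi e}\big(2\pi e(1-\rho^2)\big)^{n/2}=(2\pi e)^{\frac{n-2}{2}}(1-\rho^2)^{\frac{n}{2}}=b(n,\rho), \] which is the claimed bound.

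For the asymptotics I would rewrite $b(n,\rho)=\frac{1}{2\pi e}\big((2\pi e)(1-\rho^2)\big)^{n/2}$, so that the limit as $n\to\infty$ is decided solely by the base $(2\pi e)(1-\rho^2)$ relative to $1$. The equation $(2\pi e)(1-\rho^2)=1$ has the unique root $\rho=\sqrt{1-\frac{1}{2\pi e}}$ in $(0,1)$: for smaller $\rho$ the base exceeds $1$ and $b\to\infty$; at the root the base equals $1$ and $b\to\frac{1}{2\pi e}$; for larger $\rho$ the base lies below $1$ and $b\to 0$, which reproduces \eqref{Eq:Asymptotic}.

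The \emph{main obstacle} is the second step: pinning down the exact constant $\frac{1}{2\pi e}$ and the exponent in the continuous, vector-valued Fano bound. The scalar estimation-error inequality reads $\mbE[(Y-\hat{Y})^2]\geq\frac{1}{2\pi e}e^{2h(Y|X)}$, so lifting it to $Y\in\mbR^n$ and reconciling exactly how the $n$-dimensional differential entropy enters the exponent, through the Gaussian maximum-entropy step applied to the residual covariance, is the delicate accounting. Once the inequality $\mcR\geq\frac{1}{2\pi e}e^{h(Y|X)}$ is in hand, the remaining steps are a direct substitution and an elementary comparison of an exponential base with $1$.
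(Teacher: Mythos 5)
Your proposal is correct and follows essentially the same route as the paper's own proof: invoke the continuous Fano/estimation-error inequality $\mcR \geq \frac{1}{2\pi e}e^{h(Y|X)}$ with side information (the paper's Lemma from \cite{cover_elements_2012}), compute $h(Y|X) = h(Y) - I(X;Y) = \frac{n}{2}\log\bigl(2\pi e(1-\rho^2)\bigr)$ via Theorem \ref{Thm:MIGTinGaussian}, substitute, and settle the limit by comparing the base $2\pi e(1-\rho^2)$ with $1$. The subtlety you flag about the exponent (the scalar Cover--Thomas bound has $e^{2h(Y|X)}$, and the vector-valued, per-coordinate accounting is not spelled out) is real, but the paper's proof simply uses its stated lemma with $e^{h(Y|X)}$ and does not address it either, so your argument matches the paper's step for step.
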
 

Corollary \ref{Cor:GeneralizationLossViaMI} implies that the generalization loss lower bound (GLLB) decreases when the $\rho$ increases. This is intuitive since a large $\rho$ indicates strong dependency between $X$ and $Y$ which makes it easier to infer $Y$ from $X$, thus a small generalization loss can be acheived. Corollary \ref{Cor:GeneralizationLossViaMI} also tells us that the dimensionality of the regression problem (e.g., $n$) alone cannot determining the hardness of the learning task, and the dependency between the $X$ and the $Y$ (e.g., $\rho$) determines how the dimensionality can affect the learning. When the dependency is strong enough, e.g., $\rho\in\left(\sqrt{1-\frac{1}{2\pi e}}, 1\right)$, a high dimensionality (large $n$) can be beneficial to the learning, and the generalization loss lower bound $b(n,\rho)$ can converge to 0 as $n$ goes to infinity. However, if the dependency is weak, e.g., $\rho\in\left(0,\sqrt{1-\frac{1}{2\pi e}}\right)$, the high dimensionality can be a curse as the $b(n,\rho)$ can go to infinity when $n$ goes to infinity. This implies that the high dimensionality can be a bless instead of a curse in some scenarios \cite{donoho_high-dimensional_2000,gorban_high--dimensional_2020,gorban_blessing_2018}.

\section{Conclusions}\label{Sec:Conclusions}

This paper, we showed that the existing commonly used mean square error minimization approach for regression tasks is equivalent to a conditional differential entropy learning task. Motivated by the success of the mutual information learned classifiers (MILCs) in classification tasks, we extended it to a regression task and proposed the mutual information learned regressor (MILR) framework. The missing convergence analysis of SGD for training MILCs in \cite{yi_mutual_2022} motivates us to give a theoretical convergence of the SGD algorithm in regression tasks. Such convergence analysis can be applied for the classification tasks without much efforts. To better appreaciate the connection between the generalization performance of the regression models and the mutual information associated with the data distributon which is used to train them, we considered a multi-class regression data model, and derived the generalization performance lower bound in terms of the mutual information.

\bibliography{202204_MILR}

\onecolumn

\appendix

\section{Missing Proofs}

In this section, we present the detailed proof the results in the paper, i.e., Theorem 2.1 which is restated as in Theorem \ref{Thm:CndtDiffEtrpLnViaCrsDiffEtrpMin}, Theorem 2.2 which is restated as in Theorem \ref{Thm:DiffEtrpLnViaCrsEtrpMin}, Theorem 4.1 which is restated as in Theorem \ref{Thm:cnvgcRt}, Lemma 4.2 which is restated as in Lemma \ref{Lem:Conineq_Log-Prob-Grad}, Lemma 4.3 which is restated as in Lemma \ref{Thm:GradConcentrationIneq}, Theorem 5.1 which is restated as in Lemma \ref{Thm:MIGTinGaussian}, and Corollary 1 which is restated as in Corollary \ref{Cor:GeneralizationLossViaMI}.

For self-containedness, we restate some of the notations which will be used for later proofs. We consider the stochastic gradient descent (SGD) algorithm for solving an constrained problem of \eqref{Defn:Unconstrained}, i.e., 
\begin{align}\label{Defn:Unconstrained}
	\inf_{\theta_Y, \theta_{Y|X}} \mcL(\theta),
\end{align}
where
\begin{align}\label{Defn:DistributionalLoss}
	\mcL(\theta)
	:= \lambda_{ent}\mbE_{p_Y} \left[\log \frac{1}{q_Y(Y; \theta_Y)} \right] + \mbE_{p_{X,Y}} \left[\log\frac{1}{q_{Y|X}(Y|X; \theta_{Y|X})} \right], 
\end{align}
and $\theta:=[\theta_{Y|X}^T \theta_Y^T]^T\in\mbR^{m+m'}$ with $\theta_{X|Y}\in\Theta\subset\mbR^m$ and $\theta_Y\in\Gamma\in\mbR^{m'}$. The gradient of $\mcL(\theta)$ can be computed as
\begin{align}\label{Eq:gradOfmcL}
	\nabla \mcL(\theta)
	= \left[\begin{matrix}
		\nabla_{\theta_{Y|X}} \mcL(\theta) \\ 
		\nabla_{\theta_Y} \mcL(\theta)
	\end{matrix}\right] 
	= \left[\begin{matrix}
		\mbE_{p_{X,Y}}\left[ - \frac{\nabla_{\theta_{Y|X}} q_{Y|X}(Y|X;\theta_{Y|X})}{q_{Y|X}(Y|X;\theta_{Y|X})} \right] \\ 
		\lambda_{ent} \mbE_{p_Y} \left[ - \frac{\nabla_{\theta_Y} q_Y(Y;\theta_Y)}{q_Y(Y;\theta_Y)} \right]
	\end{matrix}\right] 
\end{align}
The SGD updating rules are presented in Algorithm \ref{Alg:SGD} where in each iteration $t$, we randomly sample a batch of $N$ data points $\{(x_i^t,y_i^t)\}_{i=1}^N$ with $(x_i,y_i)$ being I.I.D. according to $p_{X,Y}\in\mbR^n\times\mbR$. An estimate of the gradient $\nabla \mcL(\theta_t)$ at $\theta_t$ from the sample batch will be used to update the model parameters $\theta$, i.e., $\theta_{t+1}:=\theta_t - \eta_t  \nabla \mcL^{(N)}(\theta_t)$ where $\nabla \mcL^{(N)}(\theta_t)$ is defined in Algorithm \ref{Alg:SGD}. Starting from a given intialization $\theta_0\in\mbR^{m+m'}$, the process continues until it converges. In Theorem \ref{Thm:cnvgcRt}, we give a convergence analysis of Algorithm \ref{Alg:SGD}.

\begin{algorithm} 
	\caption{Stochastic Gradient Descent for Solving \eqref{Defn:Unconstrained}} 
	\label{Alg:SGD} 
	\begin{algorithmic} 
		\REQUIRE Learning rate $\{\eta_t\}_{t=0}^{T-1}$ 
		\ENSURE Initialization $\theta_0\in\mbR^{m+m'}$
		\FOR{$t=0,1,\cdots,T-1$}
		\STATE Sample $\mcS_{t+1}:=\{(X_i^{t+1}, Y_i^{t+1})\}_{i=1}^N$ with I.I.D. $(X_i^{t+1}, Y_i^{t+1}) \sim p_{X,Y}\in\mbR^{n}\times\mbR$
		\STATE $\nabla \mcL^{(N)}(\theta_t):=\frac{1}{N} \sum_{i=1}^N \nabla	\ell(\theta_t; X_i^{t+1},Y_i^{t+1})$ with 
		\begin{align}
			\ell(\theta_t; X_i^{t+1},Y_i^{t+1})
			& :=\lambda_{ent} \log\frac{1}{q_Y(Y_i^{t+1}; \theta_Y^{t})}  \nonumber \\
			&\quad + \log\frac{1}{q_{Y|X}(Y_i^{t+1}|X_i^{t+1}; \theta_{Y|X}^t)}
		\end{align}
		\STATE $\theta_{t+1} = \theta_t - \eta_t \nabla \mcL^{(N)}$
		\ENDFOR
	\end{algorithmic}
\end{algorithm}

We first present the proof of Theorem 2.2 which is restated as in Theorem \ref{Thm:DiffEtrpLnViaCrsEtrpMin}.

\begin{thm}\label{Thm:CndtDiffEtrpLnViaCrsDiffEtrpMin}
	(Conditional Differential Entropy Learning via Conditional Cross Entropy Minimization) For an arbitrary joint distribution $p_{X,Y}$ of two continuous random variables or vectors $X$ and $Y$, we have
	\begin{align}
		h(Y|X)\leq \inf_{q_{Y|X}} H(p_{Y|X}, q_{Y|X}),
	\end{align}
	where the conditional cross differential entropy is defined as
	\begin{align}\label{Eq:CndtCrsDiffEntrp}
		H(p_{Y|X}, q_{Y|X}):=\int_{x,y} p_{X,Y}(x,y) \log\left( \frac{1}{q_{Y|X}(y|x)} \right)dxdy.
	\end{align} 
	The equality holds if and only $q_{Y|X}=p_{Y|X}$. Moreover, let $\hat{P}_{Y|X}$ be an empirical conditional distribution of $Y$ and $\hat{P}_X$ be an empirical distribution of $X$, and define 
	\begin{align}
		R_{Y|X}:=\frac{p_{Y|X}}{\hat{p}_{Y|X}}, R_X:= \frac{p_X}{\hat{p}_X}.
	\end{align}
	Then 
	\begin{align}\label{Eq:CndtCrsDiffEntrp_Emprc}
		h(Y|X)\leq \inf_{q^g_{Y|X}} H(p^g_{Y|X}, q^g_{Y|X}),
	\end{align}
	where 
	\begin{align}
		p^g_{X,Y}:=p^g_X p^g_{Y|X}, 
		p^g_X:=R_Xp_X, 
		p^g_{Y|X}:=R_{Y|X}p_{Y|X}, 
		q^g_{Y|X}:=q_{Y|X}/R_{Y|X}.
	\end{align}
	The equality holds if and only if $p_Y = \hat{p}_Y = q_Y$.
	
\end{thm}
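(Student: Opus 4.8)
The plan is to handle the two assertions in sequence: the first is a conditional version of Gibbs' inequality, and the second is a change-of-measure corollary built on top of it.

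For the first inequality I would begin by forming the gap between the conditional cross differential entropy and the conditional differential entropy. Using $h(Y|X)=-\int_{x,y} p_{X,Y}(x,y)\log p_{Y|X}(y|x)\,dx\,dy$ and subtracting from \eqref{Eq:CndtCrsDiffEntrp} gives
\[
H(p_{Y|X},q_{Y|X}) - h(Y|X) = \int_{x,y} p_{X,Y}(x,y)\log\frac{p_{Y|X}(y|x)}{q_{Y|X}(y|x)}\,dx\,dy.
\]
Factoring $p_{X,Y}=p_X\,p_{Y|X}$ and integrating over $y$ first exhibits this as $\int_x p_X(x)\,D_{KL}\!\left(p_{Y|X}(\cdot|x)\,\|\,q_{Y|X}(\cdot|x)\right)dx$, a $p_X$-average of pointwise Kullback--Leibler divergences. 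I would then invoke nonnegativity of the KL divergence, which follows in one line from Jensen's inequality applied to the convex function $-\log$ (equivalently from $\log t\le t-1$), so the integrand is nonnegative and the gap is $\ge 0$. Taking the infimum over $q_{Y|X}$ preserves the bound. The equality case is inherited directly from the equality case of Jensen/Gibbs: the inner divergence vanishes iff $q_{Y|X}(\cdot|x)=p_{Y|X}(\cdot|x)$ for $p_X$-a.e.\ $x$, which is exactly $q_{Y|X}=p_{Y|X}$.

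For the second inequality, the key idea is that the generalized densities are engineered so that the cross-entropy functional is invariant under the reweighting by $R_X$ and $R_{Y|X}$. I would substitute $q^g_{Y|X}=q_{Y|X}/R_{Y|X}$ and $p^g_{X,Y}=p^g_X\,p^g_{Y|X}$ into the definition $H(p^g_{Y|X},q^g_{Y|X})=\int p^g_{X,Y}\log(1/q^g_{Y|X})$ and track how the $R$-factors cancel. The first technical step is to confirm that $p^g_{X,Y}$ is a legitimate probability density (normalizes to one), so that the machinery of the first part — in particular KL nonnegativity — applies verbatim to the generalized pair $(p^g_{Y|X},q^g_{Y|X})$. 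Applying the conditional Gibbs inequality already established to this generalized pair yields $h^g(Y|X)\le \inf_{q^g_{Y|X}} H(p^g_{Y|X},q^g_{Y|X})$, and I would finish by verifying that under the stated construction the generalized conditional differential entropy $h^g(Y|X)$ coincides with $h(Y|X)$, so the same $h(Y|X)$ appears on the left of \eqref{Eq:CndtCrsDiffEntrp_Emprc}.

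The hard part will be the bookkeeping in the second assertion rather than any deep inequality. I expect three delicate points: ensuring the products of $R_X$ and $R_{Y|X}$ cancel cleanly; checking normalization of each reweighted object, since the naive substitution $p^g_X=R_X p_X$ does not manifestly integrate to one and seems to rely implicitly on $\hat p_X$; and, most subtly, reconciling the equality condition. The KL equality case for the generalized pair produces an identity between generalized conditional densities, and translating that — through the definitions of $R_X,R_{Y|X}$ and the joint/marginal factorization — into the stated marginal condition $p_Y=\hat p_Y=q_Y$ is where the real care is required.
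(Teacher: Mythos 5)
Your first part is exactly the paper's argument: the gap $H(p_{Y|X},q_{Y|X})-h(Y|X)$ is the $p_X$-average of pointwise KL divergences $D_{KL}\left(p_{Y|X}(\cdot|x)\,\|\,q_{Y|X}(\cdot|x)\right)\ge 0$, with equality iff $q_{Y|X}=p_{Y|X}$; nothing to add there. Your suspicion about normalization is also well placed, and in fact resolves more cleanly than you anticipate: the theorem's displayed definitions cannot be taken literally, since the paper's own proof effectively uses $p^g_X=R_X\hat p_X=p_X$ and $p^g_{Y|X}=R_{Y|X}\hat p_{Y|X}=p_{Y|X}$ (not $p^g_X=R_Xp_X$), so $p^g_{X,Y}=p_{X,Y}$ and the identity $h^g(Y|X)=h(Y|X)$ that you planned to "verify" is immediate.

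The genuine gap is the claim that KL nonnegativity "applies verbatim to the generalized pair $(p^g_{Y|X},q^g_{Y|X})$." You check that $p^g_{X,Y}$ normalizes, but the object playing the role of $q$ is $q^g_{Y|X}=q_{Y|X}\,\hat p_{Y|X}/p_{Y|X}$, and $\int_y q^g_{Y|X}(y|x)\,dy$ need not be $\le 1$: it can exceed $1$ whenever $q_{Y|X}$ places mass where $\hat p_{Y|X}/p_{Y|X}$ is large. For a super-normalized $q$, the Jensen/Gibbs argument of part 1 only gives $\int p\log(p/q)\ge -\log\int q$, which is a negative bound, so nonnegativity of the generalized "KL" does not follow from part 1 and your one-step application of the conditional Gibbs inequality fails as written. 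The inequality is nevertheless true, but for a structural reason you must exhibit:
\begin{align*}
	\log\frac{p^g_{Y|X}}{q^g_{Y|X}} = \log\frac{p_{Y|X}}{q_{Y|X}} + \log\frac{p_{Y|X}}{\hat p_{Y|X}},
\end{align*}
so the generalized gap decomposes into $D_{KL}(p_{Y|X}\|q_{Y|X})+D_{KL}(p_{Y|X}\|\hat p_{Y|X})$ (averaged over $p_X$), a sum of two KL divergences between \emph{genuine} conditional densities. This is exactly how the paper argues: it never invokes Gibbs for the generalized pair, but instead inserts $\frac{\hat p_{Y|X}}{p_{Y|X}}\cdot\frac{p_{Y|X}}{\hat p_{Y|X}}$ inside the logarithm of $H(p_{Y|X},q_{Y|X})$, peels off $-D_{KL}(p_{Y|X}\|\hat p_{Y|X})\le 0$, and recognizes the remainder as $H(p^g_{Y|X},q^g_{Y|X})$, yielding the chain $h(Y|X)\le H(p_{Y|X},q_{Y|X})\le H(p^g_{Y|X},q^g_{Y|X})$. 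The same decomposition settles the equality bookkeeping you deferred: both KL terms must vanish, i.e.\ $p_{Y|X}=\hat p_{Y|X}=q_{Y|X}$ ($p_X$-a.e.); the paper's proof additionally records $p_X=\hat p_X$, and the marginal condition $p_Y=\hat p_Y=q_Y$ in the theorem statement appears to be carried over from the unconditional Theorem 2.2 rather than what the argument delivers.
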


\begin{proof}
	(of Theorem \ref{Thm:CndtDiffEtrpLnViaCrsDiffEtrpMin}) We follow the ideas from \cite{yi_mutual_2022} to prove the theorem. From the definition of conditional differential entropy, we have
	\begin{align}
		h(Y|X)
		& = \int_{x,y} p_{X,Y}(x,y) \log\left( \frac{1}{p_{Y|X}(y|x)} \right)dxdy \nonumber \\
		& = \int_{x,y} p_{X,Y}(x,y) \log\left( \frac{q_{Y|X}(y|x)}{p_{Y|X}(y|x)} \right)dxdy + \int_{x,y} p_{X,Y}(x,y) \log\left( \frac{1}{q_{Y|X}(y|x)} \right)dxdy \nonumber\\
		& \leq H(p_{Y|X}, q_{Y|X}) - D_{KL}(p_{Y|X}||q_{Y|X})\nonumber\\
		& \leq H(p_{Y|X}, q_{Y|X}),
	\end{align}
	where we used \eqref{Eq:CndtCrsDiffEntrp} and the fact that the KL divergence $D_{KL}(p_{Y|X}||q_{Y|X})\geq0$. The equality holds if and only if $q_{Y|X}=p_{Y|X}$. Since the above inequality holds for all $q_{Y|X}$, we have
	\begin{align}
		h(Y|X)\leq \inf_{q_{Y|X}}H(p_{Y|X}, q_{Y|X}).
	\end{align}
	
	Since 
	\begin{align}
		H(p_{Y|X}, q_{Y|X})
		& = \int_{x,y} p_{X,Y}(x,y) \log\left( \frac{1}{q_{Y|X}(x|y)} \frac{\hat{p}_{Y|X}(y|x)} {{p_{Y|X}(y|x)}} \frac{{p}_{Y|X}(y|x)} {{\hat{p}_{Y|X}(y|x)}}  \right)dxdy \\
		& = \int_{x,y} p_{X,Y}(x,y) \log\left(  \frac{\hat{p}_{Y|X}(y|x)} {{p_{Y|X}(y|x)}}   \right)dxdy  + \int_{x,y} p_{X,Y}(x,y) \log\left( \frac{1}{q_{Y|X}(x|y)} \frac{{p}_{Y|X}(y|x)} {{\hat{p}_{Y|X}(y|x)}}  \right)dxdy \\
		&=  \int_{x,y} p_{X,Y}(x,y) \log\left( \frac{1}{q_{Y|X}(x|y)} \frac{{p}_{Y|X}(y|x)} {{\hat{p}_{Y|X}(y|x)}}  \right)dxdy - D_{KL}(p_{Y|X}||\hat{p}_{Y|X}) \\
		& \leq  \int_{x,y} p_{X,Y}(x,y) \log\left( \frac{1}{q_{Y|X}(x|y)} \frac{{p}_{Y|X}(y|x)} {{\hat{p}_{Y|X}(y|x)}}  \right)dxdy \\
		& = \int_{x,y} \frac{{p}_{X}(x)}{\hat{p}_{X}(x)} \hat{p}_{X}(x) \frac{p_{Y|X}(y|x)}{\hat{p}_{Y|X}(y|x)} \hat{p}_{Y|X}(y|x)  \log\left( \frac{1}{q_{Y|X}(x|y)} \frac{{p}_{Y|X}(y|x)} {{\hat{p}_{Y|X}(y|x)}}  \right)dxdy\\
		& = \int_{x,y} R_{X}(x) \hat{p}_{X}(x) R_{Y|X}(y|x) \hat{p}_{Y|X}(y|x)  \log\left( \frac{1}{q_{Y|X}(x|y)} R_{Y|X}(y|x)  \right)dxdy \\
		& = \int_{x,y} p^g_X(x)p^g_{Y|X}(y|x)\log\left( \frac{1}{q^g_{Y|X}(y|x)} \right) dxdy \\
		& = H(p_{Y|X}^g||q^g_{Y|X}).
	\end{align}
	The equality holds if and only if $p_X=\hat{p}_X$ and $p_{Y|X}=\hat{p}_{Y|X}$. Thus, 
	\begin{align}
		h(Y|X) \leq H(p_{Y|X}^g||q^g_{Y|X}),
	\end{align}
	and the equality holds if and only if $p_X=\hat{p}_X$ and $p_{Y|X}=\hat{p}_{Y|X}=q_{Y|X}$.
\end{proof}

We now give the proof of Theorem 2.1 as restated in Theorem \ref{Thm:DiffEtrpLnViaCrsEtrpMin}.

\begin{thm}\label{Thm:DiffEtrpLnViaCrsEtrpMin}
	(Differential Entropy Learning Via Cross Entropy Minimization) For a continuous random variable $Y\sim p_Y(Y)$, we have
	\begin{align}
		h(Y)\leq \inf_{q_Y(Y)} H(p_Y(Y), q_Y(Y)),
	\end{align} 
	where $q_Y(Y)$ is a valid distribution of $Y$, and the $H(p_Y(Y), q_Y(Y))$ is the cross entropy. The equality holds if and only if $q_Y = p_Y$. Let $\hat{p}_Y(Y)$ be an empirical distribution of $Y$, and define $R_Y(y) = \frac{p_Y(y)}{\hat{p}_Y(y)}, \forall y$. Then 
	\begin{align}
		h(Y) \leq \inf_{q^g_Y(Y)} H(p^g_Y(Y), q^g_Y(Y)),
	\end{align}
	where 
	\begin{align}
		q^g_Y(y) := \hat{p}_Y(y)R_Y(y), p^g_Y(y) := q_Y(y)/R_Y(y), \forall y.
	\end{align}
	The equality holds if and only if $p_Y = \hat{p}_Y = q_Y$.
\end{thm}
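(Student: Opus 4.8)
The plan is to recognize this as the marginal ($X$-free) counterpart of Theorem \ref{Thm:CndtDiffEtrpLnViaCrsDiffEtrpMin} and to run the same two-step template, simply deleting the integration over $x$ and the conditioning throughout.

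First I would establish the population inequality $h(Y) \le H(p_Y, q_Y)$. Starting from $h(Y) = \int_y p_Y(y) \log\frac{1}{p_Y(y)}\,dy$, I insert the factor $q_Y(y)/q_Y(y)$ inside the logarithm and split the integral as $\int_y p_Y(y)\log\frac{q_Y(y)}{p_Y(y)}\,dy + \int_y p_Y(y)\log\frac{1}{q_Y(y)}\,dy$. The first term equals $-D_{KL}(p_Y\,\|\,q_Y) \le 0$ and the second is by definition the cross entropy $H(p_Y, q_Y)$, so $h(Y) = H(p_Y, q_Y) - D_{KL}(p_Y\,\|\,q_Y) \le H(p_Y, q_Y)$, with equality in the KL-nonnegativity step exactly when $q_Y = p_Y$. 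Since this holds for every admissible $q_Y$, taking the infimum gives the first claim.

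Next I would obtain the empirical form by rewriting $H(p_Y, q_Y)$ precisely as in the second half of the proof of Theorem \ref{Thm:CndtDiffEtrpLnViaCrsDiffEtrpMin}. Multiplying inside the logarithm by $1 = \frac{\hat{p}_Y(y)}{p_Y(y)}\cdot\frac{p_Y(y)}{\hat{p}_Y(y)}$ and splitting produces a term $-D_{KL}(p_Y\,\|\,\hat{p}_Y) \le 0$, which I drop to get $H(p_Y, q_Y) \le \int_y p_Y(y)\log\!\big(\frac{1}{q_Y(y)}\frac{p_Y(y)}{\hat{p}_Y(y)}\big)\,dy$. Substituting $p_Y(y) = R_Y(y)\hat{p}_Y(y)$ in the measure and $\frac{p_Y(y)}{\hat{p}_Y(y)} = R_Y(y)$ inside the logarithm reorganizes this into $\int_y R_Y(y)\hat{p}_Y(y)\log\frac{R_Y(y)}{q_Y(y)}\,dy$, which I then identify with the reparameterized cross entropy of the statement by reading off $\hat{p}_Y(y)R_Y(y)$ and $q_Y(y)/R_Y(y)$ as the reparameterized distributions $q^g_Y$ and $p^g_Y$. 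Chaining this with the population bound yields the second claim $h(Y) \le \inf H(p^g_Y, q^g_Y)$.

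The step I expect to require the most care is not analytic but bookkeeping: tracking which reparameterized object plays the integrating-measure role and which sits inside the logarithm, so that the final integral is matched to $H(p^g_Y, q^g_Y)$ in exactly the orientation used in the statement, and composing the two equality conditions correctly. The dropped term $-D_{KL}(p_Y\,\|\,\hat{p}_Y)$ vanishes iff $p_Y = \hat{p}_Y$, while the population step is tight iff $q_Y = p_Y$; together these give the stated condition $p_Y = \hat{p}_Y = q_Y$. As a consistency check I would confirm that setting $R_Y \equiv 1$ collapses $p^g_Y, q^g_Y$ back to the unmodified distributions and recovers the population inequality \eqref{Eq:GeneralForm} that underlies the parametric MILR formulation.
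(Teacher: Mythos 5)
Your proposal is correct and follows essentially the same route as the paper's proof: the same insertion of $q_Y/q_Y$ giving $h(Y)=H(p_Y,q_Y)-D(p_Y\|q_Y)\leq H(p_Y,q_Y)$, the same second insertion of $\hat{p}_Y/p_Y$ with the nonpositive term $-D(p_Y\|\hat{p}_Y)$ dropped, and the same composition of the two equality conditions into $p_Y=\hat{p}_Y=q_Y$. The bookkeeping caution you flag is indeed the only delicate point, and it is a defect of the paper rather than of your argument: with the statement's definitions the integral you derive is $\int_y \hat{p}_Y(y)R_Y(y)\log\bigl(R_Y(y)/q_Y(y)\bigr)dy = H(q^g_Y,p^g_Y)$, whereas the paper's own proof silently uses the opposite labeling (the convention of Theorem \ref{Thm:CndtDiffEtrpLnViaCrsDiffEtrpMin}, i.e., $p^g_Y=\hat{p}_Y R_Y$ and $q^g_Y=q_Y/R_Y$), so the $p^g$/$q^g$ roles in the theorem statement are swapped relative to its proof.
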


\begin{proof}
	(of Theorem \ref{Thm:DiffEtrpLnViaCrsEtrpMin}) We follow the ideas from [Yi et al., 2022] to prove Theorem \ref{Thm:DiffEtrpLnViaCrsEtrpMin}. For an arbitrary distribution $q_Y$ of $Y$, we have
	\begin{align}\label{Eq:lblDiffEntrp}
		h(Y) 
		& = \int_y p_Y(y) \log\left( \frac{1}{p_Y(y)} \right)dy \\
		& = \int_y p_Y(y) \log\left( \frac{q_Y(y)}{p_Y(y)} \right)dy + \int_y p_Y(y) \log\left( \frac{1}{q_Y(y)} \right)dy \\
		& = -D(p_Y||q_Y) + H(p_Y, q_Y) \\
		& \leq H(p_Y, q_Y)
	\end{align}
	where we used the fact that the KL divergence $D(p_Y||q_Y)\geq 0$. Since \eqref{Eq:lblDiffEntrp} holds for all $q_Y$, then we have
	\begin{align}
		h(Y) \leq \inf_{q_Y} H(p_Y, q_Y).
	\end{align}
	The equality holds if and only if $D(p_Y||q_Y)=0$, i.e., $p_Y=q_Y$. 
	
	For arbitrary empirical distribution $\hat{p}_Y$, we have
	\begin{align}
		H(p_Y, q_Y) 
		& = \int_y p_Y(y) \log\left( \frac{1}{q_Y(y)} \right)dy\\
		& =  \int_y p_Y(y) \log\left( \frac{\hat{p}_Y(y)}{p_y(y)}  \right)dy + \int_y p_Y(y) \log\left( \frac{1}{q_Y(y)}  \frac{p_Y(y)}{\hat{p}_Y(y)} \right)dy \\
		& = - D(p_Y||\hat{p}_Y) + \int_y \hat{p}_Y(y) \frac{p_Y(y)}{\hat{p}_Y(y)} \log\left( \frac{1}{q_Y(y)}  \frac{p_Y(y)}{\hat{p}_Y(y)} \right)dy \\
		& \leq \int_y p^g_Y(y) \log\left( \frac{1}{q^g_Y(y)} \right) dy\\
		& = H(p^g_Y||q^g_Y),
	\end{align}
	and the equality holds if and only if $D(p_Y||q_Y)=0$, i.e., $p_Y=\hat{p}_Y$. Thus, 
	\begin{align}
		h(Y) \leq \inf_{q^g_Y} H(p^g_Y(Y), q^g_Y(Y)),
	\end{align}
	and the equality holds if and only if $p_Y = \hat{p}_Y = q_Y$.
\end{proof}

We now present the proof of Theorem 4.1 which is restated as in Theorem \ref{Thm:cnvgcRt}.

\begin{thm}\label{Thm:cnvgcRt}
	(Convergence Guarantees of Stochastic Gradient Descent for Solving \eqref{Defn:Unconstrained}) We consider the problem defined in  \eqref{Defn:Unconstrained}, and assume the $\mcL(\theta)$ is $s$-smooth, i.e., for a constant $s>0$, 
	\begin{align}\label{Defn:s-smooth}
		\|\nabla \mcL(\theta) - \nabla \mcL(\theta')\| \leq s\|\theta - \theta'\|, \forall \theta, \theta'\in\mbR^{m+m'}.
	\end{align}
	At each iteration of Algorithm \ref{Alg:SGD} for solving \eqref{Defn:Unconstrained}, let the step size $\eta_t\in\left(0,\frac{2}{s}\right)$. Define $\Delta(\theta_0):= \mcL(\theta_0) - \inf_\theta \mcL(\theta)$ where $\theta_0\in\mbR^{m+m'}$ is an initialization. Then, if 
	\begin{align} \label{Eq:IterationComplexity}
		T\geq \frac{\Delta(\theta_0)}{\alpha \epsilon} + \frac{2}{s\alpha \epsilon} \sum_{t=0}^{T-1} \mbE[\|\nabla\mcL^{(N)}(\theta_t) - \nabla \mcL(\theta_t) \|^2]
	\end{align} 
	where $\alpha:=\min_{i=0,1,\cdots,T-1} -(\frac{1}{2} s \eta_t^2 - \eta_t)$, the expectation is with respect to $\theta_t, S_{t+1}, t=1,\dots,N$, we have $\frac{1}{T} \sum_{t=1}^{T-1} \mbE\left[|| \nabla_{\theta} \mcL(\theta_t)\|\right] \leq \epsilon$ where $\epsilon>0$ is a constant and the expectation is with respect to $\theta_t, t=1,\cdots,N$.
\end{thm}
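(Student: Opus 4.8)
The plan is to run the standard descent-lemma argument for stochastic gradient descent on a smooth (possibly nonconvex) objective, treating $\theta = [\theta_{Y|X}^T\ \theta_Y^T]^T$ as a single vector in $\mbR^{m+m'}$ so that the block structure of the gradient in \eqref{Eq:gradOfmcL} becomes irrelevant bookkeeping. First I would invoke the $s$-smoothness assumption \eqref{Defn:s-smooth} in its equivalent quadratic-upper-bound (descent lemma) form
\begin{align}
	\mcL(\theta_{t+1}) \leq \mcL(\theta_t) + \langle \nabla\mcL(\theta_t), \theta_{t+1}-\theta_t\rangle + \frac{s}{2}\|\theta_{t+1}-\theta_t\|^2,
\end{align}
and substitute the update $\theta_{t+1} = \theta_t - \eta_t \nabla\mcL^{(N)}(\theta_t)$ to obtain
\begin{align}
	\mcL(\theta_{t+1}) \leq \mcL(\theta_t) - \eta_t \langle \nabla\mcL(\theta_t), \nabla\mcL^{(N)}(\theta_t)\rangle + \frac{s\eta_t^2}{2}\|\nabla\mcL^{(N)}(\theta_t)\|^2.
\end{align}

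Next I would take the expectation conditional on $\theta_t$. The key fact is that the minibatch $\mcS_{t+1}$ is drawn i.i.d.\ from $p_{X,Y}$ independently of $\theta_t$, so the stochastic gradient is unbiased, $\mbE[\nabla\mcL^{(N)}(\theta_t)\mid\theta_t] = \nabla\mcL(\theta_t)$. Writing $e_t := \nabla\mcL^{(N)}(\theta_t) - \nabla\mcL(\theta_t)$ with $\mbE[e_t\mid\theta_t]=0$, the cross term gives $\mbE[\langle\nabla\mcL(\theta_t),\nabla\mcL^{(N)}(\theta_t)\rangle\mid\theta_t] = \|\nabla\mcL(\theta_t)\|^2$, and the quadratic term gives $\mbE[\|\nabla\mcL^{(N)}(\theta_t)\|^2\mid\theta_t] = \|\nabla\mcL(\theta_t)\|^2 + \mbE[\|e_t\|^2\mid\theta_t]$. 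Collecting terms yields
\begin{align}
	\mbE[\mcL(\theta_{t+1})\mid\theta_t] \leq \mcL(\theta_t) - \left(\eta_t - \frac{s\eta_t^2}{2}\right)\|\nabla\mcL(\theta_t)\|^2 + \frac{s\eta_t^2}{2}\mbE[\|e_t\|^2\mid\theta_t].
\end{align}
Because $\eta_t\in(0,2/s)$, the coefficient $\eta_t - \frac{s\eta_t^2}{2} = -(\frac{1}{2}s\eta_t^2-\eta_t)$ is strictly positive and bounded below by $\alpha$, while $\frac{s\eta_t^2}{2} < \frac{2}{s}$.

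I would then take full expectations, isolate $\alpha\,\mbE[\|\nabla\mcL(\theta_t)\|^2]$, and sum the one-step inequality over $t=0,\dots,T-1$. The $\mcL(\theta_t)$ terms telescope, and bounding $\mcL(\theta_T)\geq\inf_\theta\mcL(\theta)$ converts the telescoped sum into $\Delta(\theta_0)$; applying $\frac{s\eta_t^2}{2}<\frac{2}{s}$ to the noise terms gives
\begin{align}
	\frac{1}{T}\sum_{t=0}^{T-1}\mbE[\|\nabla\mcL(\theta_t)\|^2] \leq \frac{\Delta(\theta_0)}{\alpha T} + \frac{2}{s\alpha T}\sum_{t=0}^{T-1}\mbE[\|e_t\|^2],
\end{align}
and requiring the right-hand side to be at most $\epsilon$, then solving for $T$, reproduces exactly the iteration-complexity bound \eqref{Eq:IterationComplexity}. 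The conclusion as stated averages $\|\nabla\mcL(\theta_t)\|$ rather than its square; passing from the squared to the unsquared average is a routine Jensen step, and I expect the clean statement is really the squared-norm version.

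The argument is a textbook computation, so I anticipate no deep obstacle. The one point requiring genuine care is the stochastic-process bookkeeping: one must set up the filtration so that $\theta_t$ is measurable with respect to the past minibatches $\mcS_1,\dots,\mcS_t$ while the fresh batch $\mcS_{t+1}$ is independent of it, which is what licenses both the unbiasedness identity and the tower-property passage from conditional to full expectations. Establishing unbiasedness reduces to checking that the per-sample loss $\ell$ in Algorithm \ref{Alg:SGD} has expectation equal to the integrand of $\mcL$ in \eqref{Defn:DistributionalLoss}, together with differentiation under the integral sign, which is justified by the non-vanishing and Lipschitz hypotheses that reappear in Lemma \ref{Lem:Conineq_Log-Prob-Grad}.
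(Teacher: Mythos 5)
Your proposal is correct and follows essentially the same route as the paper's proof: the descent lemma from $s$-smoothness, conditional expectation over the fresh batch $\mcS_{t+1}$ using unbiasedness of $\nabla\mcL^{(N)}(\theta_t)$, telescoping, and the bound $\frac{s}{2}\eta_t^2 < \frac{2}{s}$ — your direct bias--variance decomposition of $\mbE[\|\nabla\mcL^{(N)}(\theta_t)\|^2 \mid \theta_t]$ is just a tidier bookkeeping of the paper's add-and-subtract expansion, whose cross terms vanish under the same conditional expectation. You are also right to flag the squared-versus-unsquared norm in the conclusion: the paper's own proof establishes $\frac{1}{T}\sum_{t=0}^{T-1}\mbE[\|\nabla\mcL(\theta_t)\|^2] \leq \epsilon$, matching your derivation, and its theorem statement's unsquared average is the imprecision (Jensen would only yield $\sqrt{\epsilon}$), not a gap in your argument.
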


\begin{proof}
	(of Theorem \ref{Thm:cnvgcRt}) From the smoothness of $\mcL(\theta)$, we have
	\begin{align}
		\mcL (\theta_{t+1}) 
		& = \mcL(\theta_t) + \nabla \mcL(\theta_t)^T(\theta_{t+1} - \theta_t) + \frac{1}{2} (\theta_{t+1} - \theta_t)^T \nabla^2 F(\theta')(\theta_{t+1} - \theta_t)  \nonumber\\
		& \leq \mcL(\theta_t) + \nabla \mcL(\theta_t)^T(\theta_{t+1} - \theta_t) + \frac{s}{2} (\theta_{t+1} - \theta_t)^T (\theta_{t+1} - \theta_t)  \nonumber\\
		& = \mcL(\theta_t) - \eta_t \nabla \mcL(\theta_t)^T\nabla \mcL^{(N)}(\theta_t) + \frac{s}{2} \eta_t^2 \|\nabla \mcL^{(N)}(\theta)\|^2  \nonumber\\
		& = \mcL(\theta_t) - \eta_t \nabla \mcL(\theta_t)^T\nabla \mcL^{(N)}(\theta_t)  \nonumber\\
		&\quad + \frac{s}{2} \eta_t^2 \|\nabla \mcL^{(N)}(\theta_t) - \mcL(\theta_t) \|^2 - \frac{s}{2}\eta_t^2 \|\nabla \mcL(\theta_t)\|^2 + {s}\eta_t^2 \nabla\mcL(\theta_t)^T \mcL^{(N)}(\theta_t)  \nonumber\\
		&  = \mcL(\theta_t) + \frac{s}{2} \eta_t^2 \|\nabla \mcL^{(N)}(\theta_t) - \mcL(\theta_t) \|^2 - \frac{s}{2}\eta_t^2 \|\nabla \mcL(\theta_t)\|^2 + \left({s}\eta_t^2  - \eta_t\right) \nabla \mcL(\theta_t)^T\nabla \mcL^{(N)}(\theta_t)   \nonumber\\
		& =  \mcL(\theta_t) + \frac{s}{2} \eta_t^2 \|\nabla \mcL^{(N)}(\theta_t) - \mcL(\theta_t) \|^2 - \frac{s}{2}\eta_t^2 \|\nabla \mcL(\theta_t)\|^2  \nonumber\\
		&\quad + \left({s}\eta_t^2  - \eta_t\right) \nabla \mcL(\theta_t)^T( \nabla \mcL^{(N)}(\theta_t) - \nabla \mcL(\theta_t)) +  \left({s}\eta_t^2  - \eta_t\right) \|\nabla \mcL(\theta_t)\|^2  \nonumber\\
		& = \mcL(\theta_t) + \frac{s}{2}\eta_t^2 \|\nabla \mcL^{(N)}(\theta_t) - \nabla \mcL(\theta_t) \|^2 + (\frac{s}{2}\eta_t^2 - \eta_t)\|\nabla \mcL(\theta_t)\|^2 \nonumber\\
		&\quad + \left(s\eta_t^2 - \eta_t\right) \nabla \mcL(\theta_t)^T \left(\nabla \mcL^{(N)}(\theta_t) - \nabla \mcL(\theta_t)\right),
	\end{align}
	where $\theta':=\theta_t + \zeta (\theta_{t+1} - \theta_t)$ with $\zeta\in[0,1]$. Then, 
	\begin{align}
		- \left(\frac{s}{2}\eta_t^2 - \eta_t\right)\|\nabla \mcL(\theta_t)\|^2  
		& \leq \mcL(\theta_t) - \mcL(\theta_{t+1})  + \frac{s}{2}\eta_t^2 \|\nabla \mcL^{(N)}(\theta_t) - \nabla \mcL(\theta_t) \|^2 \nonumber \\
		&\quad + \left(s\eta_t^2 - \eta_t\right) \nabla \mcL(\theta_t)^T (\nabla \mcL^{(N)}(\theta_t) - \nabla \mcL(\theta_t))
	\end{align}
	
	
	By taking expectation over $\mcS_{t+1}:=\{(X_i^{t+1},Y_i^{t+1})\}_{i=1}^N$ conditioning on $\theta_t$, we have
	\begin{align}\label{Eq:ConditionalExpectation}
		- \left(\frac{s}{2}\eta_t^2 - \eta_t \right)\|\nabla \mcL(\theta_t)\|^2  
		\leq \mbE[\mcL(\theta_t) - \mcL(\theta_{t+1}) | \theta_t] + \frac{s}{2}\eta_t^2 \mbE\left[\|\nabla \mcL^{(N)}(\theta_t) - \nabla \mcL(\theta_t) \|^2 | \theta_t\right],
	\end{align}
	where we used the fact that
	\begin{align}\label{Eq:GradZeroGap}
		\mbE_{p_{S_{t+1}}}[\nabla \mcL^{(N)}(\theta_t) - \nabla \mcL(\theta_t)|\theta_t] = \bm{0},
	\end{align}
	where 
	\begin{align}
		p_{S_{t+1}}:= \prod_{i=1}^N p_{X_i^{t+1}, Y_i^{t+1}}, 
	\end{align}
	and $p_{X_i^{t+1}, Y_i^{t+1}}=p_{X,Y}, \forall i=1,\dots,N$ due to the I.I.D. assumption.
	
	To see \eqref{Eq:GradZeroGap}, we first get $\nabla_\theta \mcL^{(N)}(\theta_t)$ from Algorithm \ref{Alg:SGD} as 
	\begin{align}\label{Eq:gradOfmcLEmprc}
		\nabla_\theta \mcL^{(N)}(\theta_t)
		= \left[ \begin{matrix}
			\frac{1}{N} \sum_{i=1}^N \nabla_{\theta_{Y|X}}	\ell(\theta_t; X_i^{t+1},Y_i^{t+1}) \\
			\frac{1}{N} \sum_{i=1}^N \nabla_{\theta_Y} 	\ell(\theta_t; X_i^{t+1},Y_i^{t+1})
		\end{matrix} \right] 
		= \left[ \begin{matrix}
			\frac{1}{N} \sum_{i=1}^N - \frac{\nabla_{\theta_{Y|X}}q_{Y|X}(Y_i^{t+1}|X_i^{t+1};\theta_{Y|X}^t) }{q_{Y|X}(Y_i^{t+1}|X_i^{t+1};\theta_{Y|X}^t)} \\
			\frac{1}{N} \sum_{i=1}^N - \lambda_{ent} \frac{\nabla_{\theta_Y} q_Y(Y_i; \theta_Y^t)}{ q_Y(Y_i; \theta_Y^t)}
		\end{matrix} \right] \in\mbR^{m+m'}.
	\end{align}
	Then, we have from \eqref{Eq:gradOfmcLEmprc} and \eqref{Eq:gradOfmcL}
	\begin{align}
		& \mbE_{p_{S_{t+1}}}[\nabla \mcL^{(N)}(\theta_t) - \nabla \mcL(\theta_t)|\theta_t] \\
		& = \mbE_{p_{S_t}}\left[ 
		\left[ \begin{matrix}
			\frac{1}{N} \sum_{i=1}^N - \frac{\nabla_{\theta_{Y|X}}q_{Y|X}(Y_i^{t+1}|X_i^{t+1};\theta_{Y|X}^t) }{q_{Y|X}(Y_i^{t+1}|X_i^{t+1};\theta_{Y|X}^t)} \\
			\frac{1}{N} \sum_{i=1}^N - \lambda_{ent} \frac{\nabla_{\theta_Y} q_Y(Y_i; \theta_Y^t)}{ q_Y(Y_i; \theta_Y^t)}
		\end{matrix} \right]
		- 
		\left[\begin{matrix}
			\mbE_{p_{X,Y}}\left[ - \frac{\nabla_{\theta_{Y|X}} q_{Y|X}(Y|X;\theta_{Y|X})}{q_{Y|X}(Y|X;\theta_{Y|X}^t)} \right] \\ 
			\lambda_{ent} \mbE_{p_Y} \left[ - \frac{\nabla_{\theta_Y} q_Y(Y;\theta_Y)}{q_Y(Y;\theta_Y^t)} \right]
		\end{matrix}\right]
		\bigg| \theta_t\right] \\
		& =\left[ 
		\left[ \begin{matrix}
			\frac{1}{N} \sum_{i=1}^N \mbE_{p_{X_i^{t+1}, Y_i^{t+1}}} \left[- \frac{\nabla_{\theta_{Y|X}}q_{Y|X}(Y_i^{t+1}|X_i^{t+1};\theta_{Y|X}^t) }{q_{Y|X}(Y_i^{t+1}|X_i^{t+1};\theta_{Y|X}^t)} \right] \\
			\frac{1}{N} \sum_{i=1}^N \mbE_{p_{X_i^{t+1}, Y_i^{t+1}}} \left[- \lambda_{ent} \frac{\nabla_{\theta_Y} q_Y(Y_i; \theta_Y)}{ q_Y(Y_i; \theta_Y^t)}\right]
		\end{matrix} \right]\right]
		- 
		\left[\begin{matrix}
			\mbE_{p_{X,Y}}\left[ - \frac{\nabla_{\theta_{Y|X}} q_{Y|X}(Y|X;\theta_{Y|X}^t)}{q_{Y|X}(Y|X;\theta_{Y|X}^t)} \right] \\ 
			\lambda_{ent} \mbE_{p_Y} \left[ - \frac{\nabla_{\theta_Y} q_Y(Y;\theta_Y^t)}{q_Y(Y;\theta_Y^t)} \right]
		\end{matrix}\right] \\
		& =\left[ 
		\left[ \begin{matrix}
			\frac{1}{N} \sum_{i=1}^N \mbE_{p_{X, Y}} \left[- \frac{\nabla_{\theta_{Y|X}}q_{Y|X}(Y|X;\theta_{Y|X}^t) }{q_{Y|X}(Y|X;\theta_{Y|X}^t)} \right] \\
			\frac{1}{N} \sum_{i=1}^N \mbE_{p_{X, Y}} \left[- \lambda_{ent} \frac{\nabla_{\theta_Y} q_Y(Y; \theta_Y^t)}{ q_Y(Y; \theta_Y^t)} \right]
		\end{matrix} \right]\right]
		- 
		\left[\begin{matrix}
			\mbE_{p_{X,Y}}\left[ - \frac{\nabla_{\theta_{Y|X}} q_{Y|X}(Y|X;\theta_{Y|X})}{q_{Y|X}(Y|X;\theta_{Y|X})} \right] \\ 
			\lambda_{ent} \mbE_{p_Y} \left[ - \frac{\nabla_{\theta_Y} q_Y(Y;\theta_Y)}{q_Y(Y;\theta_Y)} \right]
		\end{matrix}\right] = \bm{0}.
	\end{align}
	
	By taking total expectation over both $\theta_t$ and $\mcS_{t+1}$, we have
	\begin{align}
		- \left(\frac{s}{2}\eta_t^2 - \eta_t \right)\mbE[\|\nabla \mcL(\theta_t)\|^2]  
		& \leq \mbE[\mcL(\theta_t) - \mcL(\theta_{t+1})] + \frac{s}{2}\eta_t^2 \mbE\left[\|\nabla \mcL^{(N)}(\theta_t) - \nabla \mcL(\theta_t) \|^2 \right]
	\end{align}
	Summing over $t=0,1,\cdots, T-1$, we have
	\begin{align}
		\alpha \sum_{t=0}^{T-1} \mbE[\|\nabla \mcL(\theta_t)\|^2] 
		& \leq \mbE[\mcL(\theta_0) - \mcL(\theta_{T})] + \frac{s}{2}\eta_{max}^2 \sum_{t=0}^{T-1} \mbE[\|\nabla\mcL^{(N)}(\theta_t) - \nabla \mcL(\theta_t) \|^2] \\
		& \leq \mbE[\mcL(\theta_0) - \inf_\theta \mcL(\theta)] + \frac{s}{2}\eta_{max}^2 \sum_{t=0}^{T-1} \mbE[\|\nabla\mcL^{(N)}(\theta_t) - \nabla \mcL(\theta_t) \|^2] \\
		& = \mcL(\theta_0) - \inf_\theta \mcL(\theta) + \frac{s}{2}\eta_{max}^2 \sum_{t=0}^{T-1} \mbE[\|\nabla\mcL^{(N)}(\theta_t) - \nabla \mcL(\theta_t) \|^2]\\
		& =\Delta(\theta_0) + \frac{s}{2}\eta_{max}^2 \sum_{t=0}^{T-1} \mbE[\|\nabla\mcL^{(N)}(\theta_t) - \nabla \mcL(\theta_t) \|^2]
	\end{align}
	where we define $\alpha:=\min_{i=0,1,\cdots,T-1} -(\frac{1}{2} s \eta_t^2 - \eta_t)$, $\eta_{max}:=\max_{i=0,1,\cdots,T-1} \eta_t$. Thus, 
	\begin{align}
		\frac{1}{T} \sum_{t=0}^{T-1} \mbE[\|\nabla \mcL(\theta_t)\|^2]  
		& \leq \frac{\Delta(\theta_0)}{\alpha T} + \frac{2}{s\alpha T}\sum_{t=0}^{T-1} \mbE[\|\nabla\mcL^{(N)}(\theta_t) - \nabla \mcL(\theta_t) \|^2],
	\end{align} 
	where we used the assumption that $\eta_t \in\left(0, \frac{2}{s}\right)$ and the fact that $\alpha>0$.
	When 
	\begin{align} 
		T\geq \frac{\Delta(\theta_0)}{\alpha \epsilon} + \frac{2}{s\alpha \epsilon} \sum_{t=0}^{T-1} \mbE[\|\nabla\mcL^{(N)}(\theta_t) - \nabla \mcL(\theta_t) \|^2], 
	\end{align} 
	we have
	\begin{align}
		\frac{1}{T} \sum_{t=0}^{T-1} \mbE[\|\nabla \mcL(\theta_t)\|^2]   \leq \epsilon,
	\end{align}
	where $\epsilon>0$ is a constant. 
\end{proof}

Before presenting the proof of Lemma \ref{Lem:Conineq_Log-Prob-Grad}, we present a technical lemma which will be used for proving Lemma \ref{Lem:Conineq_Log-Prob-Grad}.

\begin{lemma}\label{Lem:MtrxBersteinConcIneq}
	(Theorem 1.6.2 in \cite{tropp_introduction_2015}) Let $S_1, \cdots, S_n$ be independent, centered random matrices with common dimension $d_1\times d_2$, and assume that each one is uniformly bounded 
	\begin{align}
		\mbE[S_k] = 0, \text{ and } \|S_k\| \leq L, \text{ for each } k=1,\cdots,n.
	\end{align}
	Introduce the sum 
	\begin{align}
		Z:=\sum_{k=1}^n S_k,
	\end{align}
	and let $v(Z)$ denote the matrix variance statistic of the sum:
	\begin{align}
		v(Z):=\max\left\{ \|\mbE[ZZ^*]\|, \|\mbE[Z^*Z] \| \right\}
		= \max\left\{ \left\|\sum_{k=1}^n \mbE[S_kS_k^*] \right\|, \left\|\sum_{k=1}^n \mbE[S_k^*S_k] \right\| \right\}.
	\end{align}
	Then
	\begin{align}
		\mbP\left( \|Z\|\geq t \right) \leq (d_1 +d_2)\exp\left( \frac{-t^2/2}{v(Z) + Lt/3} \right), \forall t\geq 0.
	\end{align}
	Furthermore, 
	\begin{align}
		\mbE[\|Z\|] \leq \sqrt{2v(Z)\log(d_1+d_2)} + \frac{1}{3}L\log(d_1+d_2).
	\end{align}
\end{lemma}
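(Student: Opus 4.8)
The plan is to reconstruct the matrix Laplace transform method of Ahlswede--Winter in the sharpened form due to Tropp, whose three pillars are Hermitian dilation, the matrix Chernoff bound, and subadditivity of the matrix cumulant generating function. First I would reduce the rectangular statement to the Hermitian one by dilation. For a matrix $S$ of size $d_1\times d_2$ set
\begin{align*}
	\mathcal{H}(S):=\begin{bmatrix} 0 & S \\ S^* & 0 \end{bmatrix},
\end{align*}
a Hermitian matrix of size $d:=d_1+d_2$. One checks directly that $\|\mathcal{H}(S)\|=\|S\|$ and $\mathcal{H}(S)^2=\begin{bmatrix} SS^* & 0 \\ 0 & S^*S\end{bmatrix}$, so that $\lambda_{\max}(\mathcal{H}(Z))=\|Z\|$ for $Z=\sum_k S_k$ (the spectrum of $\mathcal{H}(Z)$ is symmetric about the origin), the dilated summands $\mathcal{H}(S_k)$ remain independent, centered and bounded by $L$, and $\|\sum_k \mbE[\mathcal{H}(S_k)^2]\|=v(Z)$. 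Hence it suffices to prove both displayed bounds for a sum $Y=\sum_k S_k$ of independent, centered, Hermitian matrices, with $\|Z\|$ read off as $\lambda_{\max}(Y)$.

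Second, for Hermitian $Y$ I would apply the matrix Laplace transform bound: for every $\theta>0$, since the eigenvalues of $e^{\theta Y}$ are positive, Markov's inequality applied to $e^{\theta\lambda_{\max}(Y)}=\lambda_{\max}(e^{\theta Y})\leq \tr\, e^{\theta Y}$ yields
\begin{align*}
	\mbP(\lambda_{\max}(Y)\geq t)\leq e^{-\theta t}\,\mbE[\tr\, e^{\theta Y}].
\end{align*}
The crux is to control $\mbE[\tr\,e^{\theta Y}]$. Using Lieb's concavity theorem (the map $A\mapsto \tr\exp(H+\log A)$ is concave on the positive definite cone) together with Jensen's inequality and independence of the summands, one obtains the subadditivity of the matrix cumulant generating function,
\begin{align*}
	\mbE\,\tr\,\exp\Big(\theta\textstyle\sum_k S_k\Big)\leq \tr\,\exp\Big(\textstyle\sum_k\log\mbE\,e^{\theta S_k}\Big).
\end{align*}
This is the heart of the argument and the step I expect to be the main obstacle, since it rests on the nonelementary Lieb concavity theorem; I would either invoke it as a known operator-convexity fact or, for self-containedness, derive it from the joint convexity of the quantum relative entropy.

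Third, I would bound each summand's matrix MGF. Using the transfer rule for matrix functions, the hypothesis $\|S_k\|\leq L$, and the scalar estimate that $(e^x-x-1)/x^2$ is increasing, one shows $\mbE\,e^{\theta S_k}\preceq \exp\!\big(g(\theta)\,\mbE[S_k^2]\big)$ with $g(\theta):=\theta^2/\big(2(1-L\theta/3)\big)$ for $0<\theta<3/L$, hence $\log\mbE\,e^{\theta S_k}\preceq g(\theta)\mbE[S_k^2]$. Feeding this into the subadditivity bound and using monotonicity of the trace exponential under the semidefinite order gives
\begin{align*}
	\mbE\,\tr\,e^{\theta Y}\leq \tr\exp\!\Big(g(\theta)\textstyle\sum_k \mbE[S_k^2]\Big)\leq d\,e^{g(\theta)\,v(Z)}.
\end{align*}
Combining with the Laplace bound yields $\mbP(\lambda_{\max}(Y)\geq t)\leq d\,\exp(-\theta t+g(\theta)v(Z))$, and the choice $\theta=t/(v(Z)+Lt/3)$ produces the stated Bernstein tail $d\exp\!\big(\tfrac{-t^2/2}{v(Z)+Lt/3}\big)$.

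Finally, for the expectation bound I would avoid integrating the tail and instead work directly from $\mbE\,\tr\,e^{\theta Y}\leq d\,e^{g(\theta)v(Z)}$. By Jensen's inequality $e^{\theta\,\mbE[\lambda_{\max}(Y)]}\leq \mbE\,\tr\,e^{\theta Y}$, so $\mbE[\lambda_{\max}(Y)]\leq \inf_{0<\theta<3/L}\theta^{-1}\big(\log d+g(\theta)v(Z)\big)$; estimating $g(\theta)\leq \theta^2 v(Z)/2$ on the relevant range and optimizing over $\theta$ gives $\sqrt{2v(Z)\log d}+\tfrac{1}{3}L\log d$. Translating $\lambda_{\max}(Y)$ and $d$ back through the dilation recovers $\mbE[\|Z\|]\leq \sqrt{2v(Z)\log(d_1+d_2)}+\tfrac{1}{3}L\log(d_1+d_2)$, completing the plan.
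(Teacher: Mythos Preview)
The paper does not give its own proof of this lemma; it is simply quoted as Theorem~1.6.2 of \cite{tropp_introduction_2015} and used as a black box in the proof of Lemma~\ref{Lem:Conineq_Log-Prob-Grad}. Your proposal reproduces exactly the argument in that reference---Hermitian dilation to reduce to the self-adjoint case, the matrix Laplace transform bound, subadditivity of the matrix CGF via Lieb's concavity theorem, and the Bernstein-type MGF estimate---so it is correct and there is nothing in the paper to compare against.

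One small slip worth flagging: in the expectation step you write ``estimating $g(\theta)\leq \theta^2 v(Z)/2$ on the relevant range,'' but $g(\theta)$ does not contain $v(Z)$; you mean to control the full exponent $\theta^{-1}\big(\log d+g(\theta)v(Z)\big)$. The clean way to extract $\sqrt{2v(Z)\log d}+\tfrac{1}{3}L\log d$ from that infimum is not a single crude bound on $g$ but rather the specific choice of $\theta$ that balances the two regimes (sub-Gaussian versus sub-exponential), as Tropp does. This is cosmetic and does not affect the validity of your plan.
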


Lemma \ref{Lem:MtrxBersteinConcIneq} tells us that for a sequence of zero-mean random matrices $S_1,\cdots,S_n$ with bounded magnitude, the norm of their sum $Z$ will have high probability of being small, i.e., $\mbP(||Z||\leq t) \geq 1- (d_1+d_2)\exp\left( \frac{-t^2/3}{v(Z)+Lt/3} \right)$ where $t\geq0$. This is essentially what we expect for $\mbE\left[ \|\nabla \mcL^{(N)}(\theta_t) - \nabla \mcL(\theta_t)\|^2 \right]$. In Theorem \ref{Thm:GradConcentrationIneq}, we consider the gradient deviation conditioning on previous $\theta_t$ in \eqref{Eq:ConditionalExpectation} where the expectation is with respect to $\mcS_{t+1}:=\{(X_i,Y_i)\}_{i=1}^N$. For simplicity of presentation, we drop the iteration index $t$ in Theorem \ref{Thm:GradConcentrationIneq}. 

We now present the detailed proof of Lemma 4.2 which is restated as in Lemma \ref{Lem:Conineq_Log-Prob-Grad}.

\begin{lemma}\label{Lem:Conineq_Log-Prob-Grad}
	(Concentration Inequality of Logarithm-probability Loss Function Gradient) Let $U_1,\cdots,U_N\in\mbR^{n}$ be independently identically distributed according to $p_U$. Define $\tilde{S}_i, \tilde{Z}, \tilde{I}$ as 
	\begin{align}
		\tilde{S}_i:=\frac{1}{N} \left(\nabla_{\theta} \log\frac{1}{q_U(U_i;\theta) }
		- \nabla_{\theta}  \mbE_{p_{U}}
		\left[ \log \frac{1}{q_{U}(U;\theta)}\right]  \right), i=1,\cdots,N, 
		\tilde{Z}:= \sum_{i=1}^N \tilde{S}_i, 
		\tilde{I}:= \|\tilde{Z}\|^2, 
	\end{align}
	where $q_U(U;\theta):\mbR^{n}\times\mbR^m\to[0,1]$ is a function of $U\in\mbR^{n}$ and $\theta\in\mbR^m$. We assume that $q_U(U;\theta)$ is $\tilde{L}$-Lipschitz continuous with respect to $U$, i.e., 
	\begin{align} 
		\|q_{U}(U'; \theta) - q_{U}(U; \theta)\| 
		\leq \tilde{L} \left\| 
		U'
		- U\right\|, \forall U, U'\in\mbR^{n}, 
	\end{align}
	and that $q_{U}(U;\theta)$ does not vanish, i.e., 
	\begin{align} 
		q_{U}(U;\theta) \geq \tilde{q}_0, \forall  U\ \in\mbR^{n},
	\end{align}  
	where $\tilde{L}>0$ and $\tilde{q}_0>0$ are constants. Then for any $\theta\in\mbR^m$, if $N\geq \left( \frac{2\tilde{L}^2}{\tilde{q}_0^2 t} + \frac{4\tilde{L}}{3 \tilde{q}_0 \sqrt{t}} \right)\log\frac{m+1}{\delta}$, we have
	\begin{align}
		\mbP(\|\tilde{Z}\|^2 \leq t) \geq 1-\delta,
	\end{align}
	where $t, \delta>0$ are arbitrary constants.
\end{lemma}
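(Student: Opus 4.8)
The plan is to apply the matrix Bernstein inequality (Lemma \ref{Lem:MtrxBersteinConcIneq}) directly to the family $\tilde{S}_1,\dots,\tilde{S}_N$, treating each vector $\tilde{S}_i\in\mbR^m$ as an $m\times 1$ matrix so that the ambient dimensions become $d_1=m$ and $d_2=1$, which is exactly why the factor $\log\frac{m+1}{\delta}$ appears in the claimed sample complexity. The hypotheses of Lemma \ref{Lem:MtrxBersteinConcIneq} are verified immediately for centering: since
\begin{align}
	\tilde{S}_i = \frac{1}{N}\left( \nabla_{\theta}\log\frac{1}{q_U(U_i;\theta)} - \mbE_{p_U}\left[\nabla_{\theta}\log\frac{1}{q_U(U;\theta)}\right] \right),
\end{align}
each $\tilde{S}_i$ is a zero-mean, independent random matrix, so $\mbE[\tilde{S}_i]=0$ and $\tilde{Z}=\sum_{i=1}^N \tilde{S}_i$ is a sum of independent centered matrices as required.

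The two quantities I must control are the uniform magnitude bound $L$ and the variance statistic $v(\tilde{Z})$. For the magnitude, the key computation is the per-sample gradient of the log-probability loss,
\begin{align}
	\nabla_{\theta}\log\frac{1}{q_U(U_i;\theta)} = -\frac{\nabla_{\theta} q_U(U_i;\theta)}{q_U(U_i;\theta)},
\end{align}
whose norm I bound by $\tilde{L}/\tilde{q}_0$: the denominator is handled by the non-vanishing assumption $q_U\geq \tilde{q}_0$, and the numerator by the gradient bound furnished by the $\tilde{L}$-Lipschitz hypothesis on $q_U$. A triangle inequality against the mean then gives $\|\tilde{S}_i\|\leq \frac{2}{N}\cdot\frac{\tilde{L}}{\tilde{q}_0}=:L$. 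For the variance, independence and centering yield $\mbE[\tilde{Z}\tilde{Z}^*]=\sum_i \mbE[\tilde{S}_i\tilde{S}_i^*]$ and $\mbE[\tilde{Z}^*\tilde{Z}]=\sum_i \mbE[\|\tilde{S}_i\|^2]$; bounding each term by $\frac{1}{N^2}\mbE[\|\nabla_\theta\log\frac{1}{q_U}\|^2]\leq \frac{1}{N^2}\frac{\tilde{L}^2}{\tilde{q}_0^2}$ and summing over the $N$ summands gives $v(\tilde{Z})\leq \frac{\tilde{L}^2}{N\tilde{q}_0^2}$.

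Finally I would invoke the tail bound of Lemma \ref{Lem:MtrxBersteinConcIneq} with threshold $s=\sqrt{t}$, so that $\{\|\tilde{Z}\|^2>t\}=\{\|\tilde{Z}\|>\sqrt{t}\}$, obtaining
\begin{align}
	\mbP\left(\|\tilde{Z}\|^2 > t\right) \leq (m+1)\exp\left(\frac{-t/2}{v(\tilde{Z})+L\sqrt{t}/3}\right).
\end{align}
Requiring the right-hand side to be at most $\delta$ is equivalent to $\frac{t}{2}\geq\left(v(\tilde{Z})+\frac{L\sqrt{t}}{3}\right)\log\frac{m+1}{\delta}$; substituting the bounds $v(\tilde{Z})\leq \tilde{L}^2/(N\tilde{q}_0^2)$ and $L\leq 2\tilde{L}/(N\tilde{q}_0)$ and solving for $N$ reproduces exactly $N\geq\left(\frac{2\tilde{L}^2}{\tilde{q}_0^2 t}+\frac{4\tilde{L}}{3\tilde{q}_0\sqrt{t}}\right)\log\frac{m+1}{\delta}$, completing the argument. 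The main obstacle I anticipate is the magnitude step: it hinges on a uniform bound for $\|\nabla_{\theta} q_U(U_i;\theta)\|$, and one must be careful that the $\tilde{L}$-Lipschitz hypothesis (stated with respect to $U$) is being used to control this numerator gradient, with the lower bound $\tilde{q}_0$ taming the denominator; getting a clean, genuinely uniform bound here, rather than one depending on the random $U_i$, is the delicate part, and everything downstream is a routine substitution into Bernstein.
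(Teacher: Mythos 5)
Your proposal is correct and follows the same route as the paper's own proof: both treat each $\tilde{S}_i$ as an $m\times 1$ matrix, invoke the matrix Bernstein inequality of \cite{tropp_introduction_2015} (restated as Lemma 4.4 in the appendix) with $d_1=m$, $d_2=1$ (whence the $\log\frac{m+1}{\delta}$), bound the uniform magnitude by $L=\frac{2\tilde{L}}{N\tilde{q}_0}$ via triangle and Jensen inequalities, and convert the tail on $\|\tilde{Z}\|$ at threshold $\sqrt{t}$ into the stated condition on $N$. The one place you genuinely deviate, and improve, is the variance statistic. Writing $V_i:=\nabla_\theta\log\frac{1}{q_U(U_i;\theta)}$, the paper expands $\tilde{S}_i\tilde{S}_i^T$ (and $\tilde{S}_i^T\tilde{S}_i$) into four terms, bounds each by $\tilde{L}^2/\tilde{q}_0^2$, and arrives at $v(\tilde{Z})\leq\frac{4\tilde{L}^2}{N\tilde{q}_0^2}$; your one-line domination $\mbE\left[(V_i-\mbE V)(V_i-\mbE V)^T\right]\preceq\mbE\left[V_iV_i^T\right]$ gives the tighter $v(\tilde{Z})\leq\frac{\tilde{L}^2}{N\tilde{q}_0^2}$. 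This factor of four is not cosmetic: substituting the paper's own variance bound into the Bernstein tail yields the requirement $N\geq\left(\frac{8\tilde{L}^2}{\tilde{q}_0^2 t}+\frac{4\tilde{L}}{3\tilde{q}_0\sqrt{t}}\right)\log\frac{m+1}{\delta}$, so the constant $\frac{2\tilde{L}^2}{\tilde{q}_0^2 t}$ claimed in the lemma is only actually recovered by your bound, whereas the paper asserts the final condition without tracking this constant. Finally, the caveat you flag at the end is real and is shared by the paper: the stated hypothesis is Lipschitz continuity of $q_U$ in $U$, which controls $\|\nabla_U q_U\|\leq\tilde{L}$, while both arguments need the uniform bound $\|\nabla_\theta q_U(U;\theta)\|\leq\tilde{L}$ on the gradient with respect to the \emph{parameters}. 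The paper makes this substitution silently in its display corresponding to your magnitude step; strictly speaking the lemma should assume Lipschitz continuity in $\theta$ (uniformly over $U$), and neither your argument nor the paper's closes that gap under the hypotheses as written.
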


\begin{proof}\label{Proof:Conineq_Log-Prob-Grad}
	(of Lemma \ref{Lem:Conineq_Log-Prob-Grad}) From the definitions, We know
	\begin{align}
		\tilde{S}_i=\frac{1}{N} \left(\mbE_{p_{U}}
		\left[ \frac{\nabla_{\theta} q_{U}(U;\theta)}{q_U(U;\theta)} \right] - \frac{\nabla_{\theta}q_U(U_i;\theta) }{q_U(U_i;\theta)} \right), i=1,\cdots,N.
	\end{align}
	It is obvious that $\mbE_{p_{U_i}} [\tilde{S}_i] = 0, \forall i=1,\cdots,N$ since $U_i$ are I.I.D. according to $p_{U}$.
	
	We also have 
	\begin{align}\label{Eq:BdnessTildeS}
		\|\tilde{S}_i\|
		& = \left\| \frac{1}{N} \left(\mbE_{p_{U}}
		\left[ \frac{\nabla_{\theta} q_U(U;\theta)}{q_U(U;\theta)} \right] - \frac{\nabla_{\theta}q_U(U_i;\theta) }{q_{U}(U_i;\theta)} \right) \right\|\\
		& \leq \frac{1}{N} \left( \left\| \mbE_{p_{U}}
		\left[ \frac{\nabla_{\theta} q_{U}(U;\theta)}{q_{U}(U;\theta)} \right]\right\| + \left\| \frac{\nabla_{\theta}q_{U}(U_i;\theta) }{q_{U}(U_i;\theta)} \right\| \right) \label{Eq:triangleIneq}\\
		& \leq \frac{1}{N} \left( \mbE_{p_{U}}
		\left[ \left\|  \frac{\nabla_{\theta} q_{U}(U;\theta)}{q_{U}(U;\theta)} \right\| \right] + \left\| \frac{\nabla_{\theta}q_{U}(U_i;\theta) }{q_{U}(U_i;\theta)} \right\| \right) \label{Eq:JensenIneq},
	\end{align}
	where \eqref{Eq:triangleIneq} is due to triangle inequality and \eqref{Eq:JensenIneq} is due to Jensen's inequality on convex norm function. Since $q_{U}(U; \theta)$ is Lipschitz continuous, i.e., 
	\begin{align} 
		\|q_{U}(U'; \theta) - q_{U}(U; \theta)\| 
		\leq \tilde{L} \left\| 
		U'
		- U\right\|, \forall U, U'\in\mbR^{d}, 
	\end{align}
	and 
	\begin{align} 
		q_{U}(U;\theta) \geq \tilde{q}_0>0, \forall  U\ \in\mbR^{n+1}
	\end{align}  
	we have
	\begin{align}\label{Eq:SmdUpBd} 
		\left\|  \frac{\nabla_{\theta} q_{U}(U;\theta)}{q_{U}(U;\theta)} \right\|
		\leq  \frac{ \left\| \nabla_{\theta} q_{U}(U;\theta) \right\|}{q_{U}(U;\theta)} 
		\leq \frac{\tilde{L}}{\tilde{q}_0}, \forall U\in\mbR^{n+1},
	\end{align}  
	and 
	\begin{align}
		\left\|  \mbE_{p_{U}}
		\left[ \frac{\nabla_{\theta} q_{U}(U;\theta)}{q_{U}(U;\theta)} \right] \right\| 
		& \leq  \mbE_{p_{U}}
		\left[ \left\|   \frac{\nabla_{\theta} q_{U}(U;\theta)}{q_{U}(U;\theta)} \right\|  \right] \label{Eq:JensenIneq3}\\
		& =  \mbE_{p_{U}}
		\left[  \frac{ \left\|  \nabla_{\theta} q_{U}(U;\theta) \right\|  }{q_{U}(U;\theta)} \right] \label{Eq:PositiveProb}\\
		& \leq  \mbE_{p_{U}}
		\left[ \frac{ \tilde{L} }{\tilde{q}_0}\right] \label{Eq:SumdUpbd}\\
		& \leq \frac{\tilde{L}}{\tilde{q}_0}, \label{Eq:ExpcttUpBd}
	\end{align}
	where the \eqref{Eq:JensenIneq3} is due to Jensen's inequality applied to convex norm function, the \eqref{Eq:PositiveProb} is due to the positivenss assumption on the probability, and the \eqref{Eq:SumdUpbd} is due to \eqref{Eq:SmdUpBd}.
	
	Combining the above with \eqref{Eq:BdnessTildeS} gives
	\begin{align}\label{Eq:BdnessTildeS2}
		\|\tilde{S}_i\|
		\leq \frac{1}{N} \left( \mbE_{p_{U}}
		\left[ \left\|  \frac{\nabla_{\theta} q_{U}(U;\theta)}{q_{U}(U;\theta)} \right\| \right] + \left\| \frac{\nabla_{\theta}q_{U}(U_i;\theta) }{q_{U}(U_i;\theta)} \right\| \right) 
		\leq \frac{2\tilde{L}}{N\tilde{q}_0}.
	\end{align}\label{Eq:Bd}
	
	Since 
	\begin{align}\label{Eq:TildeSTildeST}
		\tilde{S}_i \tilde{S}_i ^T
		& = \frac{1}{N^2} \left(\mbE_{p_{U}}
		\left[ \frac{\nabla_{\theta} q_{U}(U;\theta)}{q_{U}(U;\theta)} \right] 
		- \frac{\nabla_{\theta}q_{U}(U_i;\theta) }{q_{U}(U_i;\theta)} \right)  
		\left(\mbE_{p_{U}}
		\left[ \frac{\nabla_{\theta} q_{U}(U;\theta)}{q_{U}(U;\theta)} \right] 
		- \frac{\nabla_{\theta}q_{U}(U_i;\theta) }{q_{U}(U_i;\theta)} \right) ^T \\
		& = \frac{1}{N^2} \left( \tilde{A}_{i1} + \tilde{A}_{i2} + \tilde{A}_{i3} + \tilde{A}_{i4} \right), 
	\end{align}
	where 
	\begin{align}
		\tilde{A}_{i1}: = \mbE_{p_{U}}
		\left[ \frac{\nabla_{\theta} q_{U}(U;\theta)}{q_{U}(U;\theta)} \right] 
		\mbE_{p_{U}}
		\left[ \frac{\nabla_{\theta} q_{U}(U;\theta)}{q_{U}(U;\theta)} \right] ^T, \\
		\tilde{A}_{i2}:= - \mbE_{p_{U}}
		\left[ \frac{\nabla_{\theta} q_{U}(U;\theta)}{q_{U}(U;\theta)} \right] \frac{\nabla_{\theta}q_{U}(U_i;\theta) }{q_{U}(U_i;\theta)}^T, \\
		\tilde{A}_{i3}:= - \frac{\nabla_{\theta}q_{U}(U_i;\theta) }{q_{U}(U_i;\theta)} 
		\mbE_{p_{U}}
		\left[ \frac{\nabla_{\theta} q_{U}(U;\theta)}{q_{U}(U;\theta)} \right] ^T, \\
		\tilde{A}_{i4}:=  \frac{\nabla_{\theta}q_{U}(U_i;\theta) }{q_{U}(U_i;\theta)}\frac{\nabla_{\theta}q_{U}(U_i;\theta) }{q_{U}(U_i;\theta)}^T, 
	\end{align}
	then 
	\begin{align}\label{Eq:Variance}
		\left\| \mbE_{p_\mcS}\left[ \tilde{Z}\tilde{Z}^T\right]  \right\| 
		& = \left\| \mbE_{p_\mcS}\left[ \left(\sum_{i=1}^N \tilde{S}_i \right) \left( \sum_{i=1}^N \tilde{S}_i \right)^T\right]  \right\| \\ 
		& = \left\| \mbE_{p_\mcS}\left[ \sum_{i=1}^N \tilde{S}_i \tilde{S}_i ^T\right]  \right\| \label{Eq:IIDAssumption}\\ 
		& = \left\| \mbE_{p_\mcS}\left[ \sum_{i=1}^N \frac{1}{N^2} \left( \tilde{A}_{i1} + \tilde{A}_{i2} + \tilde{A}_{i3} + \tilde{A}_{i4} \right)\right]  \right\| \\ 
		& = \frac{1}{N^2} \left\| \sum_{i=1}^N  \mbE_{p_{U_i}}\left[ \tilde{A}_{i1} + \tilde{A}_{i2} + \tilde{A}_{i3} + \tilde{A}_{i4} \right]  \right\| \label{Eq:IIDAssumption2}\\ 
		& \leq \frac{1}{N^2} \sum_{i=1}^N \left( \left\|\mbE_{p_{U_i}}\left[ \tilde{A}_{i1}\right] \right\| 
		+ \left\|\mbE_{p_{U_i}}\left[ \tilde{A}_{i2}\right] \right\|
		+ \left\|\mbE_{p_{U_i}}\left[ \tilde{A}_{i3}\right] \right\|
		+ \left\|\mbE_{p_{U_i}}\left[ \tilde{A}_{i4}\right] \right\|\right),
	\end{align}
	where the norm for matrix is operator norm.
	
	Notice that
	\begin{align}
		\left\|\mbE_{p_{U_i}}\left[ \tilde{A}_{i1}\right] \right\| 
		& = \left\|\mbE_{p_{U_i}}\left[  \mbE_{p_{U}}
		\left[ \frac{\nabla_{\theta} q_{U}(U;\theta)}{q_{U}(U;\theta)} \right] 
		\mbE_{p_{U}}
		\left[ \frac{\nabla_{\theta} q_{U}(U;\theta)}{q_{U}(U;\theta)} \right] ^T \right] \right\| \label{Eq:DefinitionAi1}\\
		& =  \left\|  \mbE_{p_{U}}
		\left[ \frac{\nabla_{\theta} q_{U}(U;\theta)}{q_{U}(U;\theta)} \right] 
		\mbE_{p_{U}}
		\left[ \frac{\nabla_{\theta} q_{U}(U;\theta)}{q_{U}(U;\theta)} \right] ^T\right\| \\
		& =  \left\|  \mbE_{p_{U}}
		\left[ \frac{\nabla_{\theta} q_{U}(U;\theta)}{q_{U}(U;\theta)} \right] \right\| 
		\left\| \mbE_{p_{U}}
		\left[ \frac{\nabla_{\theta} q_{U}(U;\theta)}{q_{U}(U;\theta)} \right]\right\| \label{Eq:VectorOuterProdNorm} \\
		& \leq \frac{\tilde{L}^2}{\tilde{q}_0^2} \label{Eq:SummandUpperBound}
	\end{align}
	where the \eqref{Eq:SummandUpperBound} is due to \eqref{Eq:ExpcttUpBd}. 
	
	Similarly, we have: 
	\begin{align}
		\left\|\mbE_{p_{U_i}}\left[ \tilde{A}_{i2}\right] \right\| 
		& = \left\|\mbE_{p_{U_i}}\left[  - \mbE_{p_{U}}
		\left[ \frac{\nabla_{\theta} q_{U}(U;\theta)}{q_{U}(U;\theta)} \right] \frac{\nabla_{\theta}q_{U}(U_i;\theta) }{q_{U}(U_i;\theta)}^T \right] \right\| \\
		& \leq \mbE_{p_{U_i}}\left[  \left\|  \mbE_{p_{U}}
		\left[ \frac{\nabla_{\theta} q_{U}(U;\theta)}{q_{U}(U;\theta)} \right] \frac{\nabla_{\theta}q_{U}(U_i;\theta) }{q_{U}(U_i;\theta)}^T  \right\| \right] \label{Eq:JensenIneq2}\\
		& = \mbE_{p_{U_i}}\left[  \left\|  \mbE_{p_{U}}
		\left[ \frac{\nabla_{\theta} q_{U}(U;\theta)}{q_{U}(U;\theta)} \right] \right\| 
		\left\| \frac{\nabla_{\theta}q_{U}(U_i;\theta) }{q_{U}(U_i;\theta)} \right\| \right]\\
		& \leq \mbE_{p_{U_i}}\left[  \frac{ \tilde{L} }{\tilde{q}_0}  \frac{ \tilde{L} }{\tilde{q}_0}  \right] 
		\label{Eq:NormUpBd}
		\leq \frac{\tilde{L}^2}{\tilde{q}_0^2},
	\end{align}
	where the \eqref{Eq:NormUpBd} is due to \eqref{Eq:SmdUpBd} and \eqref{Eq:ExpcttUpBd},
	\begin{align}
		\left\|\mbE_{p_{U_i}}\left[ \tilde{A}_{i3}\right] \right\| 
		& = \left\| \mbE_{p_{U_i}} \left[  - \frac{\nabla_{\theta}q_{U}(U_i;\theta) }{q_{U}(U_i;\theta)} 
		\mbE_{p_{U}}
		\left[ \frac{\nabla_{\theta} q_{U}(U;\theta)}{q_{U}(U;\theta)} \right] ^T\right] \right\| \\
		& \leq  \mbE_{p_{U_i}} \left[  \left\|  \frac{\nabla_{\theta}q_{U}(U_i;\theta) }{q_{U}(U_i;\theta)} 
		\mbE_{p_{U}}
		\left[ \frac{\nabla_{\theta} q_{U}(U;\theta)}{q_{U}(U;\theta)} \right]\right\|\right] \\
		& =  \mbE_{p_{U_i}} \left[  \left\|  \frac{\nabla_{\theta}q_{U}(U_i;\theta) }{q_{U}(U_i;\theta)} \right\|
		\left\| \mbE_{p_{U}}
		\left[ \frac{\nabla_{\theta} q_{U}(U;\theta)}{q_{U}(U;\theta)}\right] \right\|\right] 
		\leq \frac{\tilde{L}^2}{\tilde{q}_0^2} \label{Eq:normUpbd},
	\end{align}
	where the \eqref{Eq:normUpbd} is due to \eqref{Eq:SmdUpBd} and \eqref{Eq:ExpcttUpBd}, and
	\begin{align}
		\left\|\mbE_{p_{U_i}}\left[ \tilde{A}_{i4}\right] \right\| 
		& = \left\| \mbE_{p_{U_i}}\left[ \frac{\nabla_{\theta}q_{U}(U_i;\theta) }{q_{U}(U_i;\theta)}\frac{\nabla_{\theta}q_{U}(U_i;\theta) }{q_{U}(U_i;\theta)}^T\right] \right\| \\ 
		& \leq \mbE_{p_{U_i}} \left[ \left\| \frac{\nabla_{\theta}q_{U}(U_i;\theta) }{q_{U}(U_i;\theta)}\frac{\nabla_{\theta}q_{U}(U_i;\theta) }{q_{U}(U_i;\theta)}^T \right\| \right] \\ 
		& = \mbE_{p_{U_i}} \left[ \left\| \frac{\nabla_{\theta}q_{U}(U_i;\theta) }{q_{U}(U_i;\theta)}\right\| \left\|\frac{\nabla_{\theta}q_{U}(U_i;\theta) }{q_{U}(U_i;\theta)} \right\| \right] \\ 
		& \leq  \mbE_{p_{U_i}} \left[  \frac{ \tilde{L} }{\tilde{q}_0}  \frac{ \tilde{L} }{\tilde{q}_0}\right] \label{Eq:smdUpBd}
		\leq  \frac{\tilde{L}^2}{\tilde{q}_0^2}.
	\end{align}
	where the \eqref{Eq:smdUpBd} is due to \eqref{Eq:SmdUpBd}. Thus, 
	\begin{align}\label{Eq:Variance_2}
		\left\| \mbE_{p_\mcS}\left[ \tilde{Z}\tilde{Z}^T\right]  \right\| 
		\leq \frac{1}{N^2} \sum_{i=1}^N \left(\frac{\tilde{L}^2}{\tilde{q}_0^2} + \frac{\tilde{L}^2}{\tilde{q}_0^2} + \frac{\tilde{L}^2}{\tilde{q}_0^2} + \frac{\tilde{L}^2}{\tilde{q}_0^2} \right)
		\leq \frac{4}{N}\frac{\tilde{L}^2}{\tilde{q}_0^2}.
	\end{align}
	
	Similarly, we can derive the upper bound for $\left\| \mbE_{p_\mcS} \left[\tilde{Z}^T \tilde{Z}\right] \right\|$. Since
	\begin{align}\label{Eq:TildeSTildeS}
		\tilde{S}_i^T \tilde{S}_i 
		& = \frac{1}{N^2} \left(\mbE_{p_{U}}
		\left[ \frac{\nabla_{\theta} q_{U}(U;\theta)}{q_{U}(U;\theta)} \right] 
		- \frac{\nabla_{\theta}q_{U}(U_i;\theta) }{q_{U}(U_i;\theta)} \right)^T 
		\left(\mbE_{p_{U}}
		\left[ \frac{\nabla_{\theta} q_{U}(U;\theta)}{q_{U}(U;\theta)} \right] 
		- \frac{\nabla_{\theta}q_{U}(U_i;\theta) }{q_{U}(U_i;\theta)} \right)  \\
		& = \frac{1}{N^2} \left( \tilde{A}_{i5} + \tilde{A}_{i6} + \tilde{A}_{i7}\right), 
	\end{align}
	where we define 
	\begin{align}
		\tilde{A}_{i5}:= \left\| \mbE_{p_{U}}
		\left[ \frac{\nabla_{\theta} q_{U}(U;\theta)}{q_{U}(U;\theta)} \right]  \right\|^2, \\
		\tilde{A}_{i6}: = - 2 \mbE_{p_{U}}
		\left[ \frac{\nabla_{\theta} q_{U}(U;\theta)}{q_{U}(U;\theta)} \right]^T  \frac{\nabla_{\theta}q_{U}(U_i;\theta) }{q_{U}(U_i;\theta)}, \\
		\tilde{A}_{i7}:= \left\| \frac{\nabla_{\theta}q_{U}(U_i;\theta) }{q_{U}(U_i;\theta)} \right\|^2,
	\end{align}
	then 
	\begin{align}\label{Eq:tildeZTtildeZ}
		\left\| \mbE_{p_\mcS} \left[\tilde{Z}^T \tilde{Z}\right] \right\|
		& = \left\| \mbE_{p_\mcS} \left[ \left(\sum_{i=1}^N \tilde{S}_i \right)^T \left(\sum_{i=1}^N \tilde{S}_i \right) \right] \right\| \\
		& = \left\| \mbE_{p_\mcS} \left[ \sum_{i=1}^N \tilde{S}_i^T \tilde{S}_i \right] \right\| \\
		& = \left\| \mbE_{p_\mcS} \left[ \sum_{i=1}^N \frac{1}{N^2} \left( \tilde{A}_{i5} + \tilde{A}_{i6} + \tilde{A}_{i7}\right) \right] \right\| \\
		& = \frac{1}{N^2}  \left\| \mbE_{p_\mcS} \left[ \sum_{i=1}^N \left( \tilde{A}_{i5} + \tilde{A}_{i6} + \tilde{A}_{i7}\right) \right] \right\| \\
		& = \frac{1}{N^2}  \left\|  \sum_{i=1}^N  \mbE_{p_{U_i}} \left[ \left( \tilde{A}_{i5} + \tilde{A}_{i6} + \tilde{A}_{i7}\right) \right] \right\| \\
		& \leq \frac{1}{N^2}  \sum_{i=1}^N   \left\| \mbE_{p_{U_i}} \left[ \left( \tilde{A}_{i5} + \tilde{A}_{i6} + \tilde{A}_{i7}\right) \right] \right\| \\
		& \leq \frac{1}{N^2}  \sum_{i=1}^N   \left( \left\| \mbE_{p_{U_i}} \left[\tilde{A}_{i5}\right]\right\| 
		+  \left\| \mbE_{p_{U_i}} \left[\tilde{A}_{i6}\right]\right\| 
		+  \left\| \mbE_{p_{U_i}} \left[\tilde{A}_{i7} \right] \right\| \right).
	\end{align}
	
	Since 
	\begin{align}
		\left\| \mbE_{p_{U_i}} \left[\tilde{A}_{i5}\right]\right\| 
		& = \left\| \mbE_{p_{U_i}} \left[ \left\| \mbE_{p_{U}}
		\left[ \frac{\nabla_{\theta} q_{U}(U;\theta)}{q_{U}(U;\theta)} \right]  \right\|^2 \right]\right\| \\
		& = \left\| \mbE_{p_{U}}
		\left[ \frac{\nabla_{\theta} q_{U}(U;\theta)}{q_{U}(U;\theta)} \right]  \right\|^2 
		\leq \frac{ \tilde{L}^2 }{\tilde{q}_0^2} \label{Eq:expctsmdUpbd}, 
	\end{align}
	where the \eqref{Eq:expctsmdUpbd} is due to \eqref{Eq:ExpcttUpBd},
	\begin{align}
		\left\| \mbE_{p_{U_i}} \left[\tilde{A}_{i6}\right]\right\| 
		& = \left\| \mbE_{p_{U_i}} \left[ - 2 \mbE_{p_{U}}
		\left[ \frac{\nabla_{\theta} q_{U}(U;\theta)}{q_{U}(U;\theta)} \right]^T  \frac{\nabla_{\theta}q_{U}(U_i;\theta) }{q_{U}(U_i;\theta)} \right]\right\| \\
		& \leq \mbE_{p_{U_i}} \left[  \left\| 2 \mbE_{p_{U}}
		\left[ \frac{\nabla_{\theta} q_{U}(U;\theta)}{q_{U}(U;\theta)} \right]^T  \frac{\nabla_{\theta}q_{U}(U_i;\theta) }{q_{U}(U_i;\theta)} \right\| \right]\\
		& \leq 2 \mbE_{p_{U_i}} \left[  \left\| \mbE_{p_{U}}
		\left[ \frac{\nabla_{\theta} q_{U}(U;\theta)}{q_{U}(U;\theta)} \right] \right\| \left\|  \frac{\nabla_{\theta}q_{U}(U_i;\theta) }{q_{U}(U_i;\theta)} \right\| \right] \label{Eq:CSIneq} \\
		& \leq 2 \mbE_{p_{U_i}} \left[  \frac{ \tilde{L} }{\tilde{q}_0}  \frac{ \tilde{L} }{\tilde{q}_0}  \right] \label{Eq:CSInequ} \\
		& = 2\frac{ \tilde{L}^2 }{\tilde{q}_0^2}\label{eq:normUpBd}, 
	\end{align}
	where the \eqref{Eq:CSInequ} is due to Cauchy Schwartz inequality and the \eqref{eq:normUpBd} is due to \eqref{Eq:SmdUpBd} and \eqref{Eq:ExpcttUpBd},
	and 
	\begin{align}
		\left\| \mbE_{p_{U_i}} \left[\tilde{A}_{i7}\right]\right\| 
		& = \left\| \mbE_{p_{U_i}} \left[ \left\| \frac{\nabla_{\theta}q_{U}(U_i;\theta) }{q_{U}(U_i;\theta)} \right\|^2 \right]\right\| \\
		& = \mbE_{p_{U_i}} \left[ \left\| \frac{\nabla_{\theta}q_{U}(U_i;\theta) }{q_{U}(Y_i|X_i;\theta)} \right\|^2 \right] \\
		& \leq \mbE_{p_{U_i}} \left[ \frac{ \tilde{L}^2 }{\tilde{q}_0^2} \right] \\
		& = \frac{ \tilde{L}^2 }{\tilde{q}_0^2}, \label{eq:smdUpbd2}
	\end{align}
	where \eqref{eq:smdUpbd2} is due to \eqref{Eq:SmdUpBd}, we have from \eqref{Eq:tildeZTtildeZ}
	\begin{align}\label{Eq:Variance_3}
		\left\| \mbE_{p_\mcS} \left[\tilde{Z}^T \tilde{Z}\right] \right\|
		& \leq \frac{1}{N^2}  \sum_{i=1}^N   \left( \left\| \mbE_{p_{U_i}} \left[\tilde{A}_{i5}\right]\right\| 
		+  \left\| \mbE_{p_{U_i}} \left[\tilde{A}_{i6}\right]\right\| 
		+  \left\| \mbE_{p_{X_i,Y_i}} \left[\tilde{A}_{i7} \right] \right\| \right) \\
		& \leq 	\frac{1}{N^2}  \sum_{i=1}^N   \left( \frac{ \tilde{L}^2 }{\tilde{q}_0^2}
		+ 2\frac{ \tilde{L}^2 }{\tilde{q}_0^2}
		+  \frac{ \tilde{L}^2 }{\tilde{q}_0^2}\right) \\
		& = \frac{4}{N} \frac{ \tilde{L}^2 }{\tilde{q}_0^2}.
	\end{align}
	
	Combining \eqref{Eq:Variance_2} and \eqref{Eq:Variance_3} gives
	\begin{align}
		v(\tilde{Z}) \leq \frac{4}{N} \frac{ \tilde{L}^2 }{\tilde{q}_0^2},
	\end{align}
	where $v(\tilde{Z})$ is defined as
	\begin{align}
		v(\tilde{Z}): = \max(\left\| \mbE_{p_\mcS} \left[\tilde{Z}\tilde{Z}^T\right] \right\|, \left\| \mbE_{p_\mcS} \left[\tilde{Z}^T \tilde{Z}\right] \right\|).
	\end{align}
	
	From Lemma \ref{Lem:MtrxBersteinConcIneq}, we have 
	\begin{align}
		\mbP(\|\tilde{Z}\| \geq t) 
		\leq (m+1) \exp\left( - \frac{t^2/2}{ v(\tilde{Z}) +  \frac{2\tilde{L}}{N\tilde{q}_0} t/3}   \right) 
		\leq (m+1) \exp\left( - \frac{t^2/2}{ \frac{4}{N} \frac{ \tilde{L}^2 }{\tilde{q}_0^2} +  \frac{2\tilde{L}}{N\tilde{q}_0} t/3}   \right),
	\end{align}
	and 
	\begin{align}
		\mbE_{p_\mcS}[\|\tilde{Z}\|] 
		& \leq \sqrt{2v(\tilde{Z}) \log(m+1)} + \frac{1}{3} \frac{2\tilde{L}}{N\tilde{q}_0} \log(dm+1) \\
		& \leq \frac{2\tilde{L}}{\tilde{q}_0}\left( \sqrt{\frac{2}{N}\log(m+1)} + \frac{\log(m+1)}{3N} \right).
	\end{align}
	Thus, 
	\begin{align}
		\mbP(\|\tilde{Z}\|^2 \geq t) 
		\leq (m+1) \exp\left( - \frac{t/2}{ \frac{4}{N} \frac{ \tilde{L}^2 }{\tilde{q}_0^2} +  \frac{2\tilde{L}}{N\tilde{q}_0} \sqrt{t} /3}   \right).
	\end{align}
	When $N\geq \left( \frac{2\tilde{L}^2}{\tilde{q}_0^2 t} + \frac{4\tilde{L}}{3 \tilde{q}_0 \sqrt{t}} \right)\log\frac{m+1}{\delta}$, we have
	\begin{align}
		\mbP(\|\tilde{Z}\|^2 \leq t) \geq 1-\delta,
	\end{align}
	where $t, \delta>0$ are constants.
	
\end{proof}

We now present the detailed proof of Lemma 4.3 which is restated as in Lemma \ref{Thm:GradConcentrationIneq}.

\begin{thm}\label{Thm:GradConcentrationIneq}
	(Concentration Inequality for Mutual Information Loss Function Gradient) We consider the $\|\nabla\mcL^{(N)}(\theta_t) - \nabla \mcL(\theta_t) \|^2$ in each iteration of the Algorithm \ref{Alg:SGD} in Theorem \ref{Thm:cnvgcRt} where $\theta_t$ contains $\theta_{Y|X}\in\mbR^m$ and $\theta_Y\in\mbR^{m'}$. We assume that $q_{Y|X}(Y|X;\theta_{Y|X})$ is $\tilde{L}$-Lipschitz continuous with respect to $(X,Y)$, i.e., 
	\begin{align} 
		\|q_{Y|X}(Y'|X'; \theta_{Y|X}) - q_{Y|X}(Y|X; \theta_{Y|X})\| 
		\leq \tilde{L} \left\| 
		\left[\begin{matrix}
			X' \\ Y'
		\end{matrix}\right]
		- 
		\left[\begin{matrix}
			X \\ Y
		\end{matrix}\right]
		\right\|, \forall (X,Y), (X',Y')\in\mbR^{m}\times\mbR, 
	\end{align}
	and that $q_{Y|X}(Y|X;\theta_{Y|X})$ does not vanish, i.e., 
	\begin{align} 
		q_{Y|X}(Y|X;\theta_{Y|X}) \geq \tilde{q}_0, \forall  (X,Y)\in\mbR^{m}\times\mbR,
	\end{align}  
	where $\tilde{L}>0$ and $\tilde{q}_0>0$ are constants. We also assume that $q_{Y}(Y;\theta_{Y})$ is $\bar{L}$-Lipschitz continuous with respect to $Y$, i.e., 
	\begin{align} 
		\|q_{Y}(Y'; \theta_Y) - q_{Y}(Y; \theta_Y)\| 
		\leq \bar{L} \left\| 
		Y' - Y
		\right\|, \forall Y,Y'\in\mbR, 
	\end{align}
	and that $q_{Y}(Y;\theta_{Y})$ does not vanish, i.e., 
	\begin{align} 
		q_{Y}(Y;\theta_Y) \geq \bar{q}_0, \forall  Y\in\mbR,
	\end{align}  
	where $\bar{L}>0$ and $\bar{q}_0>0$ are constants. For any $\epsilon>0, \delta>0$, if 
	\begin{align} 
		N\geq \max\left(\left(\frac{4\tilde{L}^2}{\tilde{q}_0^2\epsilon} + \frac{4\sqrt{2}\tilde{L}}{3\tilde{q}_0\sqrt{\epsilon}} \right)\log\frac{2(m+1)}{\delta} ,
		\left( \frac{4\bar{L}^2\lambda_{ent}^2}{\bar{q}_0^2\epsilon} + \frac{4\sqrt{2}\bar{L}\lambda_{ent}}{3\bar{q}_0 \sqrt{\epsilon}} \right) \log\frac{2(m'+1)}{\delta}\right), 
	\end{align}
	we have
	\begin{align}
		\mbP_{\mcS}\left( \|\nabla\mcL^{(N)}(\theta_t) - \nabla \mcL(\theta_t) \|^2 \leq \epsilon \right) \geq 1-\delta.
	\end{align}
\end{thm}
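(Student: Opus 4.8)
The plan is to decompose the squared gradient deviation into the two blocks coming from the conditional and marginal parts of $\mcL$, apply Lemma \ref{Lem:Conineq_Log-Prob-Grad} to each block separately, and combine the two events via a union bound. Writing the deviation in the block form of \eqref{Eq:gradOfmcLEmprc} and \eqref{Eq:gradOfmcL} and using that the squared Euclidean norm of a stacked vector is the sum of the squared norms of its blocks, we get
\begin{align}
  \|\nabla \mcL^{(N)}(\theta_t) - \nabla \mcL(\theta_t)\|^2
  = \|\tilde{Z}_{Y|X}\|^2 + \|\tilde{Z}_{Y}\|^2,
\end{align}
where $\tilde{Z}_{Y|X}$ is the deviation of the $\theta_{Y|X}$-block associated with $\log\frac{1}{q_{Y|X}}$, and $\tilde{Z}_{Y}$ is the deviation of the $\theta_{Y}$-block associated with $\lambda_{ent}\log\frac{1}{q_Y}$.

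First I would apply Lemma \ref{Lem:Conineq_Log-Prob-Grad} to the conditional block, taking $U=(X,Y)\in\mbR^{n+1}$, the function $q_{Y|X}$ with Lipschitz constant $\tilde{L}$ and lower bound $\tilde{q}_0$, and parameter dimension $m$. Choosing target level $t=\epsilon/2$ and confidence parameter $\delta/2$, the lemma guarantees $\mbP(\|\tilde{Z}_{Y|X}\|^2\leq \epsilon/2)\geq 1-\delta/2$ whenever $N$ exceeds the lemma's threshold with these substitutions; since $\frac{2\tilde{L}^2}{\tilde{q}_0^2(\epsilon/2)}=\frac{4\tilde{L}^2}{\tilde{q}_0^2\epsilon}$, $\frac{4\tilde{L}}{3\tilde{q}_0\sqrt{\epsilon/2}}=\frac{4\sqrt{2}\tilde{L}}{3\tilde{q}_0\sqrt{\epsilon}}$, and $\log\frac{m+1}{\delta/2}=\log\frac{2(m+1)}{\delta}$, this threshold is exactly the stated $N_1$.

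Next I would treat the marginal block. The only new feature is the factor $\lambda_{ent}$ multiplying $\log\frac{1}{q_Y}$, which scales each summand $\tilde{S}_i$ of the lemma by $\lambda_{ent}$; consequently the magnitude bound becomes $\|\tilde{S}_i\|\leq 2\lambda_{ent}\bar{L}/(N\bar{q}_0)$ and the matrix-variance bound becomes $v(\tilde{Z}_{Y})\leq 4\lambda_{ent}^2\bar{L}^2/(N\bar{q}_0^2)$, i.e.\ the effective ratio $\tilde{L}/\tilde{q}_0$ is replaced by $\lambda_{ent}\bar{L}/\bar{q}_0$ throughout. Applying the lemma with $U=Y\in\mbR$, effective Lipschitz constant $\lambda_{ent}\bar{L}$, lower bound $\bar{q}_0$, parameter dimension $m'$, and again $t=\epsilon/2$, confidence $\delta/2$, yields $\mbP(\|\tilde{Z}_{Y}\|^2\leq \epsilon/2)\geq 1-\delta/2$ whenever $N\geq N_2$.

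Finally, taking $N\geq\max(N_1,N_2)$ and applying the union bound, both events $\{\|\tilde{Z}_{Y|X}\|^2\leq \epsilon/2\}$ and $\{\|\tilde{Z}_{Y}\|^2\leq \epsilon/2\}$ hold simultaneously with probability at least $1-\delta$, on which the displayed decomposition gives $\|\nabla \mcL^{(N)}(\theta_t)-\nabla\mcL(\theta_t)\|^2\leq\epsilon$, as claimed. I expect the main obstacle to be bookkeeping rather than any genuine difficulty: one must verify carefully that $\lambda_{ent}$ is absorbed into the effective Lipschitz constant of the marginal block (and not elsewhere), and that splitting the tolerance as $t=\epsilon/2$ together with the confidence split $\delta/2$ per block reproduces the exact coefficients appearing in $N_1$ and $N_2$.
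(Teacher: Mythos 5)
Your proposal is correct and matches the paper's proof in all essentials: the same block decomposition of the squared gradient deviation, the same application of Lemma \ref{Lem:Conineq_Log-Prob-Grad} to the conditional block with $U=(X,Y)$ and to the marginal block with $U=Y$, the same split $t=\epsilon/2$ and $\delta/2$ per block, and the same union bound. The only (algebraically equivalent) difference is in bookkeeping for $\lambda_{ent}$: you absorb it into an effective Lipschitz constant $\lambda_{ent}\bar{L}$ by rescaling the summands, whereas the paper keeps $\bar{L}$ and instead applies the lemma to the unscaled deviation $\bar{I}$ at the rescaled level $t=\epsilon/(2\lambda_{ent}^2)$ --- both yield exactly the stated $N_2$.
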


\begin{proof}\label{Prf:GradConcentrationIneq}
	(of Theorem \ref{Thm:GradConcentrationIneq}) From the definitions of $\mcL(\theta)$ in \eqref{Defn:DistributionalLoss} and $\mcL^{(N)}(\theta)$ in Algorithm \ref{Alg:SGD}, we have 
	\begin{align}
		\| \nabla \mcL^{(N)}(\theta_t) - \nabla \mcL(\theta_t)\|^2 
		& = \left\| 
		\left[ \begin{matrix}
			\frac{1}{N} \sum_{i=1}^N - \frac{\nabla_{\theta_{Y|X}}q_{Y|X}(Y_i|X_i;\theta_{Y|X}) }{q_{Y|X}(Y_i|X_i;\theta_{Y|X})} \\
			\frac{1}{N} \sum_{i=1}^N - \lambda_{ent} \frac{\nabla_{\theta_Y} q_Y(Y_i; \theta_Y)}{ q_Y(Y_i; \theta_Y)}
		\end{matrix} \right]
		- 
		\left[\begin{matrix}
			\mbE_{p_{X,Y}}\left[ - \frac{\nabla_{\theta_{Y|X}} q_{Y|X}(Y|X;\theta_{Y|X})}{q_{Y|X}(Y|X;\theta_{Y|X})} \right] \\ 
			\lambda_{ent} \mbE_{p_Y} \left[ - \frac{\nabla_{\theta_Y} q_Y(Y;\theta_Y)}{q_Y(Y;\theta_Y)} \right]
		\end{matrix}\right]
		\right\|^2  \\
		& = \left\| 
		\left[ \begin{matrix}
			\frac{1}{N} \sum_{i=1}^N \left(\mbE_{p_{X,Y}}
			\left[ \frac{\nabla_{\theta_{Y|X}} q_{Y|X}(Y|X;\theta_{Y|X})}{q_{Y|X}(Y|X;\theta_{Y|X})} \right] - \frac{\nabla_{\theta_{Y|X}}q_{Y|X}(Y_i|X_i;\theta_{Y|X}) }{q_{Y|X}(Y_i|X_i;\theta_{Y|X})} \right) \\
			\frac{1}{N} \sum_{i=1}^N \left( \lambda_{ent} \mbE_{p_Y} \left[ \frac{\nabla_{\theta_Y} q_Y(Y;\theta_Y)}{q_Y(Y;\theta_Y)} \right] - \lambda_{ent} \frac{\nabla_{\theta_Y} q_Y(Y_i; \theta_Y)}{ q_Y(Y_i; \theta_Y)}\right)
		\end{matrix}\right] \right\|^2 \\
		& =  \tilde{I} +  \lambda_{ent}^2 \bar{I},
	\end{align}
	where the expectation is with respect to $\mcS:=\{(X_i,Y_i)\}_{i=1}^N$, and we define
	\begin{align}
		\tilde{I}:= \left\| \frac{1}{N} \sum_{i=1}^N \left(\mbE_{p_{X,Y}}
		\left[ \frac{\nabla_{\theta_{Y|X}} q_{Y|X}(Y|X;\theta_{Y|X})}{q_{Y|X}(Y|X;\theta_{Y|X})} \right] - \frac{\nabla_{\theta_{Y|X}}q_{Y|X}(Y_i|X_i;\theta_{Y|X}) }{q_{Y|X}(Y_i|X_i;\theta_{Y|X})} \right) \right\|^2 ,
	\end{align}
	and 
	\begin{align}
		\bar{I} := \left\| \frac{1}{N} \sum_{i=1}^N \left( \mbE_{p_Y} \left[ \frac{\nabla_{\theta_Y} q_Y(Y;\theta_Y)}{q_Y(Y;\theta_Y)} \right] - \frac{\nabla_{\theta_Y} q_Y(Y_i; \theta_Y)}{ q_Y(Y_i; \theta_Y)}\right) \right\|^2.
	\end{align}
	In the following, we will establish concentration inequalities for both $\tilde{I}$ and $\bar{I}$ via Lemma \ref{Lem:Conineq_Log-Prob-Grad}. 
	
	For $\tilde{I}$, we can treat $\left[ \begin{matrix}
		X_i \\ Y_i
	\end{matrix} \right]\in\mbR^{n+1}$ as $U_i$ for $i=1,\cdots,N$. Then, from Lemma \ref{Lem:Conineq_Log-Prob-Grad}, we have for any $\theta_{Y|X}\in\mbR^m$, if $N\geq \left( \frac{2\tilde{L}^2}{\tilde{q}_0^2 t} + \frac{4\tilde{L}}{3 \tilde{q}_0 \sqrt{t}} \right)\log\frac{m+1}{\delta}$, we have $\mbP(\tilde{I} \leq t) \geq 1-\delta$, where $t, \delta>0$ are arbitrary constants. Taking $t=\frac{\epsilon}{2}$, we get that for any $\theta_{Y|X}$, if $N\geq \left(\frac{4\tilde{L}^2}{\tilde{q}_0^2\epsilon} + \frac{4\sqrt{2}\tilde{L}}{3\tilde{q}_0\sqrt{\epsilon}} \right)\log\frac{2(m+1)}{\delta}$, then
	\begin{align}\label{Eq:ConcentrationTildeI}
		\mbP_{\mcS}\left(\tilde{I} \leq \frac{\epsilon}{2}\right) \geq 1-\frac{\delta}{2},
	\end{align}
	or 
	\begin{align}\label{Eq:ConcentrationTildeI_quiv}
		\mbP_{\mcS}\left(\tilde{I} \geq \frac{\epsilon}{2}\right) \leq \frac{\delta}{2}.
	\end{align}
	
	Similarly, for $\bar{I}$, we can treat $Y_i$ as $U_i$ for $i=1,\cdots,N$. Then, from Lemma \ref{Lem:Conineq_Log-Prob-Grad}, we have for any $\theta_Y\in\mbR^{m'}$, if $N\geq \left( \frac{2\bar{L}^2}{\bar{q}_0^2 t} + \frac{4\bar{L}}{3 \bar{q}_0 \sqrt{t}} \right)\log\frac{m'+1}{\delta}$, we have	$\mbP(\bar{I} \leq t) \geq 1-\delta$, where $t, \delta>0$ are arbitrary constants. Taking $t=\frac{\epsilon}{2\lambda_{ent}^2}$, we have that for any $\theta_Y$, if $N\geq \left( \frac{4\bar{L}^2\lambda_{ent}^2}{\bar{q}_0^2\epsilon} + \frac{4\sqrt{2}\bar{L}\lambda_{ent}}{3\bar{q}_0 \sqrt{\epsilon}} \right) \log\frac{2(m'+1)}{\delta}$, then 
	\begin{align}\label{Eq:ConcentrationBarI}
		\mbP_{\mcS}\left(\bar{I} \leq \frac{\epsilon}{2}\right) \geq 1-\frac{\delta}{2},
	\end{align}
	or 
	\begin{align}\label{Eq:ConcentrationBarI_equiv}
		\mbP_{\mcS}\left(\bar{I} \geq \frac{\epsilon}{2}\right) \leq \frac{\delta}{2}.
	\end{align}
	
	Thus, we have for any $\theta_{Y|X}, \theta_Y$, if 
	\begin{align} 
		N\geq \max\left(\left(\frac{4\tilde{L}^2}{\tilde{q}_0^2\epsilon} + \frac{4\sqrt{2}\tilde{L}}{3\tilde{q}_0\sqrt{\epsilon}} \right)\log\frac{2(m+1)}{\delta} ,
		\left( \frac{4\bar{L}^2\lambda_{ent}^2}{\bar{q}_0^2\epsilon} + \frac{4\sqrt{2}\bar{L}\lambda_{ent}}{3\bar{q}_0 \sqrt{\epsilon}} \right) \log\frac{2(m'+1)}{\delta}\right), 
	\end{align}
	then
	\begin{align}
		\mbP_{\mcS}\left( \|\nabla\mcL^{(N)}(\theta_t) - \nabla \mcL(\theta_t) \|^2 \leq \epsilon \right)
		&  = 	\mbP_{\mcS}\left( \tilde{I} + \bar{I} \leq \epsilon \right) \\
		& \geq \mbP_{\mcS}\left( \tilde{I} \leq \frac{\epsilon}{2}, \bar{I} \leq \frac{\epsilon}{2} \right) \\ 
		& = 1-  \mbP_{\mcS}\left( \tilde{I} \geq \frac{\epsilon}{2}, \text{or}\ \bar{I} \geq \frac{\epsilon}{2} \right) \\ 
		& \geq 1-  \mbP_{\mcS}\left( \tilde{I} \geq \frac{\epsilon}{2}\right)- \mbP_{\mcS}\left( \bar{I} \geq \frac{\epsilon}{2} \right) \label{Eq:UnionBound} \\ 
		& \geq 1-\frac{\delta}{2}-\frac{\delta}{2} \label{Eq:DeviationBound}\\
		& = 1-\delta,
	\end{align}
	where \eqref{Eq:UnionBound} is due to union bound, and \eqref{Eq:DeviationBound} is due to \eqref{Eq:ConcentrationTildeI_quiv} and \eqref{Eq:ConcentrationBarI_equiv}.
	
\end{proof}

We now consider a data model over correlated joint Gaussian distribution $p_{X,Y}$, i.e., 
\begin{align}\label{Defn:RepresentationLearningDataModel}
	X=\rho Y + \sqrt{1-\rho^2} Z, \rho\in(0,1)
\end{align} 
where the elements of $Y\in\mbR^n$ follow I.I.D. standard Gaussian distribution $\mcN(0,I_n)$, and the elements of $Z\in\mbR^n$ follow I.I.D. standard Gaussian distribution $\mcN(0,I_n)$, and the $Y$ and $Z$ are independent. We present the detailed proof of Theorem 5.1 which is restated as in Lemma \ref{Thm:MIGTinGaussian} and Corollary 1 which is restated as in Corollary \ref{Cor:GeneralizationLossViaMI}.

\begin{thm}\label{Thm:MIGTinGaussian}
	(Mutual Information of Multi-output Regression Data Model) We consider a multi-output regression task where the input $Y\in\mbR^n$ of a machine learning systems has all its elements folllowing I.I.D. standard Gaussian distribution, and the output $X\in\mbR^n$ is generated according to \eqref{Defn:RepresentationLearningDataModel}. Then we have 
	\begin{align}\label{Eq:MIGTinJointGuassian}
		I(X;Y) = \frac{n}{2} \log\frac{1}{1-\rho^2}.
	\end{align}
\end{thm}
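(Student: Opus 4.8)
The plan is to exploit the identity $I(X;Y) = h(X) - h(X|Y)$ together with the fact that both the marginal law of $X$ and the conditional law of $X$ given $Y$ are Gaussian with explicitly computable covariances, and to recall the closed form $h(\mcN(\mu,\Sigma)) = \frac{1}{2}\log\left((2\pi e)^n \det\Sigma\right)$ for the differential entropy of a multivariate Gaussian. This route keeps the computation aligned with the differential-entropy viewpoint already developed in the paper.

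First I would compute the marginal distribution of $X$. Since $Y$ and $Z$ are independent $\mcN(0,I_n)$ and $X=\rho Y + \sqrt{1-\rho^2}\,Z$ is a linear combination of jointly Gaussian vectors, $X$ is itself Gaussian with mean $0$ and covariance $\rho^2 I_n + (1-\rho^2)I_n = I_n$. Hence $X\sim\mcN(0,I_n)$ and $h(X) = \frac{n}{2}\log(2\pi e)$.

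Next I would identify the conditional law $X\mid Y=y$. Conditioning on $Y=y$, only the noise term $\sqrt{1-\rho^2}\,Z$ remains random, so $X\mid Y=y \sim \mcN\!\left(\rho y,\,(1-\rho^2)I_n\right)$. The key observation is that the conditional covariance $(1-\rho^2)I_n$ does not depend on the value $y$; therefore the conditional differential entropy is the $\mbE_Y$-average of a quantity that is constant in $y$, giving $h(X|Y) = \frac{n}{2}\log\left(2\pi e(1-\rho^2)\right)$ directly, with no nontrivial integration over $Y$.

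Finally, subtracting yields $I(X;Y) = \frac{n}{2}\log(2\pi e) - \frac{n}{2}\log\left(2\pi e(1-\rho^2)\right) = \frac{n}{2}\log\frac{1}{1-\rho^2}$, which is the claim. I do not expect a genuine obstacle here; the only points requiring care are verifying that the variance of each coordinate of $X$ is exactly $1$ (which uses the normalization $\rho^2 + (1-\rho^2)=1$ built into the model) and invoking the mean-independence of the Gaussian differential entropy so that the outer expectation in $h(X|Y)$ collapses. Equivalently, one could assemble the $2n\times 2n$ joint covariance of $(X,Y)$ and apply $I(X;Y)=\frac{1}{2}\log\frac{\det\Sigma_X\,\det\Sigma_Y}{\det\Sigma_{(X,Y)}}$, but the entropy-difference route is shorter and avoids an explicit determinant computation.
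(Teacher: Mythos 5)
Your proof is correct, and it takes a genuinely different route from the paper's. The paper works directly from the definition $I(X;Y)=D_{KL}(p_{X,Y}\,\|\,p_X p_Y)$: it assembles the $2n\times 2n$ joint covariance $\Sigma_Z=\left(\begin{smallmatrix} I_n & \rho I_n\\ \rho I_n & I_n\end{smallmatrix}\right)$, integrates the log-ratio of the Gaussian densities, extracts the answer from the normalization constants via the block-determinant formula $|\Sigma_Z|=|I_n-\rho I_n I_n^{-1}\rho I_n|\,|I_n|=(1-\rho^2)^n$, and shows that the two quadratic-form contributions cancel through the trace identities $\tr\left(\Sigma_Z\Sigma_Z^{-1}\right)=2n=\tr\left(\Sigma_Z\Sigma_{Y,X}^{-1}\right)$. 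You instead use the decomposition $I(X;Y)=h(X)-h(X|Y)$ together with the closed form for Gaussian differential entropy: the marginal $X\sim\mcN(0,I_n)$ follows from the normalization $\rho^2+(1-\rho^2)=1$, the conditional law $X\mid Y=y\sim\mcN\left(\rho y,(1-\rho^2)I_n\right)$ is immediate from the independence of $Y$ and $Z$, and the mean-independence of Gaussian entropy collapses the outer expectation over $Y$, so no nontrivial integration occurs. Your route is shorter and more elementary, avoiding both the determinant manipulation and the trace cancellation by exploiting the explicit additive-noise channel structure of the model; it also sits naturally beside the paper's own entropy-based machinery, since Corollary 1 already works with $h(Y|X)=h(Y)-I(X;Y)$. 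The paper's direct KL computation buys generality: it needs only the joint covariance and applies verbatim to any jointly Gaussian pair without an explicit channel representation, which is essentially the alternative you flag yourself via $I(X;Y)=\frac{1}{2}\log\frac{\det\Sigma_X\det\Sigma_Y}{\det\Sigma_{(X,Y)}}$. One further point in your favor: you chose the direction $h(X)-h(X|Y)$ rather than $h(Y)-h(Y|X)$, which is the more convenient of the two here, since the model hands you the conditional law of $X$ given $Y$ for free, whereas the other direction would require an extra (still easy) Gaussian conditioning step to obtain $\Sigma_{Y|X}=(1-\rho^2)I_n$.
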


\begin{proof}
	(of Theorem \ref{Thm:MIGTinGaussian}) Since $X=\rho Y + \sqrt{1-\rho^2} W$ where the elements of $Y\in\mbR^n$ and $W\in\mbR^n$ follow IID standard Gaussian distribution, then 
	\begin{align}
		\mbE[X] = \mbE[\rho Y +  \sqrt{1-\rho^2} W] = \bm{0}\in\mbR^{n},
	\end{align}
	and 
	\begin{align}
		\cov(X,X) 
		& = \mbE[(X - \mbE[X])(X - \mbE[X])^T] \\
		& = \mbE[XX^T]\\
		& = \mbE[(\rho Y +  \sqrt{1-\rho^2} W)(\rho Y +  \sqrt{1-\rho^2} W)^T]\\
		& = I_{n},
	\end{align}
	where $I_n\in\mbR^{n\times n}$ is an identity matrix. Define $Z:=\left[\begin{matrix}
		Y \\ X
	\end{matrix}\right] \in\mbR^{2n}$, and from 
	\begin{align}
		\cov(Y,X) = \mbE[(Y-\mbE[Y])(X - \mbE[X])^T] = \rho I_n,
	\end{align}
	we have the covariance matrix of the joint distribution $p_{UX}$ as 
	\begin{align}\label{Eq:JointGaussianStats}
		\Sigma_Z 
		:= \cov(Z,Z)
		=\left[\begin{matrix}
			I_n & \rho I_n \\
			\rho I_n & I_n
		\end{matrix}\right] \in\mbR^{2n\times 2n}, 
		\mu_Z := \mbE[Z] = \bm{0} \in\mbR^{2n}.
	\end{align}
	
	We now derive the mutual information (MI) $I_{p_{X,Y}}(X;Y)$ for $p_{X,Y}$ with covariance matrix and mean specified in \eqref{Eq:JointGaussianStats}. From the definition of MI, we have
	\begin{align}\label{Eq:MIGTinGaussian1}
		I(X;Y)
		& = D_{KL}(p_{X,Y}||p_Y p_X) \nonumber \\
		& = \int_z p_{X,Y}(z) \log\left(\frac{p_{X,Y}(z)}{p_Y(y) p_X(x)}\right) \nonumber  \\
		& = \int_z p_{X,Y}(z) \log\left( \frac{ \frac{1}{\sqrt{|2\pi\Sigma_Z|}} \exp\left(-\frac{1}{2} (z-\mu_Z)^T\Sigma_Z^{-1}(z-\mu_Z) \right) }
		{ \frac{1}{\sqrt{|2\pi\Sigma_Y|}} \exp\left(-\frac{1}{2} (u-\mu_Y)^T\Sigma_Y^{-1}(u-\mu_Y) \right) \frac{1}{\sqrt{|2\pi\Sigma_X|}} \exp\left(-\frac{1}{2} (x-\mu_X)^T\Sigma_X^{-1}(x-\mu_X) \right)}
		\right)  \nonumber  \\
		& = \int_z p_{X,Y}(z) \left( 
		\log\left( \frac{\sqrt{|2\pi\Sigma_X|} \sqrt{|2\pi\Sigma_Y|} }{\sqrt{|2\pi\Sigma_Z|}} \right)
		+
		\log\left( \frac{ \exp\left(-\frac{1}{2} (z-\mu_Z)^T\Sigma_Z^{-1}(z-\mu_Z) \right) }
		{ \exp\left(-\frac{1}{2} (z-\mu_Z)^T\Sigma_{Y,X}^{-1}(z-\mu_Z) \right)}
		\right)
		\right),
	\end{align}
	where we define
	\begin{align*}
		\Sigma_Y := \cov(Y,Y) = I_n, 
		\Sigma_X := \cov(X,X) = I_n,
		\Sigma_{Y,X} := \left[\begin{matrix}
			\Sigma_Y & \bm{0}\\
			\bm{0} & \Sigma_X
		\end{matrix}\right],
		\mu_Y := \mbE[Y]=\bm{0}, \mu_X := \mbE[X]=\bm{0},
	\end{align*}
	where $\bm{0}$ is a zero vector or matrix whose dimensionality can be determined according to the context.
	
	Notice that 
	\begin{align}\label{Eq:MIGTinGaussian2}
		\int_z p_{X,Y}(z)  
		\log\left( \frac{\sqrt{|2\pi\Sigma_X|} \sqrt{|2\pi\Sigma_Y|} }{\sqrt{|2\pi\Sigma_Z|}} \right)
		& = \log\left(\sqrt{\frac{|\Sigma_Y| |\Sigma_X|}{|\Sigma_Z|}} \right)  \nonumber \\
		& =  \log\left(\sqrt{\frac{ 1 }{|\Sigma_Z|}} \right)  \nonumber \\
		& = \log\left(\sqrt{\frac{ 1 }{|I_n - \rho I_n I_n^{-1} \rho I_n ||I|}} \right)  \nonumber \\
		& = -\frac{n}{2}\log(1-\rho^2),
	\end{align}
	where we used the following formula for computing the determinant of  a block matrix, i.e., for an arbitrary matrix \begin{align*}
		M=\left[\begin{matrix}
			A & B\\
			C & D
		\end{matrix}\right],
		A\in\mbR^{n\times n}, B\in\mbR^{n\times m}, C\in\mbR^{m\times n}, D\in\mbR^{m\times m},
	\end{align*}
	if the $D$ is invertible, then the determinant $|M|$ of $M$ is
	\begin{align}\label{Eq:BlockMatrixDeterminant}
		|M| = |A-BD^{-1}C||D|.
	\end{align}
	
	We now derive the second term in \eqref{Eq:MIGTinGaussian1} which can be simplified as follows
	\begin{align}\label{Eq:MIGTinGaussian3}
		\int_z p_{X,Y}(z) \log\left( \frac{ \exp\left(-\frac{1}{2} (z-\mu_Z)^T\Sigma_Z^{-1}(z-\mu_Z) \right) }
		{ \exp\left(-\frac{1}{2} (z-\mu_Z)^T\Sigma_{Y,X}^{-1}(z-\mu_Z) \right)}
		\right)
		& = - \frac{1}{2} \int_z p_{X,Y}(z) \left( (z-\mu_Z)^T\Sigma_Z^{-1}(z-\mu_Z) \right)  \nonumber \\
		& \quad  + \frac{1}{2} \int_z p_{X,Y}(z) \left( (z-\mu_Z)^T\Sigma_{Y,X}^{-1}(z-\mu_Z) \right)  \nonumber \\
		& = - \frac{1}{2} \int_z p_{X,Y}(z)  \tr\left((z-\mu_Z)^T\Sigma_Z^{-1}(z-\mu_Z)\right)  \nonumber \\
		& \quad  + \frac{1}{2} \int_z p_{X,Y}(z) \tr\left((z-\mu_Z)^T\Sigma_{Y,X}^{-1} (z-\mu_Z) \right)  \nonumber \\
		& = - \frac{1}{2} \int_z p_{X,Y}(z)  \tr\left((z-\mu_Z) (z-\mu_Z)^T\Sigma_Z^{-1}\right)  \nonumber \\
		& \quad  + \frac{1}{2} \int_z p_{X,Y}(z) \tr\left( (z-\mu_Z)(z-\mu_Z)^T\Sigma_{Y,X}^{-1} \right)  \nonumber  \\
		& = - \frac{1}{2}  \tr\left( \int_z p_{X,Y}(z) (z-\mu_Z) (z-\mu_Z)^T \Sigma_Z^{-1}\right)  \nonumber  \\
		& \quad  + \frac{1}{2}  \tr\left(\int_z p_{X,Y}(z) (z-\mu_Z)(z-\mu_Z)^T\Sigma_{Y,X}^{-1} \right)  \nonumber  \\
		& = - \frac{1}{2}  \tr\left( \Sigma_Z \Sigma_Z^{-1}\right) 
		+ \frac{1}{2}  \tr\left(\Sigma_Z \Sigma_{Y,X}^{-1} \right)  \nonumber \\
		& = - n + n = 0,
	\end{align}
	where we used the definition of covariance matrix of $Z$, i.e., $\Sigma_Z=\int_z p_{X,Y}(z) (z-\mu_Z) (z-\mu_Z)^T$, and 
	\begin{align*}
		\tr\left(\Sigma_Z \Sigma_{U,X}^{-1}\right)
		& = \tr\left(\left[\begin{matrix}
			\Sigma_Y & \cov(Y,X) \\
			\cov(X,Y) & \Sigma_X
		\end{matrix}\right]
		\left[\begin{matrix}
			\Sigma_Y & \bm{0}_n \\
			\bm{0} & \Sigma_X
		\end{matrix}\right]\right) \\
		& = \tr\left(\left[\begin{matrix}
			I_n & \cov(Y,X)\Sigma_X^{-1} \\
			\cov(X,Y)\Sigma_Y^{-1} & I_n
		\end{matrix}\right]\right) \\
		& = \tr(I_n) + \tr(I_n) = 2n.
	\end{align*}
	
	Thus, from \eqref{Eq:MIGTinGaussian1}, \eqref{Eq:MIGTinGaussian2}, and \eqref{Eq:MIGTinGaussian3}, we have \eqref{Eq:MIGTinJointGuassian}. 
	
\end{proof}

The proof of Corollary \ref{Cor:GeneralizationLossViaMI} is based on the Fano's inequality for continuous random variables which is presented in Lemma \ref{Lem:FanoIneqContinuous} for self-containedness.

\begin{lemma}\label{Lem:FanoIneqContinuous}
	(Corollary in \cite{cover_elements_2012}, P255) For an arbitrary random variable $Y$, given side information $X$ and the estimator $\hat{Y}(X)$, it follows that
	\begin{align}\label{Eq:PpltRskLwBd}
		\mbE\left[ (Y - \hat{Y})^2\right] \geq \frac{1}{2\pi e} e^{h(Y|X)}.
	\end{align}
\end{lemma}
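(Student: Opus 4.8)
The plan is to assemble the continuous Fano inequality from two classical facts: differential entropy is translation invariant and cannot be increased by conditioning, and the Gaussian is the maximum-entropy law under a fixed second-moment constraint. First I would introduce the estimation error $W := Y - \hat{Y}(X)$. Since $\hat{Y}$ is a deterministic function of $X$, conditioning on $X$ merely shifts $Y$ by the constant $\hat{Y}(X)$, and differential entropy is invariant under translation; hence $h(W \mid X) = h\big(Y - \hat{Y}(X) \mid X\big) = h(Y \mid X)$. Because conditioning reduces differential entropy, this supplies the first link $h(Y \mid X) = h(W \mid X) \le h(W)$.

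The second step bounds $h(W)$ from above by the mean-square error. Among all scalar laws with a prescribed variance the Gaussian maximizes differential entropy, so $h(W) \le \tfrac{1}{2}\log\big(2\pi e\,\var(W)\big)$; together with $\var(W) = \mbE[W^2] - (\mbE[W])^2 \le \mbE[W^2]$ this gives $h(W) \le \tfrac{1}{2}\log\big(2\pi e\,\mbE[(Y-\hat{Y})^2]\big)$. I would justify the maximum-entropy step through nonnegativity of relative entropy: letting $\phi$ be the Gaussian density with the same mean and variance as $W$, one has $0 \le D(p_W \| \phi) = -h(W) - \mbE_{p_W}[\log \phi(W)]$, and since $\log\phi$ is quadratic the second term equals exactly $\tfrac{1}{2}\log(2\pi e\,\var(W))$, so the constant falls out transparently.

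Chaining the two displays gives $h(Y\mid X) \le \tfrac{1}{2}\log\big(2\pi e\,\mbE[(Y-\hat{Y})^2]\big)$, and exponentiating then rearranging produces $\mbE[(Y-\hat{Y})^2] \ge \tfrac{1}{2\pi e}\,e^{2h(Y\mid X)}$, the corollary on p.\,255 of \cite{cover_elements_2012}. The main obstacle is not any individual step, each being elementary, but pinning down the constants exactly: the whole estimate hinges on both the factor $\tfrac{1}{2\pi e}$ and the coefficient of $h(Y\mid X)$ in the exponent, so the right-hand side should carry $e^{2h(Y\mid X)}$. The conceptual heart is the translation-invariance identity $h(Y-\hat{Y}(X)\mid X) = h(Y\mid X)$, which is exactly what lets the conditional entropy $h(Y\mid X)$ rather than the unconditional $h(Y)$ appear on the right, and I would state it explicitly before invoking the conditioning and maximum-entropy bounds.
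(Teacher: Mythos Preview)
The paper does not supply its own proof of this lemma; it is quoted verbatim as a corollary from \cite{cover_elements_2012} and then invoked as a black box in the proof of Corollary~\ref{Cor:GeneralizationLossViaMI}. Your argument is the standard textbook derivation and is correct: translation invariance gives $h(Y-\hat{Y}(X)\mid X)=h(Y\mid X)$, the inequality $h(W\mid X)\le h(W)$ removes the conditioning, and the Gaussian maximum-entropy bound on $h(W)$ in terms of the second moment closes the chain.

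You are also right to flag the exponent. The chain you wrote out genuinely yields
\[
\mbE\bigl[(Y-\hat{Y})^2\bigr]\;\ge\;\frac{1}{2\pi e}\,e^{2h(Y\mid X)},
\]
with a factor of $2$ multiplying $h(Y\mid X)$, and this is exactly the form that appears in Cover and Thomas. The version printed in the paper, with bare $e^{h(Y\mid X)}$, is a misprint (which then propagates into the constant in Corollary~\ref{Cor:GeneralizationLossViaMI}, though the threshold $\rho=\sqrt{1-\tfrac{1}{2\pi e}}$ in \eqref{Eq:Asymptotic} is unaffected since it is determined by the base $2\pi e(1-\rho^2)$ rather than the exponent). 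So your proposal not only proves the intended result but repairs the statement along the way.
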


Lemma \ref{Lem:FanoIneqContinuous} actually gives the lower bound of the generalization loss, i.e., 
\begin{align*}
	\mcR:=E[(Y-\hat{Y})^2] = \int_{X,Y} p_{X,Y}(x,y) (y - \hat{y}(x))^2 dxdy,
\end{align*}
and it implies that $\mcR\geq \frac{1}{2\pi e}e^{h(Y) - I(X;Y)}$ which is consistent with our intuitions. For example, when $I(X;Y)$ is large (strong dependency between $X$ and $Y$), the lower bound will become small, i.e., easier to give the correct regression labels when the input is given. 

\begin{cor}\label{Cor:GeneralizationLossViaMI}
	(Generalization Loss Lower bound in Terms of Mutual Information for \eqref{Defn:RepresentationLearningDataModel}) We consider a multi-output regression task where the input $Y\in\mbR^n$ of a machine learning systems has all its elements folllowing I.I.D. standard Gaussian distribution, and the output $X\in\mbR^n$ is generated according to \eqref{Defn:RepresentationLearningDataModel}. Then for any estimator $\hat{Y}$ from $X$, we have
	\begin{align}
		\mcR \geq b(n,\rho):=(2\pi e)^{\frac{n-2}{2}} (1-\rho^2)^{\frac{n}{2}}.
	\end{align}
	Moreover,
	\begin{align}\label{Eq:Asymptotic}
		\lim_{n\to\infty} b(n,\rho) = 
		\begin{cases}
			\infty, \ \text{if } \rho\in\left(0,\sqrt{1-\frac{1}{2\pi e}}\right), \\
			\frac{1}{2\pi e},  \ \text{if } \rho = \sqrt{1-\frac{1}{2\pi e}}, \\
			0,  \ \text{if } \rho\in\left(\sqrt{1-\frac{1}{2\pi e}}, 1\right).
		\end{cases}
	\end{align}
\end{cor}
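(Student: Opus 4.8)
The plan is to read the bound straight off the continuous Fano inequality (Lemma \ref{Lem:FanoIneqContinuous}) once the conditional differential entropy $h(Y|X)$ has been evaluated for the Gaussian data model \eqref{Defn:RepresentationLearningDataModel}; indeed the remark following that lemma already records the equivalent form $\mcR \geq \frac{1}{2\pi e}\, e^{h(Y)-I(X;Y)}$, so the task reduces to supplying $h(Y)$ and $I(X;Y)$. First I would note that $Y\sim\mcN(0,I_n)$ is an isotropic standard Gaussian, whose differential entropy is $h(Y)=\frac{n}{2}\log(2\pi e)$. Then I would import the mutual information $I(X;Y)=\frac{n}{2}\log\frac{1}{1-\rho^2}$ directly from Theorem \ref{Thm:MIGTinGaussian}.

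The second step is a short algebraic substitution. From $h(Y|X)=h(Y)-I(X;Y)$ I obtain $h(Y|X)=\frac{n}{2}\log\!\big((2\pi e)(1-\rho^2)\big)$, and plugging this into Lemma \ref{Lem:FanoIneqContinuous} yields
\begin{align}
	\mcR \geq \frac{1}{2\pi e}\, e^{h(Y|X)} = \frac{1}{2\pi e}\big((2\pi e)(1-\rho^2)\big)^{\frac{n}{2}} = (2\pi e)^{\frac{n-2}{2}}(1-\rho^2)^{\frac{n}{2}} = b(n,\rho),
\end{align}
which establishes the first claim.

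For the limit I would factor the bound as $b(n,\rho)=\frac{1}{2\pi e}\,c^{\,n/2}$ with base $c:=(2\pi e)(1-\rho^2)$, so that the behavior as $n\to\infty$ is dictated entirely by the position of $c$ relative to $1$. Solving $c=1$ gives the threshold $\rho=\sqrt{1-\frac{1}{2\pi e}}$: when $\rho$ lies below it one has $c>1$, forcing $c^{n/2}\to\infty$ and $b(n,\rho)\to\infty$; at the threshold $c=1$, so $b(n,\rho)=\frac{1}{2\pi e}$ for every $n$; and when $\rho$ lies above it one has $c<1$, giving $c^{n/2}\to 0$ and $b(n,\rho)\to 0$. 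These three regimes are exactly the three cases of the asserted limit.

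I do not expect a genuine difficulty, since both ingredients are already in place. The only points requiring care are the correct invocation of the vector-valued continuous Fano inequality of Lemma \ref{Lem:FanoIneqContinuous} for the $n$-dimensional label $Y$, and the bookkeeping of the $(2\pi e)$ factors through the exponentiation, so that the prefactor $\frac{1}{2\pi e}=(2\pi e)^{-1}$ combines with $(2\pi e)^{n/2}$ to give precisely the exponent $\frac{n-2}{2}$ in $b(n,\rho)$.
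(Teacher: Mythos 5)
Your proposal is correct and follows essentially the same route as the paper's proof: invoke the continuous Fano inequality $\mcR \geq \frac{1}{2\pi e}e^{h(Y|X)}$, substitute $h(Y|X) = h(Y) - I(X;Y)$ with $h(Y)=\frac{n}{2}\log(2\pi e)$ and $I(X;Y)=\frac{n}{2}\log\frac{1}{1-\rho^2}$ from Theorem \ref{Thm:MIGTinGaussian}, then analyze $b(n,\rho)=\frac{1}{2\pi e}\left(2\pi e(1-\rho^2)\right)^{n/2}$ by comparing the base $2\pi e(1-\rho^2)$ to $1$. Your treatment of the limit is in fact slightly more explicit than the paper's (which only works out the divergent case and says ``similarly'' for the others), but there is no substantive difference.
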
 

\begin{proof}\label{Prf:GeneralizationLossViaMI}
	(of Theorem \ref{Cor:GeneralizationLossViaMI}) From the assumptions, we know that $h(Y)=\frac{n}{2}\log(2\pi e)$. Then, we have 
	\begin{align}
		\mcR 
		& \geq \frac{1}{2\pi e} e^{h(Y|X)} \\
		& = \frac{1}{2\pi e} e^{h(Y) - I(X;Y)} \\
		& = \frac{1}{2\pi e} e^{ \frac{n}{2}\log(2\pi e) - \frac{n}{2}\log\frac{1}{1-\rho^2}} \\
		& =  (2\pi e)^{\frac{n-2}{2}} (1-\rho^2)^{\frac{n}{2}}.
	\end{align}
	Since $b(n,\rho) = \frac{\left(2\pi e(1-\rho^2)\right)^{\frac{n}{2}}}{2\pi e}$, then when $2\pi e(1-\rho^2)>1$, we have $b(n,\rho)$ goes to $\infty$ as $n$ goes to infinity. This gives $\rho\in(0,\sqrt{1-\frac{1}{2\pi e}})$. Similarly, we can get the other scenarios in \eqref{Eq:Asymptotic}.
\end{proof}

\section{Training and Inference Pipeline of Mutual Information Learned Regressor}

In this section, we present the training and the inference pipelines for regression under the mutual information based supervised learning framework in Figure \ref{Fig:MILR_Training} and \ref{Fig:MILR_Inference}. During the training process, a batch of data $\{(x_i,y_i)\}_{i=1}^B$ is sampled to obtain the empirical probability mass function (pmf) estimates $\hat{P}_{Y|X}$, $\hat{P}_X$, and $\hat{P}_Y$ associated with the conditional label probability density function (pdf) $p_{Y|X}$, the marginal input probability density function $p_{X}$, and the marginal label pdf $p_{Y}$, respectively. Besides, the data batch is fed to a neural network parameterized by $\theta$ to produce the pdf parameters $\{(\mu_i(x_i; \theta_{Y|X}), \sigma_i(x_i; \theta_{Y|X}))\}_{i=1}^B$ associated with a learned conditional label pdf and a learned marginal label pdf. As an example, for each data point $(x_i,y_i)\in\mbR^n\times\mbR$, we can assume the conditional label pdf $q_{Y|x_i}(y_i|x_i)$ follows a Gaussian distribution $\mcN(\mu_i,\sigma_i^2)$ with mean $\mu_i\in\mbR$ and variance $\sigma_i^2\in\mbR_+$, and the $f_\theta(x_i)\in\mbR^2$ gives estimate of $[\mu_i\ \sigma_i]^T$. The empirical estimate $\hat{P}_{Y|X}$ and the learned estimate $q_{Y|X}$ will be used to calculate the conditional cross entropy $H(\hat{p}_{Y|X}, q_{Y|X})$ whose infimum is an estimate of the conditional differential entropy $h({Y|X})$. The $q_{Y|X}$ will be combined with $\hat{p}_X$ to obtain  a learned marginal label distribution $q_{Y;\theta_Y|X}$, and we then calculate the cross entropy $H(\hat{p}_Y, q_{Y;\theta_{Y|X}})$ whose infimum is an estimate of the $h({Y})$. Finally, the mutual information learning (MIL) loss will be 
\begin{align}\label{Eq:milLoss_MIForm}
	\hat{I}(X;Y): = \min_{\theta_{Y|X}} H(\hat{p}_Y,q_{Y;\theta_{Y|X}}) - \min_{\theta_{Y|X}} H(\hat{p}_{Y|X},q_{Y|X; \theta_{Y|X}}).
\end{align}
By solving \eqref{Eq:milLoss_MIForm}, we can use the learned label conditional distribution $q_{Y|X; \theta_{Y|X}}$ for inference by following a maximum likelihood rule as in Figure \ref{Fig:MILR_Inference}. For example, when we assume the $q_{Y|x_i}$ follows a $\mcN(\mu_i, \sigma^2_i)$, then the prediction of $y_i$ when $x_i$ is given will be $\mu_i$, i.e., $\hat{y}_i:=\mu_i$. Though we followed Yi et al. to use parameter sharing so that only a single neural network is needed \cite{yi_mutual_2022}, it is possible to use two separate neural networks for approximating $q_{Y|X}$ and $q_{Y}$ separately, i.e., 
\begin{align}\label{Eq:milLoss_MIForm_WoParamShr}
	\hat{I}(X;Y): = \min_{\theta_{Y}} H(\hat{p}_Y,q_{Y;\theta_{Y}}) - \min_{\theta_{Y|X}} H(\hat{p}_{Y|X},q_{Y|X; \theta_{Y|X}}).
\end{align}

\begin{figure}[!htb]
	\centering 
	\includegraphics[width=\linewidth]{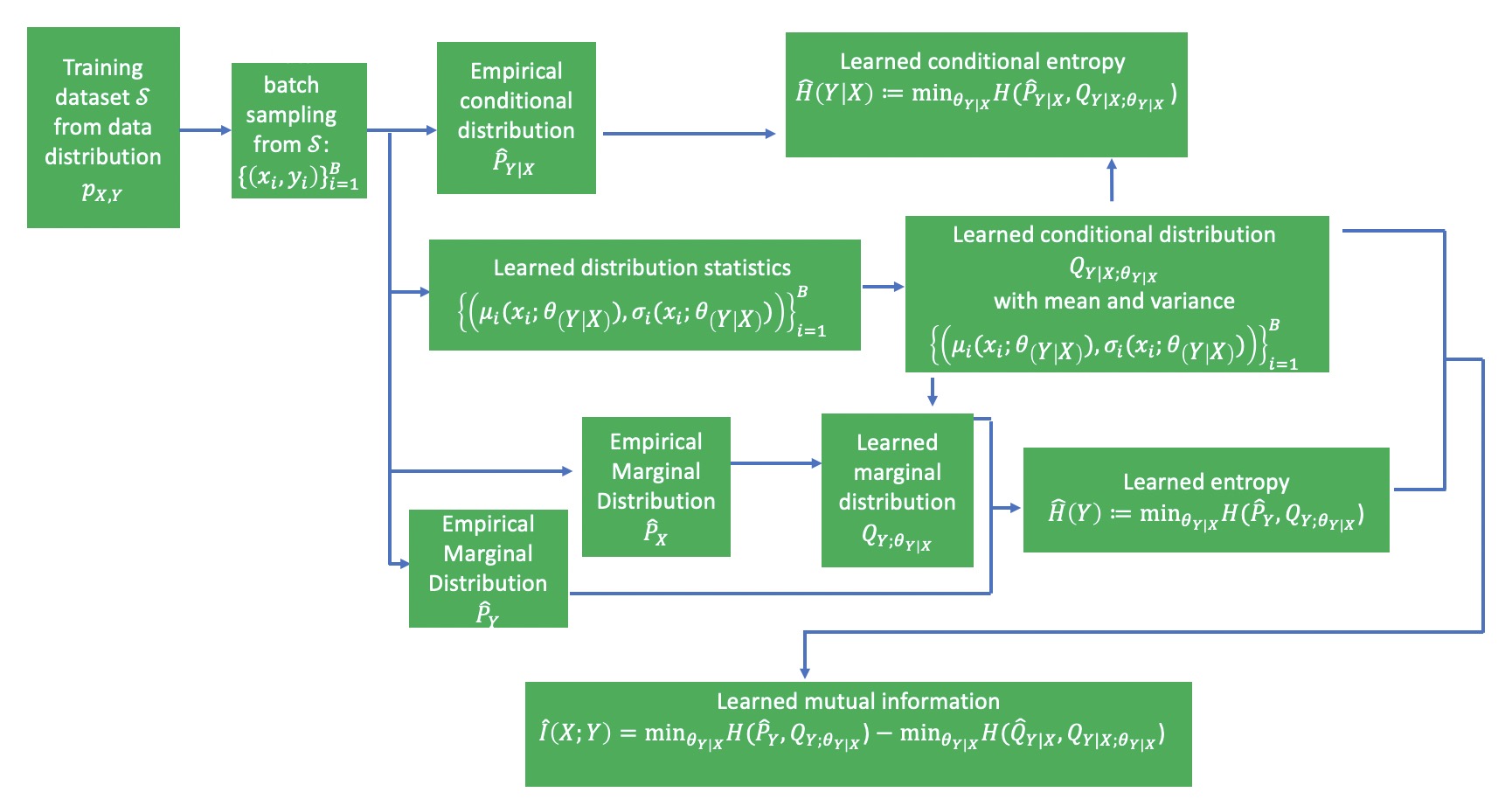}
	\caption{Mutual information learned regression (MILR) framework: we assumed that the probability density distribution is determined by its mean and variance.}
	\label{Fig:MILR_Training}
\end{figure}

\begin{figure}[!htb]
	\centering 
	\includegraphics[width=\linewidth]{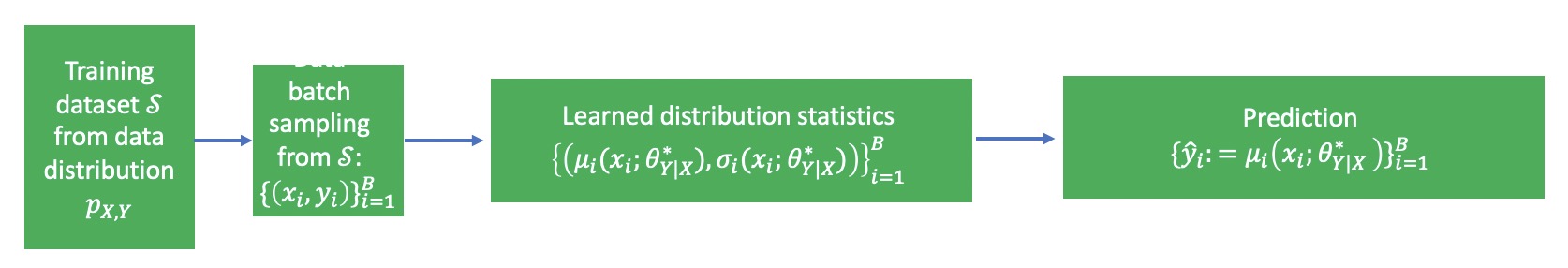}
	\caption{Mutual information learned regression (MILR) framework: we assumed the mode of the probability density function is its mean.}
	\label{Fig:MILR_Inference}
\end{figure}

\end{document}